\newtheorem{mytheorem}{Theorem}
\newtheorem{mylemma}{Lemma}  
\newcommand{\ubold}{\fontseries{b}\selectfont}
\tikzstyle{dot node}=[fill=white, draw=black, shape=circle]
\tikzstyle{arrow left}=[<-]
\tikzstyle{arrow right}=[->]
\tikzstyle{both arrow}=[<->]
\newcommand\acro{DGLASSO}
\newcommand{\argmin}[2]{\ensuremath{\underset{\substack{{#1}}}{\operatorname{argmin}}\;\;#2 }}
\def\x{{\mathbf x}}
\def\y{{\mathbf y}}
\def\v{{\mathbf v}}
\def\q{{\mathbf q}}
\def\r{{\mathbf r}}
\def\z{{\mathbf z}}
\def\A{{\mathbf A}}
\def\H{{\mathbf H}}
\def\R{{\mathbf R}}
\def\Q{{\mathbf Q}}
\def\V{{\mathbf V}}
\def\U{{\mathbf U}}
\def\G{{\mathbf G}}
\def\M{{\mathbf M}}
\def\Z{{\mathbf Z}}
\def\W{{\mathbf W}}
\def\z{{\mathbf z}}
\def\P{{\mathbf P}}
\def\S{{\mathbf S}}
\def\ctAP{{\mathrm{ct}_{/\mathbf{A},\mathbf{P}}}}
\def\Id{{\textbf{Id}}}
\def\L{{\cal L}}
\newcommand{\Sigmab}{{\boldsymbol \Sigma}}
\newcommand{\Deltab}{{\boldsymbol \Delta}}
\newcommand{\Psib}{{\boldsymbol \Psi}}
\newcommand{\Phib}{{\boldsymbol \Phi}}
\newcommand{\Pib}{{\boldsymbol \Pi}}
\newtheorem{Algoritmo}{\em Algorithm}
\newcommand{\mub}{\boldsymbol\mu}
\newcommand{\nub}{{\boldsymbol\nu}}
\newcommand{\meanPrior}{\mub}
\newcommand{\covPrior}{\Sigmab}
\newcommand{\meanFilt}{\mub} 
\newcommand{\covFilt}{\Sigmab}
\newcommand{\meanPred}{\mub}
\newcommand{\covPred}{\Sigmab}
\newcommand{\meanSmooth}{\mub^{\text{s}}}
\newcommand{\covSmooth}{\Sigmab^{\text{s}}}
\newcommand{\loss}{\mathcal{L}}
\newcommand{\maj}{\mathcal{Q}}
\newcommand{\K}{{\bf K}}
\def\gauss{{\mathcal N}}
\def\Ab{{\mathbf A}}
 \newcommand{\Real}{\mathbb{R}}
\newcommand{\cblue}{\textcolor{black}}
\newcommand{\titre}{Sparse Graphical {Linear}  Dynamical Systems}
\ShortHeadings{\titre}{Chouzenoux and Elvira}
\begin{document}

\title{\titre}

\author{\name Emilie Chouzenoux \email emilie.chouzenoux@inria.fr \\
       \addr Center for Visual Computing\\
       Inria, University Paris Saclay\\
       91190 Gif-sur-Yvette, France
       \AND
       \name V\'ictor Elvira \email victor.elvira@ed.ac.uk \\
       \addr School of Mathematics\\
       University of Edinburgh\\
       EH9 3FD Edinburgh, UK}

\editor{Kevin Murphy and Bernhard Sch{\"o}lkopf}

\maketitle

\begin{abstract}
{Time-series datasets are central in \cblue{machine learning with applications in} numerous fields of science and engineering, such as biomedicine, Earth observation, and network analysis. Extensive research exists on state-space models (SSMs), which are powerful mathematical tools that allow for probabilistic and interpretable learning on time series. Learning the model parameters in SSMs is arguably one of the most complicated tasks, and the inclusion of prior knowledge is known to both ease the interpretation but also to complicate the inferential tasks. Very recent works have attempted to incorporate a graphical perspective on some of those model parameters, but they present notable limitations that this work addresses. More generally, existing graphical modeling tools are designed to incorporate either static information, focusing on statistical dependencies among independent random variables (e.g., graphical Lasso approach), or dynamic information, emphasizing causal relationships among time series samples (e.g., graphical Granger approaches). However, there are no joint approaches combining static and dynamic graphical modeling within the context of SSMs. This work proposes a novel approach to fill this gap by introducing a joint graphical modeling framework that bridges the graphical Lasso model and a causal-based graphical approach for the linear-Gaussian SSM. We present \emph{DGLASSO (Dynamic Graphical Lasso)}, a new inference method within this framework that implements an efficient block alternating majorization-minimization algorithm. The algorithm's convergence is established by departing from modern tools from nonlinear analysis. Experimental validation on \cblue{various synthetic data} showcases the effectiveness of the proposed model and inference algorithm. This work will significantly contribute to the understanding and utilization of time-series data in diverse scientific and engineering applications where incorporating a graphical approach is essential to perform the inference.}
\end{abstract}

\begin{keywords}
State-space models, graph inference, sparsity, graphical lasso, \cblue{majorization-minimization}, proximal algorithm
\end{keywords}

\section{Introduction}


\cblue{Time series appear in applications of most fields of science and engineering, ranging from biomedicine, Earth observation, or network analysis, to name a few. 
The literature of time-series analysis is vast across different fields. For instance, in statistics and signal processing, it is mostly dominated by auto-regressive moving-averaging (ARMA) models and their extensions \citep{hamilton2020time}. The machine learning literature has recently seen a plethora of neural-network-based models, including long short-term memory (LSTM) models \citep{HochreiterLSTM}, recurrent neural networks (RNNs) \citep{HUSKEN2003223}, gated recurrent unit networks (GRNs) \citep{KyunghyunGRU}, and attention mechanisms \citep{ShihGRU}.}  

\paragraph{\cblue{Graphical inference.}} \cblue{Graphical modeling is an important family of approaches for time series analysis in statistical machine learning. The literature in multivariate times series is abundant in models for representing correlation without exploiting the temporal structure \citep{Lauritzen}. We refer to these approaches as static graphical models.} Let us mention the famous Graphical Lasso algorithm \citep{Friedman07}, and its multiple variants \citep{Chandrasekaran2012,Benfenati18,Belilovsky,Bach04,Ying2020b,JMLR:v13:mazumder12a,JMLRFattahi,JMLR:Pircalabelu}, for inference of sparse Gaussian graphical models. Modeling (non-independant) dynamical behaviors through (directed) graph representations, and in particular, causality, has also been explored~\citep{Eichler2012,Ioannidis2019,Giannakis2018,JMLR:v9:zhang08a,JMLR:Witte}. \cblue{Graph inference can also be tackled through a supervised machine learning machinery, leading to graph representation learning \citep{JMLR:Chami}, and graphical neural networks \citep{SeoGCRN,JMLR:Cini}, with successful applications in the context of time series \citep{JinGNN} and beyond \citep{Krzywda2022gnn}. Note also that ARMA processes have also been studied from the graphical perspective within the machine learning literature (e.g., in \cite{songsiri2010topology}).} 

\noindent\textbf{\cblue{State-space models (SSMs).}}  State-space models (SSMs) have became very popular in the last decades \citep{hamilton1994state,kim1999state,Sarkka,newman2023state} for time series modeling. SSMs characterize complex systems through discrete-time models composed of a hidden (or latent) state that evolves in a Markovian manner. SSMs are composed of a state model,  which can mimic realistically the complicated dynamics of the system, and the observation model, which links the temporal observations to the hidden state at the same time step.
In SSMs, the Bayesian filtering task consists on computing the posterior probability density function (pdf) of the  hidden state at a given time step, given all observations from the time series available up to that time. However, in most SSMs the posterior pdf is intractable and must be approximated, generally through particle filters {\citep{djuric2003particle,doucet2009tutorial,naesseth2019elements}}.
One relevant exception is the linear-Gaussian state-space model (LG-SSM), which allows for exact inference, when the model parameters are known, through the Kalman filter and the Rauch-Tung-Striebel (RTS) smoother \citep[Chapter 8]{Sarkka}. The LG-SSM is arguably the most popular model and is still subject of intense research also in the machine learning community \cblue{(see for instance a Kalman filter for high-dimensional spaces with a low-rank covariance approximation in \cite{schmidt2023rank}).}

\noindent\textbf{\cblue{Parameter learning.}} One of the main challenges in SSMs lies in learning the model parameters, which considers both probabilistic and point-wise approaches. The probabilistic (or fully Bayesian) methods can be applied for a wide class of SSMs (i.e., beyond LG-SSMs) and include for instance  {particle Markov chain Monte Carlo}  \citep{andrieu2010particle}, particle Gibbs \citep{lindsten2014particle}, sequential Monte Carlo squared  {\citep{chopin2013smc2}}, and nested particle filters {\citep{crisan2018nested,perez2021nested,perez2023adaptive}} (see \citep{kantas2015particle} for a review).
%
\cblue{Point-wise approaches are generally based on maximum likelihood estimation of the model parameters in LG-SSMs~\citep[Chap.~12]{Sarkka}. A first family of methods implements optimizers such as quasi-Newton \citep{Olsson} or Newton-Raphson \citep{Gupta1974}, requiring recursive likelihood derivatives expressions~\citep{Segal88,Segal89,Nagakura2021,Gupta1974} (see the discussion in \citep[Sec.~10.2.4]{Cappe2005}). The second family of point-wise methods is based on expectation-minimization (EM) algorithm \citep{shumway1982approach}\citep[Sec.~10.4]{Cappe2005}\citep[Sec.~12.2.3]{Sarkka}. EM exhibits a simplicity of implementation in the LG-SSM context, which might explain its wide use on practical applied fields, such  as finance, electrical engineering, and radar~\citep{Sharma,SharmaRDL,Frenkel}. The benefits and drawbacks of these algorithms are discussed in \citep[Sec.~1]{shumway1982approach}.} 



\noindent\textbf{\cblue{Dynamical graphical modeling, and other machine learning approaches.}} There is a clear link between state-space modeling and dynamical graphical modeling, as emphasized in \citep{Barber10}. For instance, in \cite{Sagi2023}, the extended Kalman filter (EKF) is enhanced by taking a graphical perspective. \cite{Ioannidis2018} take an approach of jointly estimating latent processes and the topology of a related graph. \cite{Alippi2023} also adopt a graphical perspective in linear-Gaussian models (or linearized versions of them through the EKF) in order to learn model parameters through deep neural networks. \cblue{This approach bears links with differentiable particle filters (DPFs) \citep{corenflos2021differentiable,chen2021differentiable,chen2023overview}, where the proposal of the particle filters but also the dynamics of the latent state are generally learned through gradient-based methods, including modern neural-network architectures.} 
 \cblue{It is also worth mentioning other tight links between SSMs and neural network architectures, such as LSTM and RNNs, for time series processing, as emphasized for instance in \cite{pmlr-v80-doerr18a,NEURIPS2018_Rangapuram,JMLR:Lim,coskun2017long}.}


\cblue{The approaches for graphical model inference and model parameter estimation are often very similar algorithmically speaking, with the difference being mostly related to the interpretation of the parameters and their representation.} The graphical modeling brings new insights such as the choice of specific priors (typically, sparsity) and algorithms (e.g., binary edge selection tools). Sparsity usually plays a key role~\citep{Brendan2010}, since a graph with few edges can be enforced by imposing a sparsity prior at inference stage. A typical choice is the $\ell_1$ (i.e., Lasso) penalty~\citep{Friedman07,Meinshausen2006,Chouzenoux2020}, which has the advantage of being convex.\footnote{See also \cite{Gao2015} for state-space model inference under $\ell_1$ prior, although not related to graphical modeling} Other types of priors have also been explored in \cite{Ying2020b,Benfenati18,Chandrasekaran2012,Kumar2020,Hippertgraphical}, with the aim of imposing specific graph structures (e.g., low rank, bounded spectrum, block sparsity). Proximal algorithms \citep{Combettes2011}, including augmented Lagrangian methods~\citep{komodakis2015playing}, are typically the answer of choice for accounting for the various class of priors of interest in graphical inference. Discrete optimization techniques for edge selection have also been explored (see \cite{BenfenatiReview} and references therein). Dynamic graph inference has also been performed using proximal tools \citep{Ioannidis2019}. In the particular case of state-space models, as we explained earlier, several methods, for instance based on Newton or EM, are available for parameter inference. But most available methods did not introduce any graphical aware prior.
In our recent works \citep{Chouzenoux2023,elvira2022graphem,chouzenoux2024graphical}, we introduced more sophisticated EM-based schemes, with proximal updates, to cope with generic convex regularizations. The goal was to estimate the transition matrix within an LG-SSM, adopting a Granger-based graphical interpretation of such matrix. \cblue{In other words, it became possible to identify which components of the latent space contain information for a one-step-ahead prediction of a given component of the same latent vector.} A fully Bayesian approach to estimate the transition matrix was taken in \cite{cox2022parameter,cox2023sparse} where the space of sparse matrices was explored via reversible jump Markov chain Monte Carlo \citep{green2009reversible}.  
 
\paragraph{\cblue{The need for better graphical modeling in SSMs.}} Despite this prolific literature, there exists a gap that we propose to fill in this work. Graphical modeling tools are either dedicated to represent static information related to statistical dependence between independent realizations of random variables (e.g., graphical Lasso approach), \cblue{or to represent dynamic information across different time series}. For the latter case, time series are either processed directly \citep{Ioannidis2019,songsiri2010topology,Mei2017}, or through a state-space modeling including hidden state \citep{elvira2022graphem}. \cblue{Furthermore, deep learning approaches for SSM parameter estimation \citep{coskun2017long,Revach2022,Buchnik2023}  implements supervised training mechanisms which require consequent number of data batches.}

 However, we are not aware of any joint approach which includes both static and dynamic graphical modeling (hence, two graphs with two distinct purposes), \cblue{applicable in the context of state-space models, and without the need for supervision}. Our contribution lies within this goal, as we describe in the next section.
 
\subsection{Summary of our main contributions}

\cblue{
Our contributions are as follows:
\begin{itemize}
\item We introduce a joint graphical modeling for representing static and dynamical behaviors in the hidden state of a linear Gaussian state-space model, bridging the gap between the graphical Lasso model from~\cite{Friedman07} and the dynamic model from~\cite{elvira2022graphem}.
\item We present a novel Bayesian inference method, called \emph{DGLASSO (Dynamic Graphical Lasso)}, \cblue{that performs the graph representation learning task, given a single observed time series. DGLASSO estimates both graphs, under a sparsity prior.}
\item We propose an original optimization algorithm for the estimation task, implementing a block alternating proximal algorithm with efficient majorization-minimization inner steps. We establish the convergence of our algorithm using recent tools from nonlinear analysis.
\item We then perform an extensive experimental validation of the proposed model and inference algorithm by means of experiments on \cblue{various synthetic datasets}. For reproducibility purpose, the code for DGLASSO algorithm, is made publicly available.\footnote{https://pages.saclay.inria.fr/emilie.chouzenoux/Logiciel.html}
\end{itemize}
}
%
%
 %
\section{\cblue{Preliminaries}}

\subsection{Notation}

Bold symbols are used for matrix and vectors. We denote by $\|\x\|_2={\sqrt{\x^\top \x}}$ the Euclidean norm of $\x \in \Real^N$, where $\top$ states from the transpose operation and $\Real^N$ is the $N$-dimensional Euclidean space. We also introduce $\| \mathbf{X} \|_F$, $\| \mathbf{X} \|_2$ and $\text{tr}(\mathbf{X})$, the Frobenius norm, the spectral norm (i.e., largest singular value), and the trace, respectively, of elements $\mathbf{X} = (X(n,\ell))_{1 \leq n \leq N, 1 \leq \ell \leq M} \in \Real^{N \times M}$. $\Id_{N}$ is the identity matrix of $\Real^N$. $\mathcal{S}_{N}$ denotes symmetric matrices of $\Real^{N \times N}$. Both $\Real^{N \times N}$ and $\mathcal{S}_{N}$ are Hilbert spaces, endowed with the Frobenius norm $\| \cdot \|_F$ and the trace scalar product $ \left\langle \mathbf{X},\mathbf{Y} \right\rangle = \text{tr}(\mathbf{X} \mathbf{Y})$. $\mathcal{S}_{N}^+$ (resp. $\mathcal{S}_{N}^{++}$) is the set of $N \times N$ symmetric positive semidefinite (resp. definite) matrices of $\Real^N$. {Given a sequence of elements $\{\x_k\}_{k=1}^K$ of length $K \geq 1$ and size $N$, we denote each element as $\x_k = (x_k(n))_{1 \leq n \leq N}$   and we use the notation $\x_{k_1:k_2}$ to refer to the subsequence $\{\x_k\}_{k=k_1}^{k_2}$, for $1 \leq k_1 < k_2 \leq K$.} 
 For convex analysis concepts, we rely on the notation in the reference textbook by \cite{bauschke2017convex}. We denote $\Gamma_0(\mathcal{H})$ the set of proper, lower semi continuous convex functions from a Hilbert space $\mathcal{H}$ to $(-\infty,+\infty]$ \citep[Chap. 9]{bauschke2017convex}, and $\partial f$ the subdifferential of $f \in \Gamma_0(\mathcal{H})$ \citep[Chap. 16]{bauschke2017convex}. With a slight abuse of notation, we make use of the extended form of the minus logarithm determinant function, defined as
\begin{equation}
(\forall \P \in \mathcal{S}_N) \quad
- {\log \det}(\P) = 
\begin{cases}
- \log |\P|, & \text{if }\P \in \mathcal{S}_N^{++},\\
+ \infty, & \text{otherwise},
\end{cases}
\label{eq:logdet}
\end{equation}
with $|\P|$ the product of the eigenvalues of $\P$. According to \cite{bauschke2017convex}, the function in Eq. \eqref{eq:logdet} belongs to $\Gamma_0(\mathcal{S}_N)$.

\subsubsection{Graphs}     

We introduce here our notation for describing graphs. Most of our notation is inherited from \cite{buhlmann2011statistics}. 

Let us define a graph $\mathcal{G}$ made of set of $N$ vertices $\mathcal{V} = \{v^{(n)}\,\text{s.t.}\,n \in \{1,\ldots,N\}\}$ and of a set of edges $\mathcal{E} = \{e^{(n,\ell)}\,\text{s.t.}\,(n,\ell) \in \mathbb{E}\}$. The latter gathers ordered pairs of distinct vertices, and as such, $\mathbb{E} \subset \{1,\ldots,N\}^2$. Undirected graphs are made of undirected edges, that is such that $(n,\cblue{m}) \in \mathbb{E}$ and $(m,n) \in \mathbb{E}$, for every $(m,n) \in \{1,\ldots,N\}^2$. In contrast, directed graphs consist of directed edges, where we say that some $e^{(n,\ell)} \in \mathcal{E}$ is directed (from $n$ to $\ell$) if $(\ell,n) \notin \cblue{\mathbb{E}}$. We can also distinguish reflexive graphs if self-loops are allowed (i.e., one can have $(n,n) \in \mathcal{E}$), and nonreflexive graphs otherwise. Given these definitions, one can simply bound the cardinality of $\mathbb{E}$ for each category. For instance, for \cblue{an undirected nonreflexive graph, $\text{Card}(\mathbb{E}) \leq N(N-1)/2$, while a directed nonreflexive graph has $\text{Card}(\mathbb{E}) \leq N(N-1)$, and a directed reflexive graph has $\text{Card}(\mathbb{E}) \leq N^2$}. Such graph definitions are binary, as it only described presence/absence of edges between vertex pairs. In this work, we require the notion of a weighted graph, where the edges $(e^{(n,\ell)})_{(n,\ell) \in \mathbb{E}}$ are associated to real valued weights $(\omega^{(n,\ell)})_{(n,\ell) \in \mathbb{E}}$. The edge positions and weight values of a graph are summarized in a matrix $\M \in \mathbb{R}^{N \times N}$, where, for every $(n,\ell) \in \mathbb{E}$, $M(n,\ell) = \omega^{(n,\ell)}$ and, for every $(n,\ell) \in \{1,\ldots,N\}^2 \notin \mathbb{E}$, $M(n,\ell) = 0$. An undirected (resp. directed) graph is thus associated to a symmetric (resp. non symmetric) matrix $\M$. A reflexive (resp. non reflexive) graph has non zero (resp. zero) entries in the diagonal of $\M$. The number of edges is simply obtained as the number of non-zero entries in $\M$. We finally define the so-called \emph{binary support} of the (possibly sparse) matrix $\M$. For every $\M  \in \mathbb{R}^{N \times N}$, $\text{supp}(\M) = \S \in \{0,1\}^{N \times N}$, with,  for every $(n,\ell) \in \{1,\ldots,N\}^2$, $S(n,\ell) = 0$ if and only if $M(n,\ell) = 0$ (i.e., $(n,\ell) \in \mathbb{E}$).

{\color{blue}
\subsection{Dynamical modeling}

\cblue{Let us consider an observed multivariate time series $\{\y_{k} \}_{k=1}^K\in \mathbb{R}^{N_y}$, with $K \geq 1$, and dimension $N_y \geq 1$. A model describes the evolution of time-series across time. This modeling allows for various machine learning tasks, such as forecasting (i.e., prediction of the future time series, for $k > K$), parameter estimation (e.g., \cite{andrieu2010particle}), classification (e.g.., \cite{HUSKEN2003223}), change point detection (e.g., \cite{Aminikhanghahi2017}), among many other tasks. A vast amount of models are available in the literature of time series analysis. Here we focus on the powerful state-space modeling (SSM) representation, a central block in some recent high-performance generative models such as Mamba \citep{gu2023mamba}}.
                                                                                                                                   


}

\subsection{Considered model}

\cblue{We consider the linear-Gaussian state-space model (LG-SSM), widely studied in the statistics, control theory, and signal processing literature, and recently popular again in the machine learning literature (see for instance its integration with RNNs and CNNs in \cite{gu2021combining}).} The LG-SSM is described, for $k=1,\ldots,K$, as
 \begin{align}
       \x_{k} &= \A \x_{k-1} + \q_{k},       \label{eq_model_state}
\\
     \y_k &= \H_k \x_k + \r_{k}, \label{eq_model_obs}
    \end{align}
where, 
\begin{itemize}
    \item $\{\x_{k} \}_{k=1}^K\in \mathbb{R}^{N_x}$ and  $\{\y_{k} \}_{k=1}^K\in \mathbb{R}^{N_y}$, are the hidden state and the observations at each time $k$ respectively,
    \item $\A \in \mathbb{R}^{N_x \times N_x}$ is the transition matrix that we aim at estimating,
    \item $\{\H_k\}_{k=1}^K \in \mathbb{R}^{N_y \times N_x}$ maps the observation model matrices, possibly varying with $k$, that are assumed to be known,
    \item $\{ \q_k \}_{k=1}^K \sim \mathcal{N}(0,{\Q})$ is the i.i.d. state noise process, assumed to follow a zero-mean Gaussian model with covariance {matrix $\Q \in \mathcal{S}_{N_x}^{++}$} that we also aim at estimating,
    \item $\{ \r_k \}_{k=1}^K \sim \mathcal{N}(0,\R_k)$ is the i.i.d. observation noise process, again zero-mean Gaussian with known covariance matrices $\R_k \in \mathcal{S}_{N_y}^{++}$.
\end{itemize}
Throughout the paper, we denote $\P = \Q^{-1}$ the precision matrix of the state noise. We furthermore assume an initial state distributed such that $\x_0 \sim \mathcal{N}(\x_0 ; \meanPrior_0, \covPrior_0)$ with known $\meanPrior_0 \in \mathbb{R}^{N_x}$ and $\covPrior_0\in \mathcal{S}^{++}_{N_x}$. The state noises and the observation noises are mutually independent and also independent of the initial state~$\x_0$. 

In the next subsection, we provide some reminders about filtering/smoothing procedures for time series described by an LG-SSM. 


\subsection{Filtering and smoothing algorithms in linear dynamical systems}



Both filtering and smoothing algorithms consist on the computation of a posterior probability density function (pdf) of the hidden state $\{\x_k\}_{k=1}^K$. For every $k \in \{1,\ldots,K\}$, the filtering distribution is $p(\x_k|\y_{1:k})$, where we denote as $\y_{1:k} = \{ \y_j \}_{j=1}^k$ the set of observations available up to the time step $k$, i.e., no future observations can be used to estimate $\x_k$. The filtering problem is suitable for online processing of the observations. The smoothing distribution is $p(\x_k|\y_{1:K})$, where $K$ is the final time-step for which there is an available observation, i.e., for $k \in \{1,\ldots,K-1\}$ (note that it is also possible to condition on a subset of future observations, e.g., $p(\x_k|\y_{1:k+\tau})$ with $\tau\in\mathcal{N}$). 

The filtering and smoothing distributions are in general intractable for most SSMs of interest. The LG-SSM is one of the exceptions that admit closed-form solutions.  

Estimating the filtering and smoothing distributions is in general a challenging problem, since obtaining these distributions of interest is possible only in few models of interest. For instance, for the LG-SSM described in \eqref{eq_model_state}-\eqref{eq_model_obs}, it is possible to obtain the filtering and smoothing distributions, for $k=1,\ldots,K$, in the case where the model parameters $\A$, {$\Q$}, $\{\H_k \}_{k=1}^K$, and $\{\R_k \}_{k=1}^K$ are known. Interestingly, these distributions can be obtained in an efficient sequential manner. In particular, the Kalman filter (KF) \citep{Kalman60} allows to obtain recursively (in a forward manner) the sequence of filtering distributions. Its smoothing counterpart, the Rauch-Tung-Striebel (RTS) smoother \citep{Briers05}, runs backwards to obtain the sequence of smoothing distributions. 
We note that both algorithms require the model parameters to be known, which in the case of the LG-SSM presented in the previous section are $\A$, {$\Q = \P^{-1}$}, $\{\H_k \}_{k=1}^K$, and $\{\R_k \}_{k=1}^K$. Algorithm \ref{alg_kf} describes KF, which at each time step $k \in \{1,\ldots,K\}$ performs the (a) prediction/propagation step, where the mean $\meanPred_{k|k-1}$ 
 and covariance $\covPred_{k|k-1}$ of the (state) predictive distribution are obtained; and the (b) update step, where the mean $\meanFilt_{k}$ 
 and covariance $\covFilt_k$ of the filtering distribution are obtained.
 Algorithm \ref{alg_rts} describes the RTS smoother. In this case, the iteration starts at {$k=K$} and runs backwards. It can be interpreted as a refinement from the mean and covariance matrices of the filtering distribution, given by Kalman, updating them with information present in future observations. However, note that the observations are not re-used in the RTS algorithm, i.e., all the required information in the observations is absorbed by the filtering distributions, which are used to produce the smoothing distributions. 


\begin{table}[!t]
\vspace{4mm}
    \centering
    \begin{tabular}{|p{0.95\columnwidth}|}
    \hline
\begin{Algoritmo}
\label{alg_kf}
Kalman Filter
\begin{enumerate}
  \item[] \textbf{Input.} Prior parameters $\meanPrior_0$\;and  $\covPrior_0$; model parameters $\A$, ${\P}$, $\{\H_k \}_{k=1}^K$, and $\{\R_k \}_{k=1}^K$; set of observations $\{\y_k \}_{k=1}^K$.

 
\item[] \textbf{Recursive step.} For $k=1,\ldots,K$  
\begin{enumerate}
  \item {{\sf Prediction/propagation step.}} 
\begin{eqnarray}
\meanPred_{k|k-1} &=& \A\meanFilt_{k-1} \\
\covPred_{k|k-1} &=& \A\covFilt_{k-1}\A^\top + {\P^{-1}}
\label{eq_propagation_kf}
\end{eqnarray}
\item {{\sf Update step.}} 
\begin{eqnarray}
\nub_{k} &=& \H_k \meanPred_{k|k-1}\\
\v_{k} &=& \y_k - \nub_k \\
\S_{k} &=&  \H_k \covPred_{k|k-1} \H_k^\top + {\R_k} \\
\K_{k} &=&  \covPred_{k|k-1} \H_k^\top \S_k^{-1} \\
\meanFilt_{k} &=& \meanPred_{k|k-1} + \K_k \v_k\\
\covFilt_k &=& \covPred_{k|k-1} - \K_{k} \S_k \K_{k}^\top 
\label{eq_update_kf}
\end{eqnarray}
\end{enumerate}
\item[] \textbf{{Output.}} $\{\meanFilt_k, \covFilt_k \}_{k=1}^K$. Then, for each $k=1,...,K$: 
\begin{itemize}
  \item state filtering pdf: $p(\x_k|\y_{1:k}) = \gauss(\x_k ; \meanFilt_k, \covFilt_k)$
  \item observation predictive pdf: $p(\y_k|\y_{1:k-1}) = \gauss(\y_k;\nub_k,\S_k)$
  
\end{itemize}
\end{enumerate}
\end{Algoritmo}\\
        \hline
\end{tabular}
\end{table}

\begin{table}[!t]
\vspace{4mm}
    \centering
    \begin{tabular}{|p{0.95\columnwidth}|}
    \hline
\begin{Algoritmo}
\label{alg_rts}
RTS Smoother
\begin{enumerate}
  \item[] \textbf{Input.} Filtering parameters $\{\meanFilt_k, \covFilt_k \}_{k={0}}^K$ from the Kalman filter; model parameters $\A$ and {$\P$}.
\item[] \textbf{Initialization.} Set $\meanSmooth_{K} = \meanFilt_{K}$ and $\covSmooth_{K} = \covFilt_{K}$. 
 
\item[] \textbf{Recursive step.} For {$k=K,K-1,...,{0}$}  

{\begin{align}
\meanFilt_{k+1}^{-} &= \A\meanFilt_{k} \\
\covFilt_{k+1}^{-} &= \A\covFilt_{k}\A^\top + {\P^{-1}}\\
\G_k &= \covFilt_{k}\A^\top \Big(\covFilt_{k+1}^{-} \Big)^{-1}\\
\meanSmooth_{k} &= \meanPred_{k|k-1} + \G_k \left(\meanSmooth_{k+1}  - \meanFilt_{k+1}^{-} \right)\\
\covSmooth_k &= \covPred_{k|k-1} - \G_k \left(\covSmooth_{k+1}  - \covFilt_{k+1}^{-} \right)\G_k^\top
\end{align}} 

\item[] \textbf{{Output.}} $\{\meanSmooth_k, \covSmooth_k \}_{k=1}^K$. Then, for each $k=1,...,K$: 
\begin{itemize}
  \item state smoothing pdf: {$p(\x_k|\y_{1:K}) = \gauss(\x_k ; \meanSmooth_k, \covSmooth_k)$} 
  
\end{itemize}
\end{enumerate}
\end{Algoritmo}\\
        \hline
\end{tabular}
\end{table}

\subsection{Problem statement}

Algorithms \ref{alg_kf} and \ref{alg_rts} are simple and efficient. However, they require the knowledge of the model parameters. In this paper, we assume $\{\H_k \}_{k=1}^K$, and $\{\R_k \}_{k=1}^K$ to be known, and we address the problem of obtaining the filtering and smoothing distributions when matrices $\A$ and $\P$ are unknown, and must be estimated jointly with the filtering/smoothing step. To do so, we introduce a double graphical modeling of the state equations, where matrices $\A$ and $\P$ now represent the weights of graphs with a specific, and complementary, statistical interpretation. We then propose an efficient and convergent inference approach to estimate both graphs under sparse priors given an observed sequence $\{\y_k \}_{k=1}^K$, while also obtaining its filtering and smoothing distributions at every time steps. \cblue{The graph representation learning and the time series inference is performed per each time-series observation (i.e., it does not require a training phase in identically distributed time series). This contrasts with the deep learning based techniques, deployed for instance in \cite{Revach2022,SeoGCRN}, that include a supervision stage averaging over a dataset of multiple time series.}

\section{Proposed {model and inference method}}   
\label{sec_proposed}

{We now introduce our graphical modeling perspective of the LG-SSM. This novel view gives a graph-based interpretation of the transition matrix $\A$, and the precision matrix $\P$. Then, we present the optimization methodology in order to \cblue{estimate such matrices, and, as such, learning the joint graph representation of a given observed time series}.}

\subsection{Sparse dynamical graphical model}

{Our novel approach interprets the transition matrix $\A$ and the precision matrix $\P$ as (weighted) directed and undirected graphs, respectively. We now particularize this perspective for each matrix, deepening into the interpretability of such novel view, the complementarity of both graphs, and its benefits during the inference process.}



{


\subsubsection{State transition matrix}

Matrix $\A$ governs the hidden process in \eqref{eq_model_state} and can be seen as the matrix parameter of an order-one vector auto-regressive (VAR) unobserved process. For every $n \in \{1,\ldots,N_x\}$ and $\ell \in \{1,\ldots,N_x\}$, the entry $A(n,\ell)$ contains the information of how the $n$-th time series $\{x_k(n)\}_{k=1}^K$ is affected by the $\ell$-th time series $\{x_k(\ell)\}_{k=1}^K$ in consecutive time steps. More precisely, we can express the update of the $n$-th dimension of the latent state in the generative model as
\begin{equation}
x_k(n) = \sum_{\ell=1}^{N_x} A(n,\ell)x_{k-1}(\ell)+q_k(n).
\end{equation}

Thus, if $A(n,\ell)=0$, it implies that the $\ell$-th time series of the latent state does not provide any information to predict the $n$-th time series one time step ahead conditioned to observing the information in all time series for which $A({n,m})\neq 0$, $m \in \{1,\ldots,N_x\}\setminus \ell$. 
%

\cblue{
We express this one-step-ahead conditional independence by denoting  
\begin{equation}
x_k(n) \perp\!\!\!\perp x_{k-1}(\ell) | \{ x_{k-1}(i) \}_{i\in\mathcal{I}_n},
\end{equation}
with  {$ \mathcal{I}_{n} = \{ m |A(n,m)\neq 0, m \in \{1,\ldots,N_x\} \}$}, i.e., due to the Markovian structure, conditionally on the $k$-th components of the time series indexed by $\mathcal{I}_{n}$, all other past components of all time series are independent to $x_k(n)$. Even more, it is possible to establish that $p(x_k(n)| \x_{1:k-1}) = p(x_k(n)|\{x_{k-1}(i) \}_{i\in\mathcal{I}_n})$, again due to the Markovian structure. 
}

Our interpretation is clearly connected to Granger causality \citep{granger1969investigating}, and more in particular to conditional Granger causality \citep{Luengo19}.\footnote{\cblue{We recall here a vanilla version of the Granger-causality hypothesis test. Let us consider two observed univariate time-series $\z_i = [z_{1,i},z_{2,i},...,z_{K,i}]$ and $\z_j = [z_{1,j},z_{2,j},...,z_{K,j}]$, and the two following statistical models: (A) $z_{k,i} = a_1z_{k-1,i}+ \varepsilon_k$; and (B) $z_{k,i} = a_1 z_{k-1,i}+ b_1 z_{k-1,j}+ \gamma_k$. We say that $\z_j$  Granger-causes $\z_i$ if, when fitting the two auto-regressive (AR) models  (with order $p=1$ in our example), model B is statistically significantly better than model A, i.e., the variance of $\gamma_k$ is significantly smaller than the variance of $\varepsilon_k$.}} 
{In particular, if $A(\cblue{n,\ell})\neq 0$ for some $(n,\ell) \in \{1,\ldots,N_x\}^2$, it means that $x_{k-1}(\cblue{\ell})$ causes (in conditional Granger sense) $x_{k}(\cblue{n})$, for every $k \in \{1,\ldots,K\}$. The conditional Granger causality directional relationships within the entries of the multivariate latent state time series $\x$ can hence be represented as a graphical model made of a directed graph with $N_x$ vertices, whose weights are those of the transpose matrix $\A^\top$. Moreover, self-loops occur at each vertex associated to a non-zero diagonal entry in $\A$}. {The perspective of matrix $\A$ interpreted as a (weighted) directed graph bears some links with the work of graphical Granger causality by \cite{shojaie2010discovering}, although we here model the interactions in the latent space instead of directly on the observations. In our case the graphical modeling is simpler, since the work by \cite{shojaie2010discovering} considers all possible interactions in the observation space across time (i.e., the interpreted graph size is $K\cdot N_y$). The price to pay of our modeling is the difficulty in inferring $\A$, since it governs the latent process, hence it is never observed. We propose an advanced methodology to address the inferential task in Section \ref{sec_proposed}.} 




\noindent\textbf{Illustrative example.} In Figure \ref{fig:adj_A}, we display an illustrative example of the graphical model associated to the following state equations for $N_x = 5$, for every $k \in 1,\ldots,K$, with $\q_k \in \mathbb{R}^{N_x}$ the latent state noise
\begin{equation}
\begin{cases}
x_k(1) = \,0.9 \,x_{k-1}(1) + 0.7 x_{k-1}(2) +  q_k(1),\\
x_k(2) = - 0.3 \,x_{k-1}(3) +  q_k(2),\\
x_k(3) = \,0.8 \,x_{k-1}(5) +  q_k(3),\\
x_k(4) = - 0.1 \,x_{k-1}(2) +  q_k(4),\\
x_k(5) = \,0.5 \,x_{k-1}(3) +  q_k(5).
\end{cases}
\label{eq:stateExample}
\end{equation}
We display the matrix $\A$, the associated binary matrix $\rm{supp}(\A)$ and the resulting directed graph under this interpretation.

\begin{figure}
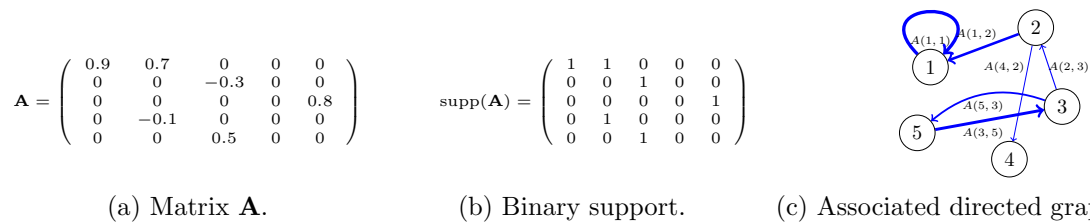

\centering
\begin{subfigure}{0.34\textwidth}
\tiny
 \begin{equation}
\label{eq:adj}
\A = 
\left(
\begin{array}{ccccc} 
0.9 & 0.7 & 0 & 0 & 0 \\
0 & 0 & -0.3 & 0 & 0 \\
0 & 0 & 0 & 0 & 0.8 \\
0 & -0.1 & 0 & 0 & 0 \\
0 & 0 & 0.5 & 0 & 0 
\end{array}
\right)\nonumber
 \end{equation}
     \caption{Matrix $\A$.}
    \label{fig:first}
\end{subfigure}
\hfill
\begin{subfigure}{0.23\textwidth}
\tiny
 \begin{equation}
\label{eq:adj}
\rm{supp}(\A) = 
\left(
\begin{array}{ccccc}
1 & 1 & 0 & 0 & 0 \\
0 & 0 & 1 & 0 & 0 \\
0 & 0 & 0 & 0 & 1 \\
0 & 1 & 0 & 0 & 0 \\
0 & 0 & 1 & 0 & 0 
\end{array}
\right)\nonumber
 \end{equation}
     \caption{Binary support.}
    \label{fig:first}
\end{subfigure}
\hfill
\begin{subfigure}{0.33\textwidth}
\hspace{1cm}
\scalebox{0.7}{
     \tikzfig{tikz_files/graph_A} 
		}
    \caption{Associated directed graph.}
    \label{fig:second}
\end{subfigure}        
\caption{Graphical model associated to \eqref{eq:stateExample}. Matrix $\A$ (a), its binary support (b) and associated directed graph (c). The edges are defined as non-zero entries of $\A^\top$. Non-zero diagonal entries result in self-loops (here, in vertex $1$). The thickness of arrows is proportional to the absolute entries of $\A^\top$.}
\label{fig:adj_A}
\end{figure}


 
\subsubsection{State noise precision matrix}

Matrix $\Q$ denotes the noise covariance in the state Eq. \eqref{eq_model_state}. Since the noise is assumed to be Gaussian, this matrix, and more precisely, the associated precision matrix $\P = \Q^{-1}$, also has a direct interpretation in terms of graphical modeling, using the notion of Gaussian graphical model (GGM) \citep[Section 13.4]{buhlmann2011statistics}\citep{Uhler2017GaussianGM}. Since we consider $\Q$ constant during the whole time series, let us denote the multivariate state noise r.v. at any time step as $\q \sim \mathcal{N}(0,\Q)$. The GGM consists in a graphical modeling of the independence (or not) between the scalar random variables $q(1),\ldots,q(N_x)$. It is easy to prove that,
\begin{equation}
q(n) \perp\!\!\!\perp q(\ell) | \{ q(j), j \in {1,\ldots,N_x} \backslash \{n,\ell\}\}  \Longleftrightarrow P(n,\ell) = P(\ell,n) =  0,   \label{eq:independance}
\end{equation}
i.e., the entries $n$ and $\ell$ of $\q$ are independent given all other entries if and only if the entry $P(n,\ell)$ is zero (and obviously also $P(\ell,n)$ since the precision matrix is symmetric). Note that it is possible to condition in the l.h.s. of \eqref{eq:independance} only to the entries $q(j)$ for which $P(n,j) \neq 0$ and the equivalence would still hold. \cblue{The GGM interprets the precision matrix (i.e., the inverse of the covariance matrix, $\P = \Sigmab^{-1}$) as a weighted undirected graph.} 
In particular, $(n,\ell) \notin \mathbb{E}$ if and only if $P(n,\ell) = P(\ell,n) = 0$. 

This GGM construction is at the core of the famous GLASSO (Graphical Lasso) formulation \citep{Friedman07}\citep[Section 9.7]{Lauritzen}, whose goal is to build the maximum a posteriori estimator of $\P$ given realizations of the random vector $\q$ under a sparsity assumption on matrix $\P$. The sparsity is here interpreted as a way to eliminate spurious edges in the graph associated to $\P$.  

\noindent\textbf{Illustrative example.} In Figure \ref{fig:adj_P}, we display an illustrative example on the GGM associated to a given precision matrix $\P$ for $N_x = 5$. We show the associated binary support matrix $\rm{supp}(\P)$ and the resulting undirected graph under this interpretation. Although self-loops (i.e., non-zero diagonal elements in $\P$) occur, we removed them from the graphical representation for ease of readability. 

\begin{figure}
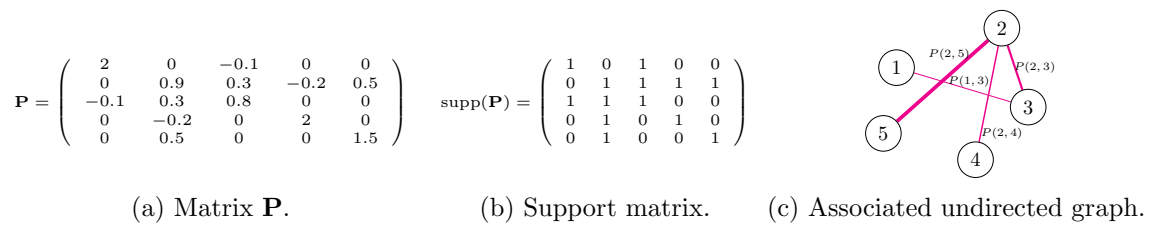

\centering
\begin{subfigure}{0.37\textwidth}
\tiny
 \begin{equation}
\label{eq:adj}
\P = 
\left(
\begin{array}{ccccc} 
2 & 0 & -0.1 & 0 & 0 \\
0 & 0.9 & 0.3 & -0.2 & 0.5 \\
-0.1 & 0.3 & 0.8 & 0 & 0 \\
0 & -0.2 & 0 & 2 & 0 \\
0 & 0.5 & 0 & 0 & 1.5 
\end{array}
\right)\nonumber
 \end{equation}
     \caption{Matrix $\P$.}
    \label{fig:first}
\end{subfigure}
\hfill
\begin{subfigure}{0.28\textwidth}
\tiny
 \begin{equation}
\label{eq:adj}
\rm{supp}(\P) = \left(
\begin{array}{ccccc}
1 & 0 & 1 & 0 & 0 \\
0 & 1 & 1 & 1 & 1 \\
1 & 1 & 1 & 0 & 0 \\
0 & 1 & 0 & 1 & 0 \\
0 & 1 & 0 & 0 & 1
\end{array}
\right)\nonumber
 \end{equation}
     \caption{Support matrix.}
\end{subfigure}
\hfill
\begin{subfigure}{0.33\textwidth}
\hspace{1cm}
\scalebox{0.7}{
     \tikzfig{tikz_files/graph_Q} 
		}
    \caption{Associated undirected graph.}
\end{subfigure}        
\caption{Matrix $\P$ (a), its binary support (b), and the associated undirected graph (c) with edge thickness proportional to the absolute entries of $\P$. Self-loops are removed, for readability purpose.}
\label{fig:adj_P}
\end{figure}

\subsubsection{Proposed unifying view}

We now summarize the graphical perspective on both $\A$ and $\Q$ and describe a unifying approach, where sparsity plays a key role. Matrix $\A$ is interpreted as the weight matrix of a directed graph with $N_x$ vertices. Sparsity (i.e., absence of edge in the graph) in $\A$ is interpreted as pair-wise {partial/}conditional independence, given a subset of the remaining time series, for a one-step ahead prediction of the hidden state. Matrix $\P = \Q^{-1}$ is interpreted as the weight matrix of an undirected graph, related to a GGM describing the noise in the latent space. Sparsity in $\P$ is interpreted as pair-wise {partial/}conditional independence of two dimensions of the additive state noise, given a subset of the remaining dimensions. \cblue{Both graphs have $N_x$ nodes (i.e., $N_x$ vertices), and a maximum of $N_x^2$ edges for $\A$ (resp. $N_x(N_x-1)$ for $\P$), possibly including self-loops, associated to weights defined as the entries of $\A$ or $\P$}. 

Our perspective in the state process of the LG-SSM in \eqref{eq_model_state} is that $\A$ encodes the way the information flows in consecutive time-steps between the nodes (state dimensions) of the network (vector state). Thus, its properties shape how the energy/information is transferred and dissipated (under the noise). In contrast, $\P = \Q^{-1}$ encodes how information that is not in the system at time $k-1$ enters in the system at time $k$. In that respect, the interpreted graph with weight matrix $\P$ encodes the dependency of the new information across the nodes of the network. 

We adopt the above perspective to estimate both $\A$ and $\Q$ by promoting properties in both graphs. Specifically, we introduce sparsity priors on the matrices, as the sparsity property is key to reach interpretability and compactness of the whole model. In particular, it allows to understand the inner structure of the latent space. Moreover, it can be helpful to speed up computations as the sparsity level is increased, e.g., when running the Kalman filter and RTS smoother. Our proposed method \acro~(Dynamic Graphical Lasso) hence aims at providing the maximum a posteriori (MAP) estimator of $\A$ and $\P$ (i.e., the weight matrices related to the graphical modeling of the latent state correlation and causality) under Lasso sparsity regularization on both matrices, given the observed sequence $\y_{1:K}$. A visual representation of \acro~graphical model is given in Figure~\ref{fig:graph_both}. The figure summarizes the relationships among the state entries of an LG-SSM using matrices $(\A,\P)$ from Figures~\ref{fig:adj_A} and \ref{fig:adj_P}.} 

\paragraph*{Related works:} Our approach \acro~generalizes important existing sparse graphical inference ones. For instance, our model with $\A = 0$ (degenerate case) has no memory, and all the energy/information of the system is lost at each time step, thus the state dimensions only incorporate exogenous energy/information through the additive noises. {This degenerate case is the same model than GLASSO \citep{Friedman07} in the case when $\R_k \equiv 0$, and same than the robust GLASSO model \citep[Sec.5.2]{Benfenati18} when $\R_k \equiv \sigma_{\R}^2 \bf{Id}_{N_y}$. In contrast, if the state noise covariance matrix $\Q$ is known, \acro~coincides with our recent GraphEM framework \citep{elvira2022graphem}. Probably the closer related work is~\citep{Ioannidis2019}, which also introduces a joint graph modeling within an LG-SSM, capturing order-one causal relationships and instantaneous influence (i.e., order zero), through two sparse graphs. Their proposed inference method is an alternating optimization technique, that infers the two graphs under Lasso prior, jointly with the estimation of hidden state. In contrast with \acro, in~\citep{Ioannidis2019}, (i) the state model follows a structural vector autoregressive model (SVAR) where instantaneous causality and noise are distinguished, {while \acro~assumes an order-one VAR in the hidden state}; and} 
(ii) the cost function does not result from a Bayesian modeling, and as such it is not related to a maximum a posteriori loss for the graph variables, (iii) the state estimation is point wise defined as the solution of an handcrafted optimization problem, while \acro~preserves a full Bayesian interpretation and hence allows the complete characterization of the filtering/smoothing state distributions. In particular,~\citep{Ioannidis2019} model does not recover GLASSO as a particular case.

\begin{figure}[h]
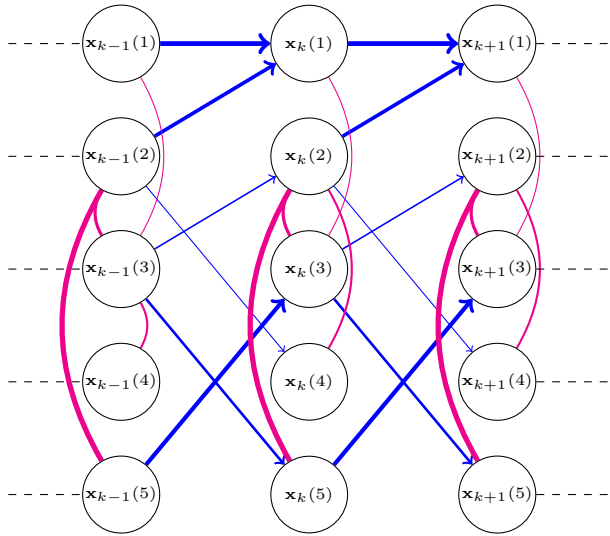

\centering
\tikzfig{tikz_files/graph_both}
\caption{Summary representation of the \acro~graphical model, for the example graphs presented in Figs.~\ref{fig:adj_A} and~\ref{fig:adj_P}. Blue (oriented) edges represent Granger causality between state entries among consecutive time steps, encoded in matrix $\A$ (Fig.~\ref{fig:adj_A}). Magenta edges represent static (i.e., instantaneous) relationships between the state entries, at every time step, due to correlated state noise described by matrix $\P$ (Fig.~\ref{fig:adj_P}).}
\label{fig:graph_both}
\end{figure}

\subsection{Optimization problem}

The considered MAP inference problem in DGLASSO reads as an optimization problem that we formulate hereafter. More specifically, let us denote the posterior of the unknown parameter, $p(\A,\P|\y_{1:K})$, where the hidden states have been marginalized. It is direct to show, using Bayes rule and composition with the (strictly increasing) logarithmic function, that the maximum of $p(\A,\P|\y_{1:K}) \propto p(\A,\P)p(\y_{1:K}|\A,\P)$, with $p(\A,\P)$ some prior on the parameters $\A$ and $\P$, coincides with the minimum of the following loss function:
\begin{align}
(\forall \A \in \mathbb{R}^{N_x \times N_x})(\forall \P \in \mathcal{S}_{N_x}) \quad 
\loss(\A,\P) & \triangleq \loss_{1:K}(\A,\P) + \loss_0(\A,\P).
    \label{eq:phik}
    \end{align}
		with $\loss_{1:K}(\A,\P) \triangleq - \log p(\y_{1:K}| \A,\P)$ {and $\loss_{0}(\A,\P) = -\log p(\A,\P)$}.  According to \citep[Chap. 12]{Sarkka}, 
\begin{equation}
(\forall \A \in \mathbb{R}^{N_x \times N_x})(\forall \P \in \mathcal{S}_{N_x}) \quad
\loss_{1:K}(\A,\P) =  \sum_{k=1}^K \tfrac{1}{2} \log \det( 2 \pi \S_k) + \frac{1}{2} \z_k^\top \S_k^{-1} \z_k,
\label{eq:lossimplicit}
\end{equation}
where $\z_k = \y_k - \H_k \A\mub_{k-1}$  and $\S_k$ is the covariance matrix of the predictive distribution $p(\y_{k}|\y_{1:k-1}, \A,\P)= \mathcal{N}\left(\y_{k};\H\A\mub_{k-1},\S_{k}\right)$, both being obtained by the KF in Algorithm~\ref{alg_kf} run for given $(\A,\P)$ (see \cite[Section 4.3]{Sarkka}). 

As already mentioned, the introduction of priors that induce sparsity is advantageous due to several reasons. First, it generally reduces over-fitting, particularly when $K$ is low compared to the number of parameters to be estimated. Also, it enhances the interpretability. The zero elements in $\A$ and $\P$ have a clear interpretation of conditional independence between the time series. Ideally, we would use the $\ell_0$ (pseudo) norm of $\A$ and $\P$, i.e., penalizing the number of non-zero entries of the matrix. However, this penalty is known to have undesirable properties such as being non-convex, non continuous, and associated to a improper law $p(\A)$. We thus propose instead the regularization term
\begin{equation}
(\forall \A \in \mathbb{R}^{N_x \times N_x})(\forall \P\in \mathcal{S}_{N_x}) \quad \loss_0(\A,\P) = \lambda_A \|\A\|_1 + \lambda_P \|\P\|_1.\label{eq:priorgen}
\end{equation}
The $\ell_1$ norm in \eqref{eq:priorgen}, defined as the sum of absolute values of the matrix entries, is a proper convex function, that leads to the so-called Lasso regularization \citep{BachSparse}. Note that this penalty, used in numerous works of signal processing and machine learning~\citep{Tibshirani,Chaux_2007}, including graph signal processing \citep{Friedman07,Benfenati18}, is associated with a joint Laplace prior distribution on $\A$ and $\P$. Such joint distribution factorizes (i.e., the prior assumes independence on both parameters), the means are zero, and the scale parameters are proportional to, respectively, $\lambda_A$ and $\lambda_P$. The larger the regularization parameter $\lambda_A$ (or $\lambda_P)$, the higher sparsity of $\A$ (or $\P)$, with the degenerate case of a null $\A$ (and $\P$) when the regularization parameter grows.

\subsection{General minimization procedure}

The expressions \eqref{eq:lossimplicit}-\eqref{eq:priorgen} provide an effective way to evaluate~\eqref{eq:phik}. However, due to the recursive form in \eqref{eq:lossimplicit}, it is challenging to derive direct quantities (e.g., gradient) for $\loss_{1:K}$. Moreover, despite its simple expression, the regularization term \eqref{eq:priorgen} is non differentiable. For both reasons, the minimization of~\eqref{eq:phik} is a challenging question. 
 
We propose a block alternating majorization-minimization (MM) technique to infer the MAP estimates of $(\A,\P)$. Our method presents the advantage of sound convergence guarantees and the ability to incorporate sparsity priors on both $\A$ and $\P$. The general idea of MM is to replace a complicated optimization problem by a sequence of more tractable ones \citep{Sun2016,Hunter2004}. Surrogate approximations of the cost function are built iteratively, following a majorization principle. For any estimates $\widetilde{\A} \in \mathbb{R}^{N_x \times N_x} $ and $\widetilde{\P} \in \mathcal{S}_{N_x}^{++}$ (i.e., in the interior domain of definition of $\loss$) of $(\A,\P)$, a majorizing approximation is constructed for the likelihood term. {It is required to satisfy both}
\begin{equation}
(\forall \A \in \mathbb{R}^{N_x \times N_x})(\forall \P\in \mathcal{S}_{N_x}) \quad
\maj(\Ab,\P;\widetilde{\A},\widetilde{\P}) \geq \loss_{1:K}(\A,\P),
\label{eq:majorant}
\end{equation}
and also the so-called tangency condition
\begin{equation}
\maj(\widetilde{\A},\widetilde{\P};\widetilde{\A},\widetilde{\P}) =  \loss_{1:K}(\widetilde{\A},\widetilde{\P}).
\label{eq:tangency}
\end{equation}
The MM algorithm then alternates between \textbf{M}ajorization step to build function $\mathcal{Q} + \loss_0$ satisfying conditions \eqref{eq:majorant} and \eqref{eq:tangency}, and \textbf{M}inimization step to minimize this majorizing approximation. In our proposed approach, we adopt a block alternating implementation of the MM method, where only one variable (i.e., $\A$ or $\P$) is updated at each iteration, following a cyclic rule \citep{jacobson,Hong2016}. This alternating strategy considerably simplifies the majorizing function construction as well as its minimization. We furthermore add so-called proximal terms in both updates. Let us recall that, for a function $\phi: \mathcal{H} \mapsto (-\infty,+\infty] \in \Gamma_0(\mathcal{H})$, with $\mathcal{H}$ a Hilbert space with endowed norm $\| \cdot \|$, the proximity operator\footnote{See also \url{http://proximity-operator.net/}} of function $f$ at $\widetilde{\V} \in \mathcal{H}$ is defined as~\citep{Combettes2011}
\begin{equation}
    \operatorname{prox}_\phi(\widetilde{\V}) = \argmin{\V \in \mathcal{H}}{\left(\phi(\V) + \frac{1}{2}\| \V - \widetilde{\V}\|^2 \right)}.
		\label{eq:defprox}
\end{equation} 
In our context, the considered Hilbert space is either $\mathbb{R}^{N_x \times N_x}$ for the update of the transition matrix or $\mathcal{S}_{N_x}$ for the update of the precision matrix, and the endowed norm is in both cases the Frobenius norm $\| \cdot \|_F$. Introducing proximity terms thus amounts to adding to each majorizing function a quadratic distance to the previous iterate, weighted by a positive factor. This modification preserves the MM interpretation of the method, while ensuring improved stability and convergence guarantees. {As we show below, the iterates belong to the interior of domain of the loss function by construction. Namely for every $i \in \mathbb{N}$, $(\A^{(i)},\P^{(i)}) \in \mathbb{R}^{N_x \times N_x} \times \mathcal{S}_{N_x}^{++}$,  so the precision matrix remains invertible along the iterations and the algorithm is well defined. The resulting \acro~approach is summarized in Algorithm~\ref{algorithm}. \acro~aims at providing ultimately the MAP estimates for the matrix parameters $(\A,\P)$ of the considered LG-SSM, through the minimization of~\eqref{eq:phik}. The covariance state noise MAP estimate is straightforwardly obtained by inversion of the precision state noise matrix provided as an output of~\acro. The state filtering/smoothing pdf associated to each estimates are finally computed by running KF/RTS loops when setting $(\A,\P)$ equal to \acro~outputs. Next sections are dedicated to the (i) construction of the majorizing function, (ii) discussion about the resolution of each inner step, and (iii) convergence analysis.}

\begin{table}[!t]
\vspace{4mm}
    \centering
    \begin{tabular}{|p{0.95\columnwidth}|}
    \hline
\begin{Algoritmo}
\label{algorithm}
\acro~algorithm
\begin{enumerate}
  \item[] \textbf{Inputs.} Prior parameters $\meanPrior_0$ and  $\covPrior_0$; model parameters $\{\H_k \}_{k=1}^K$  and $\{\R_k \}_{k=1}^K$; set of observations $\{\y_k \}_{k=1}^K$; hyper-parameters $(\lambda_A,\lambda_P)>0$; stepsizes $(\theta_A,\theta_P)>0$; precisions $(\varepsilon,\xi)>0$. 
\item[] \textbf{Initialization.} Set $(\A^{(0)},\P^{(0)}) \in \mathbb{R}^{N_x \times N_x} \times \mathcal{S}_{N_x}^{++}$.
\item[] \textbf{Recursive step.} For $i=0,1,\ldots$:  
 \begin{enumerate}
\item \texttt{//Update transition matrix}
\item[(i)] Build surrogate function satisfying \eqref{eq:majorant} and \eqref{eq:tangency} at $\widetilde{\A} = \A^{(i)}$ and $\widetilde{\P} = \P^{(i)}$.
\item[(ii)] Run Algorithm \ref{algo:ProxA} with precision $\xi$ to solve
\begin{equation}
\begin{array}{ll}
\A^{(i+1)} & = \operatorname{prox}_{\A \to \theta_A  \maj(\A,\P^{(i)};\A^{(i)},\P^{(i)}) + \theta_A \lambda_A \|\A\|_1 }\left(\A^{(i)}\right),
 \\
 & = \argmin{\A \in \mathbb{R}^{N_x \times N_x}}{\maj(\A,\P^{(i)};\A^{(i)},\P^{(i)})  + \lambda_A \|\A\|_1 + \frac{1}{2 \theta_A} \|\A - \A^{(i)}\|^2_F}.
\label{eq:updateA}
\end{array}
\end{equation}
\item \texttt{//Update noise precision matrix}
\item[(i)] Build surrogate function satisfying \eqref{eq:majorant} and \eqref{eq:tangency} at $\widetilde{\A} = \A^{(i+1)}$ and $\widetilde{\P} = \P^{(i)}$.
\item[(ii)] Run Algorithm \ref{algo:ProxP} with precision $\xi$ to solve
\begin{equation}
\begin{array}{ll}
 \P^{(i+1)} & = \operatorname{prox}_{\P \to \theta_P \maj(\A^{(i+1)},\P;\A^{(i+1)},\P^{(i)}) + \theta_P \lambda_P \|\P\|_1  }\left(\P^{(i)}\right),
\\
&= \argmin{\P \in \mathcal{S}_{N_x}}{\maj(\A^{(i+1)},\P;\A^{(i+1)},\P^{(i)})  + \lambda_P \|\P\|_1 + \frac{1}{2 \theta_P} \|\P - \P^{(i)}\|^2_F}.
\end{array}
\label{eq:updateP}
\end{equation}
 \end{enumerate}
\item[] If $\|\A^{(i+1)} - \A^{(i)} \|_F \leq \varepsilon \| \A^{(i)} \|_F$ \textbf{and} 
$\|\P^{(i+1)} - \P^{(i)} \|_F \leq \varepsilon \| \P^{(i)} \|_F$, \textbf{stop the recursion} by returning $(\A^{(i+1)},\P^{(i+1)})$.
 %
\item[] \textbf{{Output.}} MAP estimates of the transition and state noise precision matrices.
\end{enumerate}
\end{Algoritmo}\\
        \hline
\end{tabular}
\end{table}
%


\subsection{Building the majorizing function}


In this section, we derive a theorem regarding the expression of the loss function $\loss_{1:K}$ and a valid majorant function for it. 

\begin{mytheorem}
The loss function can be expressed as\footnote{\cblue{Note that the l.h.s. in \eqref{eq:lossexplicit} does not depend on $\x_{0:K}$, so the r.h.s. is valid for any arbitrary value of $\x_{0:K}$ with non-zero probability under $p(\x_{0:K})$, i.e., for all $\x_{0:K} \in \Real^{(K+1)N_x}$. }}
\begin{multline}
(\forall \A \in \mathbb{R}^{N_x \times N_x})(\forall \P \in \mathcal{S}_{N_x}) \quad
\loss_{1:K}(\A,\P) =  \frac{{1}}{2}\sum_{k=1}^K \left(\left(\x_k - \A \x_{k-1} \right)^\top \P  \left(\x_k - \A \x_{k-1}  \right) \right) \\
- \frac{K}{2}\log \det ({2\pi} {\P}) + \log p(\x_{0:K} | \y_{1:K},\A,\P) + \log p(\x_0) - \sum_{k=1}^K p(\y_k | \x_k).
\label{eq:lossexplicit}
\end{multline}
Moreover, consider the outputs of Algorithms~\ref{alg_kf} and~\ref{alg_rts}  for a given $\widetilde{\A} \in \mathbb{R}^{N_x \times N_x}$ and $\widetilde{\P} \in \mathcal{S}_{N_x^{++}}$. Denote
\begin{equation}
\begin{cases}
\cblue{\widetilde{\Psib}} & = \frac{1}{K} \sum_{k=1}^K  \left(\covSmooth_k + \meanSmooth_{k} (\meanSmooth_{k})^\top\right),\\
\cblue{\widetilde{\Deltab}} & = \frac{1}{K} \sum_{k=1}^K  \left(\covSmooth_k \G_{k-1}^\top  + \meanSmooth_{k} (\meanSmooth_{k-1})^\top\right),\\
\cblue{\widetilde{\Phib}} & = \frac{1}{K} \sum_{k=1}^K  \left(\covSmooth_{k-1} + \meanSmooth_{k-1} (\meanSmooth_{k-1})^\top\right),
\end{cases}
\label{eq:psideltaphi}
\end{equation}
where, for every $k \in\{1,\ldots,K\}$, $\G_k = \Sigmab_k \widetilde{\A}^\top (\widetilde{\A} \Sigmab_k \widetilde{\A}^\top + \widetilde{\P}^{-1})^{-1}$ (see Algorithm~\ref{alg_rts}). Then, conditions \eqref{eq:majorant} and \eqref{eq:tangency} hold with
\begin{multline}
(\forall \A \in \mathbb{R}^{N_x \times N_x})(\forall \P \in \mathcal{S}_{N_x}) \quad 
\maj(\Ab,\P;\widetilde{\A},\widetilde{\P}) = \frac{K}{2} \operatorname{tr}\left(\P(  
\cblue{\widetilde{\Psib}} - \cblue{\widetilde{\Deltab}} \Ab^\top - \Ab \cblue{\widetilde{\Deltab}}^\top + \Ab \cblue{\widetilde{\Phib}} \Ab^\top) \right) \\ - \frac{K}{2}\log \det({2\pi} {\P}).
\label{eq:majoranteQ}
\end{multline}
As a consequence, for every $(\theta_A,\theta_P)>0$,
\begin{multline}
(\forall \A \in \mathbb{R}^{N_x \times N_x})(\forall \P \in \mathcal{S}_{N_x}) \quad 
\loss(\A,\P) \leq \maj(\Ab,\P;\widetilde{\A},\widetilde{\P}) + \loss_{1:K}(\widetilde{\A},\widetilde{\P}) - \maj(\widetilde{\A},\widetilde{\P};\widetilde{\A},\widetilde{\P})\\
 + \lambda_A \| \A\|_1 + \lambda_P \|\P\|_1 + \frac{1}{2 \theta_A} \|\A - \widetilde{\A}\|^2_F + \frac{1}{2 \theta_P} \|\P - \widetilde{\P}\|^2_F,
\label{eq:proxmaj}
\end{multline}
with equality holding for $\A = \widetilde{\A}$ and $\P = \widetilde{\P}$. 
\label{theo:maj}
\end{mytheorem}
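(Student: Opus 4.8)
The theorem has three separable claims, and I would prove them in the order stated. The first is the ``explicit'' rewriting \eqref{eq:lossexplicit} of $\loss_{1:K}$. The plan is to start from the joint density factorization of the LG-SSM. By the Markov structure, $p(\x_{0:K},\y_{1:K}|\A,\P) = p(\x_0)\prod_{k=1}^K p(\x_k|\x_{k-1},\A,\P)\prod_{k=1}^K p(\y_k|\x_k)$, and each state-transition factor is the Gaussian $\gauss(\x_k;\A\x_{k-1},\P^{-1})$, whose log contributes exactly the quadratic form $-\tfrac12(\x_k-\A\x_{k-1})^\top\P(\x_k-\A\x_{k-1})$ together with the normalizer $+\tfrac12\log\det(2\pi\P)$ (note $\log\det$ of the precision, not the covariance). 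Then I would apply Bayes' rule in the form $p(\y_{1:K}|\A,\P) = p(\x_{0:K},\y_{1:K}|\A,\P)/p(\x_{0:K}|\y_{1:K},\A,\P)$, take $-\log$, and collect terms. The crucial conceptual point, flagged in the footnote, is that the left-hand side is independent of $\x_{0:K}$, so the identity holds for \emph{any} fixed trajectory; this is what legitimizes treating $\x_{0:K}$ as free in \eqref{eq:lossexplicit}. I would verify that the sign conventions match (in particular that $\sum_k p(\y_k|\x_k)$ appears as written, presumably shorthand for $\sum_k \log p(\y_k|\x_k)$).

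**The majorization.** For the second claim I would use the standard EM/MM identity: evaluating \eqref{eq:lossexplicit} at the \emph{smoothing} distribution $p(\x_{0:K}|\y_{1:K},\widetilde{\A},\widetilde{\P})$ obtained from Algorithms~\ref{alg_kf}--\ref{alg_rts} run at $(\widetilde{\A},\widetilde{\P})$. Taking the expectation of \eqref{eq:lossexplicit} under this smoothing law, the term $\log p(\x_{0:K}|\y_{1:K},\A,\P)$ becomes $-$(a cross-entropy), and by Gibbs' inequality its expectation is maximized (hence $-\log$ minimized) at $\A=\widetilde{\A},\P=\widetilde{\P}$; this is precisely the gap that delivers the majorization inequality \eqref{eq:majorant}, while equality at $(\widetilde{\A},\widetilde{\P})$ gives the tangency condition \eqref{eq:tangency}. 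The quadratic-form term, after taking expectations, produces $\operatorname{tr}$ against the second-moment matrices; writing out $\mathbb{E}[\x_k\x_k^\top]$, $\mathbb{E}[\x_k\x_{k-1}^\top]$, $\mathbb{E}[\x_{k-1}\x_{k-1}^\top]$ under the smoother yields exactly $\widetilde{\Psib},\widetilde{\Deltab},\widetilde{\Phib}$ in \eqref{eq:psideltaphi}, where the cross-moment uses the RTS gain $\G_{k-1}$ via the identity $\operatorname{Cov}(\x_k,\x_{k-1}|\y_{1:K}) = \covSmooth_k\G_{k-1}^\top$. Collecting these gives \eqref{eq:majoranteQ}.

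**From surrogate to proximal bound.** The third claim \eqref{eq:proxmaj} is the easy bookkeeping step. Starting from $\loss = \loss_{1:K}+\lambda_A\|\A\|_1+\lambda_P\|\P\|_1$ and applying \eqref{eq:majorant} to $\loss_{1:K}$, I add and subtract the constant $\loss_{1:K}(\widetilde{\A},\widetilde{\P})-\maj(\widetilde{\A},\widetilde{\P};\widetilde{\A},\widetilde{\P})$, which is zero by tangency but is kept so that the bound is an equality at the expansion point, and append the two nonnegative proximal terms $\tfrac{1}{2\theta_A}\|\A-\widetilde{\A}\|_F^2$ and $\tfrac{1}{2\theta_P}\|\P-\widetilde{\P}\|_F^2$. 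Nonnegativity of these and of the majorization gap gives the inequality for all $(\A,\P)$, and every added term vanishes at $\A=\widetilde{\A},\P=\widetilde{\P}$, giving the claimed equality there.

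**Main obstacle.** The bookkeeping in \eqref{eq:proxmaj} is routine and the explicit-loss identity is a careful but standard Bayes/Markov computation. The real work is the second moment accounting in the majorization: one must correctly identify the expectations of the quadratic state terms under the \emph{joint} smoothing law — in particular the lag-one cross-covariance $\operatorname{Cov}(\x_k,\x_{k-1}|\y_{1:K})=\covSmooth_k\G_{k-1}^\top$, which is not immediate from the marginal smoothing outputs and must be extracted from the RTS recursion. Getting the index alignment of $\G_{k-1}$ versus $\G_k$ and the boundary terms at $k=0,K$ exactly right, so that the sums match \eqref{eq:psideltaphi}, is where I expect the care to be needed.
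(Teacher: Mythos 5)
Your proposal follows essentially the same route as the paper's proof: Bayes' rule plus the Markov/Gaussian factorization for the identity \eqref{eq:lossexplicit}, the EM-style expectation under the smoothing law at $(\widetilde{\A},\widetilde{\P})$ with Gibbs' inequality for the majorization and tangency, the lag-one smoother cross-covariance $\covSmooth_k \G_{k-1}^\top$ to obtain \eqref{eq:psideltaphi} (which the paper delegates to \citealp[Theorem 12.4]{Sarkka}), and the addition of the nonnegative proximal and $\ell_1$ terms for \eqref{eq:proxmaj}. The plan is correct and complete in outline; no gap to report.
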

\begin{proof}
See Appendix~\ref{sec:appendixA}.
\end{proof}

Theorem~\ref{theo:maj} allows to build, for any tangent point $(\widetilde{\A},\widetilde{\P})$, a majorizing approximation \eqref{eq:proxmaj} for $\mathcal{L}$. \cblue{Function \eqref{eq:proxmaj} depends on three matrices $(\cblue{\widetilde{\Psib}},\cblue{\widetilde{\Deltab}},\cblue{\widetilde{\Phib}})$ through \eqref{eq:psideltaphi}, themselves depending on the tangent point $(\widetilde{\A},\widetilde{\P})$, as highlighted by the tilde symbol.} DGLASSO method leverages this property designing a block alternating MM scheme. At each iteration $i \in \mathbb{N}$, two steps are conducted, {namely the update of (a) the transition matrix, (b) the noise precision matrix.} Each steps follows an MM structure, that is it first builds a majorizing approximation for $\mathcal{L}$ at the current estimates, using Theorem~\ref{theo:maj}, and then minimizes it with respect to the active variable ($\A$ in step (a), or $\P$ in step (b)). Processing the variables in two distinct steps allows to build upon the desirable convex structure of \eqref{eq:proxmaj} with respect to one of the variable, the other being fixed. The good performance of MM approaches combined with block alternating steps have been illustrated in \citep{Hong2016,Chouzenoux2016jogo,HienICML2020}. In particular, convergence guarantees are at reach, as we will show in Section \ref{sec:convergence}. 



\subsection{Resolution of the inner problems}
\label{sec:subproblems}

We now discuss the structure and resolution of the inner problems \eqref{eq:updateA} and \eqref{eq:updateP} arising in Algorithm~\ref{algorithm}. 

\paragraph{Minimization with respect to $\A$:}
Let $\widetilde{\A} \in \mathbb{R}^{N_x \times N_x}$ and $\widetilde{\P} \in \mathcal{S}_{N_x}^{++}$. By definition of the proximity operator \eqref{eq:defprox} and the majorant expression in \eqref{eq:majoranteQ}, Eq. \eqref{eq:updateA} \cblue{requires to}
\cblue{
\begin{equation}
\text{minimize}_{\A \in \mathbb{R}^{N_x \times N_x}}{\mathcal{C}_1(\A)}, \label{eq:updateAexplicit}
\end{equation}
where, for every $\A \in \mathbb{R}^{N_x \times N_x}$,
\begin{equation*}
\mathcal{C}_1(\A) \triangleq \frac{\theta_A K}{2} \text{tr}\left(\widetilde{\P}(  
\cblue{\widetilde{\Psib}} - \cblue{\widetilde{\Deltab}} \Ab^\top - \Ab \cblue{\widetilde{\Deltab}}^\top + \Ab \cblue{\widetilde{\Phib}} \Ab^\top) \right)  + \theta_A \lambda_A \|\A\|_1 + \frac{1}{2 } \| \A - \widetilde{\A}\|^2_F.
\end{equation*}
}


Remarkably, the problem above is a special instance of a multivariate Lasso regression problem \citep{Tibshirani}, for which many efficient iterative solvers are available. The specificity here is that the problem is strongly convex thanks to the proximal term. We thus suggest the use of the Dykstra-like algorithm by \cite{Dystra2008}, whose iterations are recalled in the Appendix \ref{sec:appendixB}. This method presents the advantage of fast convergence rate, ease of implementation, and no parameter tuning.  

\paragraph{Minimization with respect to $\P$:}
Let $\widetilde{\A} \in \mathbb{R}^{N_x \times N_x}$ and $\widetilde{\P} \in \mathcal{S}_{N_x}^{++}$. {The update of Eq. \eqref{eq:updateP} solves a minimization problem with generic form}
\cblue{
\begin{equation}
\text{minimize}_{\P \in \mathcal{S}_{N_x}} \mathcal{C}_2(\P), \label{eq:updatePexplicit}
\end{equation}
where, for every $\P \in \mathcal{S}_{N_x}$, we denote
\begin{equation*}
\mathcal{C}_2(\P) \triangleq
\frac{\theta_P K}{2} \text{tr}\left(\P \cblue{\widetilde{\Pib}} \right) - \frac{\theta_P K}{2}\log \det{{\P}} + \theta_P \lambda_P \| \P \|_1  + \frac{1}{2 } \| \P - \widetilde{\P}\|^2_F,
\end{equation*}
}
\begin{equation}
\cblue{\widetilde{\Pib}} \triangleq 
\cblue{\widetilde{\Psib}} - \cblue{\widetilde{\Deltab}} \widetilde{\Ab}^\top - \widetilde{\Ab}  \cblue{\widetilde{\Deltab}}^\top + \widetilde{\Ab}  \cblue{\widetilde{\Phib}} \widetilde{\Ab} ^\top.
\label{eq:Pib}
\end{equation}
{Here we have used the definition of the proximity operator \eqref{eq:defprox} and the majorant expression in \eqref{eq:majoranteQ} (ignoring the constant multiplicative term in the logarithm).} 
 Remarkably, \eqref{eq:updatePexplicit} reads as a regularized form of the famous GLASSO problem \citep{Friedman07}, and gets actually equivalent to it when $\theta_P \to \infty$. Matrix $\cblue{\widetilde{\Pib}}$ in \eqref{eq:Pib} plays the same role as the empirical covariance matrix in GLASSO, and $\frac{2 \lambda_P}{K}$ acts as the weight on the $\ell_1$ term. The proximal term works as a Tikhonov-like regularizer, ensuring the strong convexity of the problem, and thus the uniqueness of its minimizer. {Moreover, by the definition of the log-determinant in Eq. \eqref{eq:logdet}}, the solution of \eqref{eq:updatePexplicit} belongs to $\P \in \mathcal{S}_{N_x}^{++}$, i.e., the precision matrix is symmetric and invertible, and thus a valid covariance matrix can be deduced from it by inversion. Standard GLASSO solvers can be easily modified to solve \eqref{eq:updatePexplicit}. We present in the Appendix \ref{sec:appendixB} the complete derivations, when applying the Dykstra-like algorithm from \cite{Dystra2008}, which presents the advantage of fast convergence, and ease of implementation.

\subsection{\cblue{Discussion}}

\paragraph{\cblue{Strengths and weaknesses of the MM method:}} \cblue{As highlighted in our introductory section, state-of-the-art approaches for SSM parameter estimation are grounded either on the EM (or MM) framework or on gradient (or Newton) updates (through sensitivity equations). In the present work, we adopt the former strategy. We here discuss the pros and cons for such choice for the minimization of \eqref{eq:phik}, some aspects being also illustrated by the numerical experiments presented in Section \ref{sec:complex}.}

\cblue{A potential limitation of the proposed minimization algorithm is its computational complexity. From Theorem~\ref{theo:maj}, the construction of the majorizing function at each tangent point $(\widetilde{\A},\widetilde{\P})$ requires to run the KF Algorithm~\ref{alg_kf} and the RTS Algorithm~\ref{alg_rts}. This is indeed necessary for the computation, using \eqref{eq:psideltaphi}, of the three matrices, $(\cblue{\widetilde{\Psib}},\cblue{\widetilde{\Deltab}},\cblue{\widetilde{\Phib}})$. This feature was already present in previous EM-based approaches for LG-SSM parameter estimations (see for instance \cite[Chap. 12]{Sarkka} and \cite{elvira2022graphem}). Due to its block alternating form, the proposed DGLASSO algorithm requires to build twice per iteration a majorizing approximation, which means that KF/RTS are ran twice per iteration. Moreover, inner solvers are required, for the minimization of the majorant functions, increasing again the method complexity. In contrast, gradient-based methods relying on sensitivity equations \citep{Gupta1974} only require the KF recursions (and not the RTS ones) for building their updates, and do not make use of inner iterations, which might seem appealing from the computational viewpoint.}
 
\cblue{Gradient-based approaches however face some difficulties. The $\ell_1$ norm is non differentiable, preventing from the use of standard first (or second) order minimization techniques as those proposed in \cite{Olsson}. This could be fixed either by smoothing the $\ell_1$ terms at the price of adding extra hyperparameters or by implementing suitable updates (e.g., subgradient or proximal operations). An additional issue comes from the logarithmic determinant term on $\P$, involved in the likelihood term \eqref{eq:lossexplicit}. This makes the definition domain of the loss function restricted to the set of symmetric positive definite matrices $\P$. Moreover, onto this set, the gradient loss is not Lipschitz, and its norm explodes for matrices $\P$ close to non invertible. DGLASSO accounts for the singularity of the log-det term thanks to proximal-based updates in the inner solver of Algorithm~\ref{algo:ProxP} detailed in Appendix \ref{sec:appendixB}. In contrast, gradient-based solvers require a tedious manual stepsize tuning (typically, backtracking linesearch), with no proved guarantee of the existence of a valid stepsize. As a consequence, we observed in our experiments in Sec.~\ref{sec:complex} that a very large number of iterations was required to reach stability for these methods, which largely undermines their apparent advantage in terms of iteration complexity. }


\cblue{Finally, an important feature of the proposed method is that it inherits the sound convergence properties from the MM paradigm, as detailed in our Section \ref{sec:convergence}. DGLASSO iterates are proved to reach a cluster point of the MAP loss function. No such results can be derived for the gradient-based competitors, due to the non-convexity of the loss, and the singularity of its gradient, as we already mentioned.  
}


\paragraph{\cblue{Regularization parameters tuning:}} \cblue{DGLASSO formulation requires the setting of the weights $(\lambda_A,\lambda_P)$ balancing the sparsity prior and the data fidelity. The automatic tuning of DGLASSO regularization hyperparameters in the context of real data is a challenging problem. Among promising avenues, let us mention several recent approaches based on supervised learning, either implementing deep learning architectures~\citep{Shrivastava2020,Revach2022,Buchnik2023,Shrivastava2022}, or bi-level approaches~\citep{Pouliquen2023,Franceshi2017,Bertrand2022} combined with SURE statistical estimators~\citep{Deledalle2014,Luo2014}. Note however that the aforementioned works consider either static graphical models such as GLASSO, or Kalman filtering without graph formulation. Their extension to our DGLASSO model remains an open question that we leave as a future research line. In our experimental section, synthetic data will be considered, allowing an empirical finetuning through qualitative metrics computed on the ground truth model.} 

\section{Convergence analysis}
\label{sec:convergence}

We now present our convergence proof for the proposed DGLASSO algorithm presented in Algorithm~\ref{algorithm}. Our analysis is made assuming that the inner steps \eqref{eq:updateA} and \eqref{eq:updateP} are solved in an exact manner. The extension of the result to the case of an inexact resolution of the subproblems is discussed at the end of the section. 

\subsection{Descent property}

\begin{mylemma}
Assuming exact resolution of \eqref{eq:updateA} and \eqref{eq:updateP}, the sequence $\{\A^{(i)},\P^{(i)}\}_{i \in \mathbb{N}}$ produced by \acro~algorithm satisfies 
\begin{equation}
(\forall i \in \mathbb{N}) \quad \loss(\A^{(i+1)},\P^{(i+1)}) \leq \loss(\A^{(i)},\P^{(i)}). \label{eq:desc0}
\end{equation}
\label{lemma:descent}
\end{mylemma}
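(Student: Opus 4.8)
The plan is to establish the descent inequality \eqref{eq:desc0} by chaining together the majorization properties from Theorem~\ref{theo:maj} with the optimality of the two inner minimization steps. The key observation is that each of the two block updates \eqref{eq:updateA} and \eqref{eq:updateP} minimizes a function that majorizes $\loss$ at the current iterate and is tangent to it there, so each update cannot increase the loss. I would carry this out in two symmetric half-steps, treating the $\A$-update and the $\P$-update in turn, and then concatenating the two resulting inequalities.

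First I would analyze the $\A$-update. Set $\widetilde{\A} = \A^{(i)}$ and $\widetilde{\P} = \P^{(i)}$, and let $\maj_A(\A) \triangleq \maj(\A,\P^{(i)};\A^{(i)},\P^{(i)}) + \lambda_A\|\A\|_1 + \tfrac{1}{2\theta_A}\|\A - \A^{(i)}\|_F^2$ denote the surrogate minimized in \eqref{eq:updateA}, so that $\A^{(i+1)} \in \argmin{\A}{\maj_A(\A)}$. By the majorization bound \eqref{eq:proxmaj} specialized to $\P = \P^{(i)}$ (where the two extra constant terms $\loss_{1:K}(\widetilde{\A},\widetilde{\P}) - \maj(\widetilde{\A},\widetilde{\P};\widetilde{\A},\widetilde{\P})$ vanish by the tangency condition \eqref{eq:tangency}), we have $\loss(\A,\P^{(i)}) \leq \maj_A(\A)$ for all $\A$, with equality at $\A = \A^{(i)}$. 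Evaluating the majorant at the minimizer and using optimality, $\loss(\A^{(i+1)},\P^{(i)}) \leq \maj_A(\A^{(i+1)}) \leq \maj_A(\A^{(i)}) = \loss(\A^{(i)},\P^{(i)})$, where the last equality uses that the proximal term vanishes at $\A^{(i)}$ and that tangency makes the surrogate agree with $\loss$ there. This yields $\loss(\A^{(i+1)},\P^{(i)}) \leq \loss(\A^{(i)},\P^{(i)})$.

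Next I would repeat the identical argument for the $\P$-update, now with tangent point $\widetilde{\A} = \A^{(i+1)}$ and $\widetilde{\P} = \P^{(i)}$, using the surrogate minimized in \eqref{eq:updateP}. The same three-step reasoning — majorization, optimality of the minimizer $\P^{(i+1)}$, and tangency at $\P^{(i)}$ — gives $\loss(\A^{(i+1)},\P^{(i+1)}) \leq \loss(\A^{(i+1)},\P^{(i)})$. Chaining this with the inequality from the $\A$-step yields $\loss(\A^{(i+1)},\P^{(i+1)}) \leq \loss(\A^{(i+1)},\P^{(i)}) \leq \loss(\A^{(i)},\P^{(i)})$, which is precisely \eqref{eq:desc0}.

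The main technical point to handle carefully — rather than a genuine obstacle — is ensuring that Theorem~\ref{theo:maj} applies at each step, which requires the tangent points to lie in the interior of the domain, i.e., $\P^{(i)} \in \mathcal{S}_{N_x}^{++}$ so that the log-determinant term and the KF/RTS recursions defining $(\widetilde{\Psib},\widetilde{\Deltab},\widetilde{\Phib})$ are well defined. This is guaranteed by the construction noted in the text: the log-det barrier in \eqref{eq:updatePexplicit} forces every minimizer $\P^{(i+1)}$ to remain in $\mathcal{S}_{N_x}^{++}$, so the iterates stay in the interior and the majorant is legitimately built at every iteration. I would state this invariant explicitly before running the chaining argument so that each application of \eqref{eq:proxmaj} and \eqref{eq:tangency} is justified.
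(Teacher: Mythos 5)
Your proof is correct and follows essentially the same route as the paper's: two half-steps, each combining the majorization/tangency property of Theorem~\ref{theo:maj} with the optimality of the corresponding inner minimizer, then chaining $\loss(\A^{(i+1)},\P^{(i+1)}) \leq \loss(\A^{(i+1)},\P^{(i)}) \leq \loss(\A^{(i)},\P^{(i)})$. Your explicit remark that the iterates remain in $\mathcal{S}_{N_x}^{++}$ (so the majorant is well defined at each tangent point) is a welcome addition that the paper only notes in the surrounding text.
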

\begin{proof}
Let $i \in \mathbb{N}$. First, let us \cblue{use the} Theorem~\ref{theo:maj} at $(\widetilde{\A},\widetilde{\P}) = (\A^{(i+1)},\P^{(i)})$ (Inequality (a)), and the definition of $\P^{(i+1)}$ in \eqref{eq:updateP} (Inequality (b)) as
\begin{align}
\loss(\A^{(i+1)},\P^{(i+1)}) & \overset{(a)}{\leq} \maj(\A^{(i+1)},\P^{(i+1)};\A^{(i+1)},\P^{(i)}) + \loss_{1:K}(\A^{(i+1)},\P^{(i)}) - \maj(\A^{(i+1)},\P^{(i)};\A^{(i+1)},\P^{(i)}) \nonumber \\
& + \lambda_A \| \A^{(i+1)}\|_1 + \lambda_P \|\P^{(i+1)}\|_1 + \frac{1}{2 \theta_A} \|\A^{(i+1)} - \A^{(i+1)}\|^2_F + \frac{1}{2 \theta_P} \|\P^{(i+1)} - \P^{(i)}\|^2_F\\
& \overset{(b)}{\leq} \maj(\A^{(i+1)},\P^{(i)};\A^{(i+1)},\P^{(i)}) + \loss_{1:K}(\A^{(i+1)},\P^{(i)}) - \maj(\A^{(i+1)},\P^{(i)};\A^{(i+1)},\P^{(i)}) \nonumber \\
 & + \lambda_A \| \A^{(i+1)}\|_1 + \lambda_P \|\P^{(i)}\|_1 + \frac{1}{2 \theta_A} \|\A^{(i+1)} - \A^{(i+1)}\|^2_F + \frac{1}{2 \theta_P} \|\P^{(i)} - \P^{(i)}\|^2_F. \label{eq:ineqdesc1}
\end{align}
Inequality~\eqref{eq:ineqdesc1} simplifies into
\begin{equation}
\loss(\A^{(i+1)},\P^{(i+1)}) \leq \loss_{1:K}(\A^{(i+1)},\P^{(i)}) + \lambda_A \| \A^{(i+1)}\|_1 + \lambda_P \|\P^{(i)}\|_1 = \loss(\A^{(i+1)},\P^{(i)}).
\end{equation}
Applying Theorem~\ref{theo:maj} now at $(\widetilde{\A},\widetilde{\P}) = (\A^{(i)},\P^{(i)})$ (Inequality (a)), and the definition of $\A^{(i+1)}$ in \eqref{eq:updateA} (Inequality (b)), leads to
\begin{align}
\loss(\A^{(i+1)},\P^{(i)}) & \overset{(a)}{\leq} \maj(\A^{(i+1)},\P^{(i)};\A^{(i)},\P^{(i)}) + \loss_{1:K}(\A^{(i)},\P^{(i)}) - \maj(\A^{(i)},\P^{(i)};\A^{(i)},\P^{(i)}) \nonumber \\
& + \lambda_A \| \A^{(i+1)}\|_1 + \lambda_P \|\P^{(i)}\|_1 + \frac{1}{2 \theta_A} \|\A^{(i+1)} - \A^{(i)}\|^2_F + \frac{1}{2 \theta_P} \|\P^{(i)} - \P^{(i)}\|^2_F\\
& \overset{(b)}{\leq} \maj(\A^{(i)},\P^{(i)};\A^{(i)},\P^{(i)}) + \loss_{1:K}(\A^{(i)},\P^{(i)}) - \maj(\A^{(i)},\P^{(i)};\A^{(i)},\P^{(i)}) \nonumber \\
 & + \lambda_A \| \A^{(i)}\|_1 + \lambda_P \|\P^{(i)}\|_1 + \frac{1}{2 \theta_A} \|\A^{(i)} - \A^{(i)}\|^2_F + \frac{1}{2 \theta_P} \|\P^{(i)} - \P^{(i)}\|^2_F,
\end{align}
which simplifies into
\begin{equation}
\loss(\A^{(i+1)},\P^{(i)}) \leq \loss_{1:K}(\A^{(i)},\P^{(i)}) + \lambda_A \| \A^{(i)}\|_1 + \lambda_P \|\P^{(i)}\|_1 = \loss(\A^{(i)},\P^{(i)}),
\end{equation}
and concludes the proof.
\end{proof}
If the cost function $\loss$ is lower bounded (e.g., if it is coercive), Lemma \ref{lemma:descent} implies the convergence of sequence $ \left\{\loss(\A^{(i)},\P^{(i)})\right\}_{i \in \mathbb{N}}$ to a finite value and, as such, the existence of cluster points in $\{\A^{(i)},\P^{(i)}\}_{i \in \mathbb{N}}$. This is however a rather weak convergence result and we propose hereafter a thorough analysis relying on recent tools of nonlinear analysis \citep{Attouch2010,BoltePALM} combined with the works \citep{phan2023inertial,HienICML2020} on the convergence of block alternating MM schemes. 

\subsection{Convergence guarantees}

\begin{mytheorem}
Consider the sequence $\{\A^{(i)},\P^{(i)}\}_{i \in \mathbb{N}}$ generated by DGLASSO, assuming exact resolution of both inner steps \eqref{eq:updateA} and \eqref{eq:updateP}. If the sequence $\{\A^{(i)},\P^{(i)}\}_{i \in \mathbb{N}}$ is bounded, then $\{\A^{(i)},\P^{(i)}\}_{i \in \mathbb{N}}$ converges to a critical point of $\loss$.
\label{theo:convergence}
\end{mytheorem}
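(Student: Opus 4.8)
The plan is to verify the three ingredients required by the Kurdyka--\L ojasiewicz (KL) convergence machinery for descent schemes (as in \citep{Attouch2010,BoltePALM}, and for the block alternating majorization--minimization variant in \citep{HienICML2020,phan2023inertial}): a sufficient-decrease inequality, a relative-error (subgradient) bound, and the KL property of $\loss$, and then to invoke the corresponding abstract convergence theorem. Throughout, I would exploit that the two inner problems minimize surrogates built through Theorem~\ref{theo:maj}.

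First I would strengthen Lemma~\ref{lemma:descent}. Each inner problem \eqref{eq:updateA}--\eqref{eq:updateP} minimizes a surrogate that is convex in its active block (the quadratic part of \eqref{eq:majoranteQ} has a positive semidefinite Hessian, since $\P^{(i)}\succ 0$ and $\widetilde{\Phib}\succeq 0$ for the $\A$-step, while $-\log\det$ is convex for the $\P$-step) and is made $\theta_A^{-1}$-strongly convex (resp. $\theta_P^{-1}$-strongly convex) by the proximal term. Writing the strong-convexity inequality at the minimizer and combining it with the majorization/tangency step already used in Lemma~\ref{lemma:descent} upgrades the plain descent to the sufficient-decrease condition
\begin{equation}
\loss(\A^{(i+1)},\P^{(i+1)}) \leq \loss(\A^{(i)},\P^{(i)}) - \tfrac{1}{\theta_A}\|\A^{(i+1)}-\A^{(i)}\|_F^2 - \tfrac{1}{\theta_P}\|\P^{(i+1)}-\P^{(i)}\|_F^2 .
\end{equation}
Telescoping and using that $\loss$ is bounded below on the bounded trajectory yields $\sum_i (\|\A^{(i+1)}-\A^{(i)}\|_F^2 + \|\P^{(i+1)}-\P^{(i)}\|_F^2) < \infty$, so successive differences vanish.

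Next I would confine the iterates to a compact subset $\mathcal{K}$ of the open domain $\Real^{N_x\times N_x}\times\mathcal{S}_{N_x}^{++}$. Boundedness of $\{\A^{(i)},\P^{(i)}\}$ is assumed; moreover, since $\loss$ is nonincreasing and the $-\log\det$ barrier forces $\loss\to+\infty$ as $\P$ approaches a singular matrix, the $\P^{(i)}$ stay uniformly bounded away from the boundary of $\mathcal{S}_{N_x}^{++}$. On such $\mathcal{K}$, the loss $\loss_{1:K}$ is $C^1$ with Lipschitz gradient and the quadratic majorant has uniformly bounded curvature. This is what enables the relative-error bound: from the first-order optimality conditions of \eqref{eq:updateA}--\eqref{eq:updateP}, using that a differentiable majorant touching $\loss_{1:K}$ at its tangent point must have matching gradient there (a consequence of \eqref{eq:majorant}--\eqref{eq:tangency}), and adding and subtracting $\nabla\loss_{1:K}$ evaluated at the tangent point, I would exhibit an element $d^{(i+1)}\in\partial\loss(\A^{(i+1)},\P^{(i+1)})$ with
\begin{equation}
\|d^{(i+1)}\|_F \leq b\left(\|\A^{(i+1)}-\A^{(i)}\|_F + \|\P^{(i+1)}-\P^{(i)}\|_F\right),
\end{equation}
where $b$ depends on $\theta_A,\theta_P$, the majorant curvature and the Lipschitz constant of $\nabla\loss_{1:K}$ on $\mathcal{K}$.

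Finally I would establish the KL property of $\loss$. Through the recursions of Algorithm~\ref{alg_kf}, the predictive covariances $\S_k$ are rational in the entries of $(\A,\P)$ on the domain, so via \eqref{eq:lossimplicit} the term $\loss_{1:K}$ is real-analytic (composition with the analytic $\log\det$ on the positive-definite cone), while $\|\A\|_1$ and $\|\P\|_1$ are piecewise linear hence semialgebraic; the sum is definable in an o-minimal structure and therefore satisfies the KL inequality at every point of its domain. With the sufficient decrease, the relative-error bound, lower semicontinuity of $\loss$ and its continuity on the compact $\mathcal{K}\subset\operatorname{int}\operatorname{dom}\loss$, and the KL property in hand, the abstract block alternating descent theorem (\citep{HienICML2020,phan2023inertial}) gives finite trajectory length, $\sum_i(\|\A^{(i+1)}-\A^{(i)}\|_F+\|\P^{(i+1)}-\P^{(i)}\|_F)<\infty$, hence convergence of $\{\A^{(i)},\P^{(i)}\}$ to a single limit $(\A^\star,\P^\star)$; closedness of the graph of $\partial\loss$ together with $d^{(i+1)}\to 0$ then forces $0\in\partial\loss(\A^\star,\P^\star)$, i.e. a critical point. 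I expect the main obstacle to be the relative-error bound, precisely the bookkeeping imposed by the block structure: the $\P$-surrogate is tangent at $(\A^{(i+1)},\P^{(i)})$ whereas the subgradient must be read at $(\A^{(i+1)},\P^{(i+1)})$, so the estimate hinges on having already trapped the iterates in a compact set on which $\nabla\loss_{1:K}$ is Lipschitz and, crucially, on which $\P$ stays away from singularity so that the gradient $-\P^{-1}$ of the log-determinant term remains controlled.
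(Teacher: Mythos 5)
Your proposal is correct and follows essentially the same route as the paper: the paper likewise casts DGLASSO as a two-block inertial-free instance of the TITAN block alternating MM scheme of \citet{phan2023inertial} and verifies exactly the ingredients you list (strong convexity of the proximally regularized surrogates giving sufficient decrease, gradient coincidence at the tangent point plus local Lipschitzness giving the relative-error condition, and the Kurdyka--\L{}ojasiewicz property of $\loss$) before invoking the abstract convergence theorems. The only differences are cosmetic — you re-derive the descent-lemma ingredients by hand rather than checking TITAN's named assumptions, you make explicit the (correct, and in fact glossed over in the paper) argument that the $-\log\det$ barrier keeps $\P^{(i)}$ away from singularity, and your sufficient-decrease constant should be $1/(2\theta_A)$ and $1/(2\theta_P)$ rather than $1/\theta_A$ and $1/\theta_P$.
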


\begin{proof}
The convergence analysis relies in proving that the exact form of DGLASSO algorithm is a special instance of TITAN algorithm from \citep{phan2023inertial}, and as such, inherits \citep[Theorem 6]{phan2023inertial} and \citep[Theorem 8]{phan2023inertial}, under our assumptions.

$\bullet$ Let us introduce the following notations. 
\begin{multline}
(\forall \A \in \mathbb{R}^{N_x \times N_x})(\forall \P \in \mathcal{S}_{N_x})  \quad
f(\A,\P) =  \frac{{1}}{2}\sum_{k=1}^K \left(\left(\x_k - \A \x_{k-1} \right)^\top \P  \left(\x_k - \A \x_{k-1}  \right) \right) 
\\ + \log p(\x_{0:K} | \y_{1:K},\A,\P) - \log p(\x_0) - \sum_{k=1}^K \log p(\y_k | \x_k),
\label{eq:lossminuslog}
\end{multline}
\begin{align}
(\forall \A \in \mathbb{R}^{N_x \times N_x}) & \quad g_1(\A) = \lambda_A \|\A\|_1,\\
(\forall \P \in \mathcal{S}_{N_x}) & \quad g_2(\P) = - \frac{K}{2} \log \det(2 \pi \P) + \lambda_P \| \P\|_1,
\end{align}
so that
\begin{equation}
(\forall \A \in \mathbb{R}^{N_x \times N_x})(\forall \P \in \mathcal{S}_{N_x}) \quad \loss(\A,\P) = f(\A,\P) + g_1(\A) + g_2(\P),
\end{equation}
with $f$ lower semi-continuous function, $g_1 \in \Gamma_0(\mathbb{R}^{N_x \times N_x})$ and $g_2 \in \Gamma_0(\mathcal{S}_{N_x})$. Moreover, let us denote
 \begin{equation}
(\forall \A \in \mathbb{R}^{N_x \times N_x})(\forall \P \in \mathcal{S}_{N_x})  \quad
\mathcal{J}(\A,\P) =   \frac{K}{2} \text{tr}\left({\P}(  
\cblue{\widetilde{\Psib}} - \cblue{\widetilde{\Deltab}} \Ab^\top - \Ab \cblue{\widetilde{\Deltab}}^\top + \Ab \cblue{\widetilde{\Phib}} \Ab^\top) \right),
\label{eq:quadmajor}
\end{equation}
and, for every $\widetilde{\A} \in \mathbb{R}^{N_x \times N_x}$ and $\widetilde{\P} \in \mathcal{S}_{N_x}^{++}$,
\begin{equation}
(\forall \A \in \mathbb{R}^{N_x \times N_x})  \quad u_1(\A ; \widetilde{\A}, \widetilde{\P}) = \mathcal{J}(\A, \widetilde{\P}) + \frac{1}{2 \theta_A} \| \A - \widetilde{\A}\|^2_F + f(\widetilde{\A},\widetilde{\P}) - \mathcal{J}(\widetilde{\A},\widetilde{\P}),
\label{eq:u1}
\end{equation}
\begin{equation}
(\forall \P \in \mathcal{S}_{N_x}) \quad u_2(\P ; \widetilde{\A}, \widetilde{\P}) = \mathcal{J}(\widetilde{\A},\P) + \frac{1}{2 \theta_P} \| \P - \widetilde{\P}\|^2_F + f(\widetilde{\A},\widetilde{\P}) - \mathcal{J}(\widetilde{\A},\widetilde{\P}). \label{eq:u2}
\end{equation}
By Theorem~\ref{theo:maj}, the following majorization properties hold for every $\widetilde{\A} \in \mathbb{R}^{N_x \times N_x}$ and $\widetilde{\P} \in \mathcal{S}_{N_x}^{++}$,
\begin{align}
\label{eq:maju1}
(\forall \A \in \mathbb{R}^{N_x \times N_x}) \quad u_1(\A ; \widetilde{\A}, \widetilde{\P}) & \geq f(\A,\widetilde{\P}),\\
(\forall  \P \in \mathcal{S}_{N_x}) \quad u_2(\P; \widetilde{\A}, \widetilde{\P}) & \geq f(\widetilde{\A},\P),
\label{eq:maju2}
\end{align}
and we have the tangency condition, for every $\widetilde{\A} \in \mathbb{R}^{N_x \times N_x}$ and $\widetilde{\P} \in \mathcal{S}_{N_x}^{++}$,
\begin{align}
u_1(\widetilde{\A} ; \widetilde{\A}, \widetilde{\P}) & = f(\widetilde{\A},\widetilde{\P}), \label{eq:tangentu1}\\
u_2(\widetilde{\P} ; \widetilde{\A}, \widetilde{\P}) & = f(\widetilde{\A},\widetilde{\P}).
\label{eq:tangentu2}
\end{align}
Then, straightforward computations allow to rewrite the iterates of Algorithm \ref{algorithm} as follows:
\begin{equation}
(\forall i \in \mathbb{N}) \quad
\begin{cases}
\A^{(i+1)} = \argmin{\A \in \mathbb{R}^{N_x \times N_x}}{u_1(\A ; \A^{(i)}, \P^{(i)}) + g_1(\A)},\\
\P^{(i+1)} = \argmin{\P \in \mathcal{S}_{N_x}}{u_2(\P ; \A^{(i+1)}, \P^{(i)}) + g_2(\P)},\\
\end{cases}
\label{eq:titan}
\end{equation}
which identifies with the iterative scheme TITAN from \cite{phan2023inertial}, in the case of two blocks and setting the extrapolation step to zero. The rest of the proof amounts to check the fulfillment of the assumptions required for \cite[Theorem 6]{phan2023inertial} and \cite[Theorem 8]{phan2023inertial}.

$\bullet$ Let us denote, for every $\widetilde{\A} \in \mathbb{R}^{N_x \times N_x}$ and $\widetilde{\P} \in \mathcal{S}_{N_x}^{++}$,
\begin{align}
(\forall \A \in \mathbb{R}^{N_x \times N_x}) \quad & \tilde{u}_1(\A ; \widetilde{\A}, \widetilde{\P})  = u_1(\A ; \widetilde{\A}, \widetilde{\P}) + g_1(\A),\\
(\forall  \P \in \mathcal{S}_{N_x}) \quad & \tilde{u}_2(\P; \widetilde{\A}, \widetilde{\P})
  = u_2(\P; \widetilde{\A}, \widetilde{\P}) + g_2(\P).
\end{align}
Functions $u_1$ and $u_2$ are quadratic and strongly convex with respective strong convexity constants $\theta_A^{-1}$ and $\theta_P^{-1}$. Since both $g_1$ and $g_2$ are convex, functions $\tilde{u}_1$ and $\tilde{u}_2$ are also strongly convex, with respective strong convexity constants $\theta_A^{-1}$ and $\theta_P^{-1}$. Let $i \in \mathbb{N}$. According to the optimality conditions of both equations in \eqref{eq:titan}, there exists $\mathbf{T}_1^{(i+1)} \in \partial \tilde{u}_1 (\A^{(i+1)}; \A^{(i)}, \P^{(i)}) \in \mathbb{R}^{N_x \times N_x}$ and $\mathbf{T}_2^{(i+1)} \in \partial \tilde{u}_2 (\P^{(i+1)}; \A^{(i+1)}, \P^{(i)}) \in \mathcal{S}_{N_x}$ such that
\begin{equation}
\begin{cases}
\text{tr}\left( \mathbf{T}_1^{(i+1)} (\A^{(i)} - \A^{(i+1)}) \right) \geq 0,\\
\text{tr}\left( \mathbf{T}_2^{(i+1)} (\P^{(i)} - \P^{(i+1)}) \right) \geq 0.
\end{cases}
\label{eq:fermatpositive}
\end{equation}
Moreover, by strong convexity of both $\tilde{u}_1$ and $\tilde{u}_2$,
\begin{equation}
\begin{cases}
\tilde{u}_1(\A^{(i)}; \A^{(i)}, \P^{(i)}) \geq \tilde{u}_1(\A^{(i+1)}; \A^{(i)}, \P^{(i)}) + 
\text{tr}\left( \mathbf{T}_1^{(i+1)} (\A^{(i)} - \A^{(i+1)}) \right) \\
\phantom{\tilde{u}_1(\A^{(i)}; \A^{(i)}, \P^{(i)}) \geq \tilde{u}_1(\A^{(i+1)}; \A^{(i)}, \P^{(i)}) + 
} + \frac{1}{2 \theta_A} \|\A^{(i+1)} - \A^{(i+1)}\|^2_F,\\
\tilde{u}_2(\P^{(i)}; \A^{(i+1)}, \P^{(i)}) \geq \tilde{u}_2(\P^{(i+1)}; \A^{(i+1)}, \P^{(i)}) + 
\text{tr}\left( \mathbf{T}_2^{(i+1)} (\P^{(i)} - \P^{(i+1)}) \right) \\
\phantom{\tilde{u}_1(\A^{(i)}; \A^{(i)}, \P^{(i)}) \geq \tilde{u}_1(\A^{(i+1)}; \A^{(i)}, \P^{(i)}) + 
}+ \frac{1}{2 \theta_P} \|\P^{(i+1)} - \P^{(i+1)}\|^2_F.\\
\end{cases}
\end{equation}
Hence, using \eqref{eq:fermatpositive},
\begin{equation}
\begin{cases}
\tilde{u}_1(\A^{(i)}; \A^{(i)}, \P^{(i)}) \geq \tilde{u}_1(\A^{(i+1)}; \A^{(i)}, \P^{(i)}) + \frac{1}{2 \theta_A} \|\A^{(i+1)} - \A^{(i+1)}\|^2_F,\\
\tilde{u}_2(\P^{(i)}; \A^{(i+1)}, \P^{(i)}) \geq \tilde{u}_2(\P^{(i+1)}; \A^{(i+1)}, \P^{(i)}) + \frac{1}{2 \theta_P} \|\P^{(i+1)} - \P^{(i+1)}\|^2_F,
\end{cases}
\end{equation}
and, using \eqref{eq:maju1}-\eqref{eq:maju2}-\eqref{eq:tangentu1}-\eqref{eq:tangentu2},
\begin{equation}
\begin{cases}
\loss(\A^{(i)}, \P^{(i)}) \geq \loss(\A^{(i+1)}, \P^{(i)}) + \frac{1}{2 \theta_A} \|\A^{(i+1)} - \A^{(i+1)}\|^2_F,\\
\loss(\A^{(i+1)}, \P^{(i)}) \geq \loss(\A^{(i+1)}, \P^{(i+1)}) + \frac{1}{2 \theta_P} \|\P^{(i+1)} - \P^{(i+1)}\|^2_F.
\end{cases}
\label{eq:ndsp}
\end{equation}
It means that the so-called NDSP (nearly sufficiently decreasing property) condition from \citep{phan2023inertial} holds with, for every $i \in \mathbb{N}$, $(\gamma_1^{(i)},\gamma_2^{(i)},\eta_1^{(i)},\eta_2^{(i)}) \equiv (0,0,\theta_A^{-1},\theta_P^{-1})$. Remark that our proof for \eqref{eq:ndsp} constitutes an alternative proof for Lemma \ref{lemma:descent}.

$\bullet$ According to \citep{Gupta1974}, function $f$ is continuously differentiable. As $g_1 \in \Gamma_0(\mathbb{R}^{N_x \times N_x})$ and $g_2 \in \Gamma_0(\mathcal{S}_{N_x})$, we have for every $\A \in \mathbb{R}^{N_x \times N_x}$ and $\P \in \mathcal{S}_{N_x}$,
\begin{equation}
\begin{cases}
\partial_A \left(f(\A,\P) + g_1(\A) \right) = \nabla_A f(\A,\P) + \partial g_1(\A), \\
\partial_P \left(f(\A,\P) + g_2(\P) \right) = \nabla_P f(\A,\P) + \partial g_2(\P), \\
\end{cases}
\end{equation}
so that \citep[Assumption 3(i)]{phan2023inertial} holds. Moreover, by construction of the majorizing function $\mathcal{J}$, it is also continuously differentiable and we have the coincidence of the gradient at the majorization point, namely for every $\widetilde{\A} \in \mathbb{R}^{N_x \times N_x}$ and $\widetilde{\P} \in \mathcal{S}_{N_x}^{++}$,
\begin{align}
\nabla_A f(\widetilde{\A},\widetilde{\P}) = \nabla  \tilde{u}_1(\widetilde{\A};\widetilde{\A},\widetilde{\P}),\\
\nabla_P f(\widetilde{\A},\widetilde{\P}) = \nabla  \tilde{u}_2 (\widetilde{\P};\widetilde{\A},\widetilde{\P}).
\end{align}
Thus, according to \citep[Lem.2]{phan2023inertial}, \citep[Assumption 2]{phan2023inertial} is verified. Moreover, as the restriction of $\mathcal{J}$ to both of its variables is quadratic, the partial gradients $\nabla  u_1$ and $\nabla u_2$ are linear and thus Lipschitz continuous on any bounded subset of $\mathbb{R}^{N_x \times N_x}$ and $\mathcal{S}_{N_x}$, respectively, which yields the fulfillment of \citep[Assumption 3]{phan2023inertial}.  

$\bullet$ Since NSDP and \cite[Assumption 2]{phan2023inertial} hold, the result in \cite[Theorem 6]{phan2023inertial} allows to show that, if the sequence $\{\A^{(i)},\P^{(i)}\}_{i \in \mathbb{N}}$ is bounded, then every limit point $(\A^*,\P^*)$ of it is a critical point of $\loss$. Moreover, \cite[Assumption 3]{phan2023inertial} is satisfied and \citep[Cond. 1]{phan2023inertial} and \citep[Cond. 4]{phan2023inertial} are trivially met in our case. We can also show that the loss function $\loss$ satisfies the Kurdyka-\L{}ojasiewicz property \citep{BoltePALM} (using a similar proof than \citep[Lemma 2]{LauJMIV}). Thus, \cite[Theorem 8]{phan2023inertial} holds which concludes our proof.

\end{proof}

\subsection{Discussion}
\paragraph{Kurdyka-\L{}ojasiewicz inequality in non-convex optimization:} The proof of Theorem~\ref{theo:convergence} is grounded on the recent works by \cite{phan2023inertial,HienICML2020}, generalizing the works by \cite{BoltePALM,Chouzenoux2016jogo}, for establishing the convergence of block alternating MM schemes under the Kurdyka-\L{}ojasiewicz (K\L{}) inequality assumption~\citep{Lojasiewicz}. The latter assumption is a powerful tool of nonlinear functional analysis that has been popularized in the seminal paper by \cite{Attouch2010}. In their paper, the authors show how to prove the convergence of the iterates generated by a large family of minimization schemes, under the sole requirement that the function to minimize satisfies the K\L{} inequality. The latter requirement is very mild, as it holds for a large class of functions, non necessarily convex, as soon as they can be embedded within an o-minimal structure. Semi-algebraic and analytical functions are examples of such functions. In our analysis, for the sake of conciseness, we skipped the proof showing that $\loss$ satisfies K\L{} as it is identical to the one of \citep[Lemma 2]{LauJMIV}.

\cblue{\paragraph{Inner updates:} Theorem~\ref{theo:convergence} assumes that both inner steps \eqref{eq:updateA} and \eqref{eq:updateP} are solved in an exact manner. Due to strong convexity of $\mathcal{C}_1$ and $\mathcal{C}_2$, each problem has a unique solution, which ensures the well posedness of the formulation. However, the resolution of \eqref{eq:updateA} and \eqref{eq:updateP}  does not take a closed form (except when $\lambda_A$ or $\lambda_P$ equals zero, in such case we retrieve the MLEM updates from \cite[Chap. 12]{Sarkka}). Iterative inner solvers are thus necessary, and we proposed some efficient ones in Section \ref{sec:subproblems}. The extension of our convergence study to the case of inexact resolution of \eqref{eq:updateA} and \eqref{eq:updateP} is not straightforward, up to our knowledge. One could exit Algorithm~\eqref{algo:ProxA} (resp. Algorithm~~\eqref{algo:ProxP}) as soon as $\mathcal{C}_1$ (resp. $\mathcal{C}_2$) decreases, which might be satisfied after only a few iterations. In such case, Lemma~\ref{lemma:descent} holds and a weak convergence result can be deduced, namely the convergence of the loss function sequence, and the existence of cluster points for $\{\A^{(i)},\P^{(i)}\}_{i \in \mathbb{N}}$. Convergence of iterates arising from an inexact form of DGLASSO is difficult to guarantee, due to the intricated form of function $\loss$ (i.e., non-convex, non-Lipschitz smooth, function). Inexact proximal schemes for K\L{} losses, with advanced stopping conditions, have been studied in various works, such as \cite{Attouch2013,Cherni2020,ChouzenouxVMFB,Zheng2023,Bonettini2018,Bonettini2021}, to name a few. We are not aware of any study covering the block alternating MM scheme considered here, and thus left the convergence study of the inexact implementation of DGLASSO as future work. In practice, we impose a rather demanding condition on the stopping rule for the inner solvers of \eqref{eq:updateA} and \eqref{eq:updateP} (typically, $\xi = 10^{-3}$ with a maximum of $20000$ iterations), and did not observe any numerical instabilities of the proposed algorithm.}

\section{Experiments}
\label{eq:sectionexperiment}
We perform a thorough evaluation study on various controlled scenarios where the ground truth matrices denoted $(\A^*,\P^*)$ (as well as $\Q^* = (\P^*)^{-1}$) are predefined, and the time series $\{\y_k,\x_k\}_{k=1}^K$ \cblue{are built} directly from the LG-SSM model \eqref{eq_model_state}-\eqref{eq_model_obs} using such matrices. The goal is then to provide estimates $(\widehat{\A},\widehat{\P},\widehat{\Q})$ of $(\A^*,\P^*,\Q^*)$, given the observation of $\{\y_k\}_{k=1}^K$. Except otherwise stated, all the compared methods have moreover access to a perfect knowledge of $(\R_k,\H_k,\mub_0,\Sigmab_0)$, \cblue{on} top of accessing the time series $\{\y_k\}_{k=1}^K$. The hidden state $\{\x_k\}_{k=1}^K$ is, by definition, not assumed to be known, and is only used to compute quality metrics on test sets. We first work on synthetic data in Section \ref{sec:synthetic}, where the structure, the sparsity level, the conditioning, of the sought matrices $(\A^*,\P^*)$; \cblue{are} controlled. This allows us to evaluate our method on multiple cases, to discuss its parameter tuning, and to compare it to benchmarks in terms of inference quality and complexity. We then address in Section \ref{sec:climate}, \cblue{a set of problems of graph inference arising in weather variability analysis}, using four \cblue{synthetic} datasets built upon the Neurips CauseMe data challenge \citep{Runge2020}. This second set of experiments aims at evaluating the ability of DGLASSO to model and estimate a large class of graph structures (here, 200 different graphs per dataset), in comparison with other state-of-the-art graph detection methods. 

All codes are run on a Desktop Dell Latitude computer, with 11th Gen Intel(R) Core(TM) i7-1185G7 at 3.00GHz, equipped with 32Go Ram, using Matlab R2021a software. \cblue{In all the experiments, we set the precision parameters in DGLASSO algorithm to $(\xi,\varepsilon) = (10^{-3},10^{-3})$, with a maximum number of $50$ iterations for the outer loop, and $20000$ iterations for the inner solvers. Similar stopping conditions are used for the benchmarks of competitors for ensuring fair comparisons. Moreover, the DGLASSO stepsize parameters are set to $(\theta_A,\theta_P) = (1,1)$, as this simple choice led to good performance in our experiments.} The code is publicly available, for reproducibility purpose.\footnote{\url{https://pages.saclay.inria.fr/emilie.chouzenoux/Logiciel.html}} 


\subsection{\cblue{Controlled} data} 
\label{sec:synthetic}

\subsubsection{Datasets}

We set $K = 10^3$, $\R_k = \sigma_\R^2 {\Id}_{N_y}$ for every $k \in \{1,\ldots,K\}$, $\mub_0 \in \mathbb{R}^{N_x}$ is a vector of ones, $\Sigmab_0 = \sigma_0^2 {\Id}_{N_x}$ with $(\sigma_\R,\sigma_0) = (10^{-1},10^{-4})$. Matrix $\H_k$ is set to identity matrix for every $k \in \{1,\ldots,K\}$, so that $N_x = N_y$. This setting models a one-to-one correspondence between states and observations. Identifiability issues are hence avoided. 

We set $N_x = N_y = 9$, and we rely on block-diagonal matrices $(\A^*,\P^*)$, made of $3$ blocks with dimensions $3 \times 3$. \cblue{Matrices $\A^*$ are randomly set as
\begin{equation}
\A^* = \U \text{Diag}(\max(\boldsymbol{\lambda},0.99)) \U^\top,  \label{eq:projA}
\end{equation}
where $(\U,\boldsymbol{\lambda})$ are obtained by the singular value decomposition 
\begin{equation}
\U \text{Diag}(\boldsymbol{\lambda}) \U^\top = \text{BDiag}\{(B_j)_{1 \leq j \leq 3} \},
\end{equation}
with
\begin{equation}
(\forall (j,n,\ell) \in \{1,2,3\}^3) \quad B_j(n,\ell) = \rho_j^{|\sigma_j(n) - \ell|}. \label{eq:ARA}
\end{equation}
Equation \eqref{eq:ARA} builds matrices of auto-regressive processes of order one. $\text{BDiag}\{(B_j)_{1 \leq j \leq 3} \}$ denotes the block diagonal matrix formed with the $(B_j)_{1 \leq j \leq 3}$ matrices, $\rho_j$ is a scalar uniformly sampled into $[0,1]$, and $\sigma_j$ is a random bijective permutation from $\{1,2,3\}$ to $\{1,2,3\}$. The latter permutation allows to break the symmetry property of block elements of $\A^*$. The capping operation in Equation~\eqref{eq:projA} implements a spectral projection~\citep{Benfenati18}, so that the spectral norm of $\A^*$ is lower or equal than $0.99$, and, as such, ensures the stability of the resulting SSM.
}

The diagonal blocks of $\P^* = \text{BDiag}\{(\Omega_j)_{1 \leq j \leq 3}\}$ are randomly set following the procedure from \cite{More}\cblue{, described as
\begin{align}
(\forall j \in \{1,2,3\}) \quad 
\Omega_j = \left(\Id_3 - 2 \frac{\mathbf{p}_j\mathbf{p}_j^\top}{\|\mathbf{p}_j\|}\right)
\text{Diag}\left\{(c^{(n-1)/2})_{1 \leq n \leq 3} \right\}
\left(\Id_3 - 2 \frac{\mathbf{p}_j\mathbf{p}_j^\top}{\|\mathbf{p}_j\|}\right),
\end{align}
where each $\mathbf{p}_j \in \mathbb{R}^3$ has uniformly sampled entries in $[-1,1]$. Parameter $c>0$ controls the conditioning number of matrices $(\Omega_j)_{1 \leq j \leq 3}$, and thus the one of $\P^*$. We set $\log_{10}(c) \in \{0.1, 0.2, 0.5, 1\}$, leading to datasets A, B, C, and D, respectively.
}

DGLASSO provides estimates $(\widehat{\A},\widehat{\P})$ as a direct output. The estimate $\widehat{\Q}$ is simply deduced as $\widehat{\Q} = (\widehat{\P})^{-1}$. We initialize DGLASSO with $\P^{(0)} = 10^{-1} {\Id}_{N_x}$, and $\A^{(0)}$ \cblue{equal to} a stable auto-regressive order one matrix with entries $A^{(0)}(n,m) = (10^{-1})^{|n-m|}$ \cblue{projected onto the set of matrices with spectral norm \cblue{equal to} 0.99}. The setting of regularization parameters $(\lambda_A,\lambda_P)$ is discussed in the dedicated Section~\ref{sec:hyperparam}. Performance of the method for varying initializations is discussed in Appendix~\ref{appendix:init}.

\subsubsection{Comparisons to other methods}

In our experiments, we also compare DGLASSO with other model inference techniques for LG-SSM. 

First, we consider the EM method from \cite{shumway1982approach} (denoted after by MLEM) to compute $(\widehat{\A},\widehat{\Q})$ as maximum likelihood estimates (i.e., no regularization is employed) of matrices  $(\A^*,\Q^*)$, the estimation $\widehat{\P}$ being defined here as the inverse of $\widehat{\Q}$. \cblue{Note that no inner loop is required in the MLEM procedure, as solutions to the M-step have a closed form expression~\citep{shumway1982approach}.} 


Second, we consider three model inference techniques that explicitly incorporate a sparse graphical prior knowledge on the sought matrices. Namely, we compare to GLASSO \citep{Friedman07}, that considers a simple static and noiseless version of the LG-SSM (i.e., $\widehat{\A}$ is empirically set to zero and observation noise is neglected). Matrix $\widehat{\P}$ is then obtained through a maximum a posteriori formulation under an $\ell_1$ prior. The convex GLASSO loss is minimized using the proximal splitting method described in \cite[Algorithm 1]{Benfenati18}. Matrix $\widehat{\Q}$ is deduced by inversion of the resulting $\widehat{\P}$. We also compare with the robust GLASSO (rGLASSO) approach introduced in \cite{Benfenati18}, that explicitly accounts for the expression of $\R_k$ in the maximum a posteriori loss expression. For the sake of fair comparison, we use hereagain an $\ell_1$ penalty to obtain $\widehat{\P}$ although more sophisticated priors could be encompassed by rGLASSO. For both aforementioned methods, we rely on the \textsf{Matlab} toolbox provided by the authors.\footnote{http://www-syscom.univ-mlv.fr/~benfenat/Software.html} Finally, we provide the results obtained with the GRAPHEM method we recently introduced in \cite{elvira2022graphem}. GRAPHEM provides a maximum a posteriori estimate $\widehat{\A}$ under an $\ell_1$ prior, while $\widehat{\Q}$ is empirically set to $\sigma_\Q^2 {\Id}_{N_y}$, with a finetuned $\sigma_\Q > 0$. The \textsf{Matlab} toolbox provided by the authors\footnote{https://pages.saclay.inria.fr/emilie.chouzenoux/LogicielEN.html} is used to produce the results for this method. 

\cblue{Third, we compare DGLASSO with a quasi-Newton approach also aiming at minimizing \eqref{eq:phik}. The log-likelihood gradient is computed using the sensitivity equations from~\cite{Nagakura2021}, while subgradient updates are used to handle the Lasso penalties. The chain rule from \cite{Matrix} is used to build gradients with respect to $\P = \Q^{-1}$. The minimization is performed using a Quasi-Newton routine from the \textsc{Optimization Toolbox} from \textsc{Matlab}, with default options and a maximum of 100 iterations.}

All the comparative methods are programmed in the same language, they are initialized using a similar strategy as our DGLASSO method, and similar stopping criteria and hyper-parameter tuning approach are employed, for fair comparisons.                                             

\subsubsection{Evaluation metrics}

We first evaluate the results of our method, as well as the comparative benchmarks, through quality assessment metrics. Namely, we use the relative mean square error (RMSE) between the ground truth matrices $(\A^*,\P^*,\Q^*)$ and the estimated $(\widehat{\A},\widehat{\P},\widehat{\Q})$ (when available). For instance,
\begin{equation}
\text{RMSE}(\A^*,\widehat{\A}) = \frac{\|\A^* - \widehat{\A}\|^2_{\text{F}}}{\|\A^*\|^2_{\text{F}}}.
\label{eq:RMSE}
\end{equation}
$\text{RMSE}(\P^*,\widehat{\P})$ and $\text{RMSE}(\Q^*,\widehat{\Q})$ are defined in a similar fashion. We also compute area-under-curve (AUC) and F1 score comparing the non-zero entries (that is, the graph edges positions) of the sparse matrices $(\A^*,\P^*)$ and their estimates $(\widehat{\A},\widehat{\P})$. \cblue{The absolute value of the entry of each matrix is used as a detection threshold for the AUC score. A threshold value of $10^{-10}$ is used for the detection hypothesis for the F1 score (i.e., a weight greater than $10^{-10}$, in absolute value, is considered as an edge).} We furthermore evaluate the ability of the estimated model parameters to actually describe and infer the time series (both observed and hidden states). To that end, we build test time series $(\x^{\text{test}},\y^{\text{test}})$, not seen by the algorithms, constructed by running the ground truth LG-SSM (i.e., with ground truth matrix parameters $(\A^*,\P^*,\Q^*)$). We then run KF and RTS algorithms~\ref{alg_kf} and~\ref{alg_rts}, respectively, using either the ground truth matrices $(\A^*,\P^*,\Q^*)$ or their estimations $(\widehat{\A},\widehat{\P},\widehat{\Q})$, to build, for every $k \in \{1,\ldots,K\}$, the predictive distribution means $(\mub_k^*,\nub_k^*,\mub_k^{\text{s}*})$ and $(\widehat{\mub}_k,\widehat{\nub}_k,\widehat{\mub}_k^{\text{s}})$, respectively.


This allows in particular to compute the cumulative normalized mean squared error (cNMSE) between the predictive distribution means using either the ground truth model matrices or the estimated ones. Namely, we calculate
\begin{equation}
\text{cNMSE}(\nub^*,\widehat{\nub}) = \frac{\sum_{k=1}^K \| \nub_k^* - \widehat{\nub} \|^2}{\sum_{k=1}^K \| \nub_k^* \|^2},
\label{eq:cNMSE}
\end{equation}
as well as $\text{cNMSE}(\mub^*,\widehat{\mub})$, and $\text{cNMSE}(\mub^{\text{s}*},\widehat{\mub}^{\text{s}})$. 

Finally, we evaluate the negative logarithm of the marginal likelihood $\loss_{1:K}(\widehat{\A},\widehat{\P})$ as defined in \eqref{eq:lossimplicit}, on the test time series.

\subsubsection{\cblue{Influence of regularization parameters setting}}
\label{sec:hyperparam}

\cblue{We here assess the influence on the results of} the setting of the \acro~hyper-parameters $(\lambda_A,\lambda_P)$, accounting for the sparsity priors on $\A$ and $\P$, respectively. To that aim, we ran \acro~for 100 values of hyperparameters $(\lambda_A,\lambda_P)$, regularly spaced on a log-scale grid between $1$ and $10^2$, and repeated the experience on {50} randomly generated time series, for dataset A. We display in Figure~\ref{fig:hyperparam} the averaged values, over the random runs, of several quantitative metrics, as a function of hyperparameters $(\lambda_A,\lambda_P)$. We also report in the caption the averaged RMSE scores obtained when running DGLASSO with $(\lambda_A,\lambda_P) = (0,0)$ (i.e., MLEM result). As it can be observed, both F1/RMSE for the estimation of the transition matrix $\A$ are mostly governed by the value of $\lambda_A$, while the quality scores for the state noise precision matrix $\P$ are mostly influenced by $\lambda_P$. Note that the RMSE scores on $\Q$, not shown here, follow similar evolution than RMSE on $\P$. Just inspecting F1/RMSE appears not informative enough to set parameters $(\lambda_A,\lambda_P)$, as each metric and each parameter seems to push towards a different goal. The maps of cNMSE$(\nub^*,\widehat{\nub})$ and of the marginal likelihood log-loss $\loss_{1:K}$ show very similar behavior. Note that the later is a practically interesting metric, because it does not require the knowledge of the ground truth matrices. On this example, it however appears not discriminative enough. Typically, it stays almost constant for a (too) wide value range of $(\lambda_A,\lambda_P)$, which confirms again the ill-posedness of the minimization problem. The maps for cNMSE$(\mub^*,\widehat{\mub})$ and cNMSE$(\mub^{\text{s}*},\widehat{\mub}^{\text{s}})$ are very similar. Interestingly, the minimization of both these quantities, related to the state mean distributions, seems to provide a meaningful value range for the regularization parameters, narrowing down around values that achieve an optimal compromise between (i) good estimation of the sought matrices, (ii) good estimation of the sparse matrices support, and (iii) good predictive behavior for time series inference by KF/RTS techniques. \cblue{Similar} conclusions were reached for the other three datasets. Note additionally that the DGLASSO results outperform for a wide range of $(\lambda_A,\lambda_P)$ those obtained with MLEM, confirming the advantage of introducing the proposed sparsity prior. This will be more deeply examined in an upcoming section. 

\cblue{In our forthcoming experiments, we opt for simple procedure to set the hyperparameters, depending on the task of interest. Namely, in this Section \ref{sec:synthetic}, we aim at an optimal compromise between both graph estimation, that reaches the best predictive power of the learned models, when evaluated on an unseen time series. This leads us to fix $(\lambda_A,\lambda_P)$ through \cblue{a rough grid search among $\{1,5,8, 10\}^2$} to minimize cNMSE$(\mub^*,\widehat{\mub})$ averaged on few (typically $5$) runs.}

\begin{figure}
\centering
\begin{tabular}{@{}c@{}c@{}}
\includegraphics[width = 6cm]{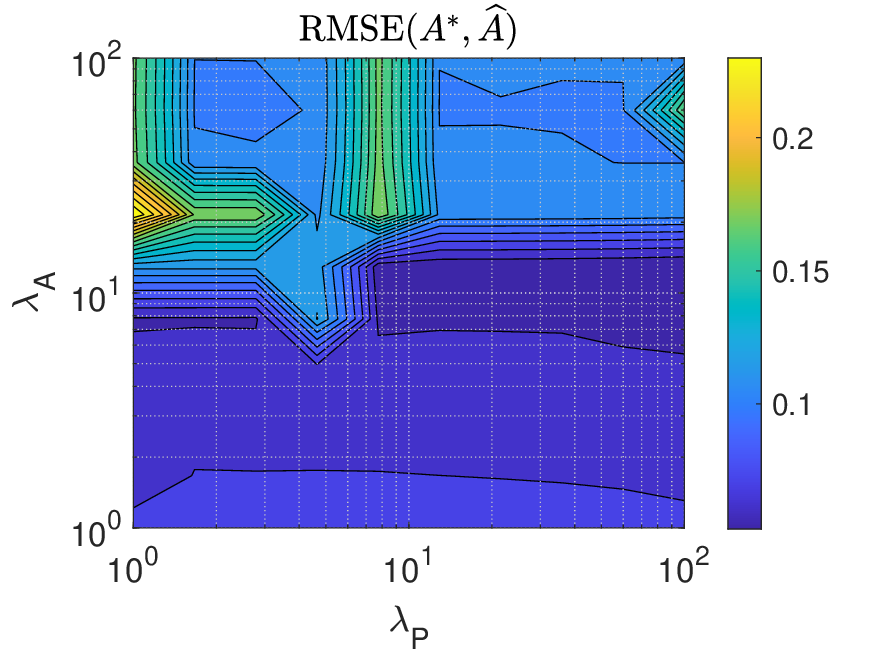} & \includegraphics[width = 6cm]{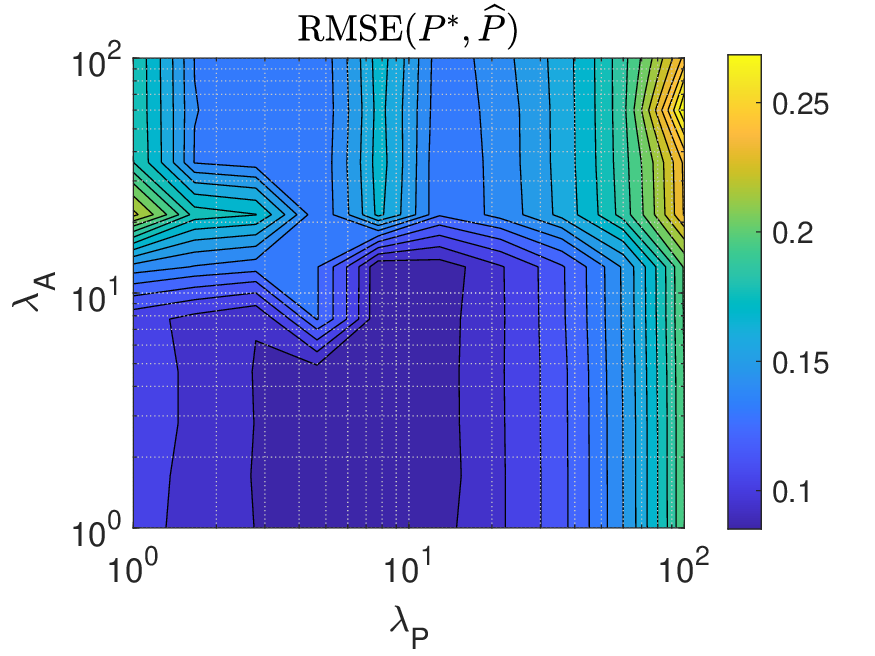}  \\
\includegraphics[width = 6cm]{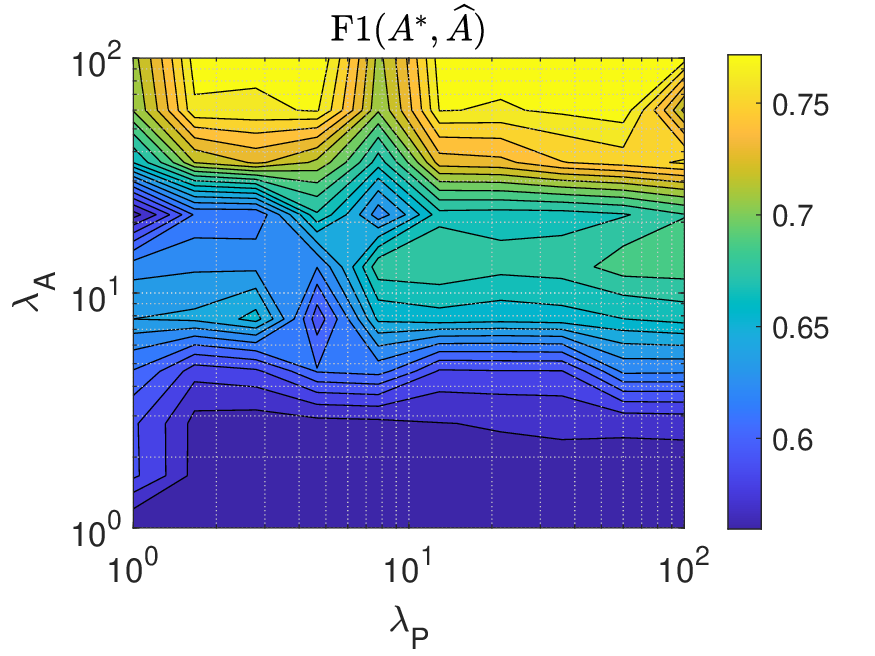} & \includegraphics[width = 6cm]{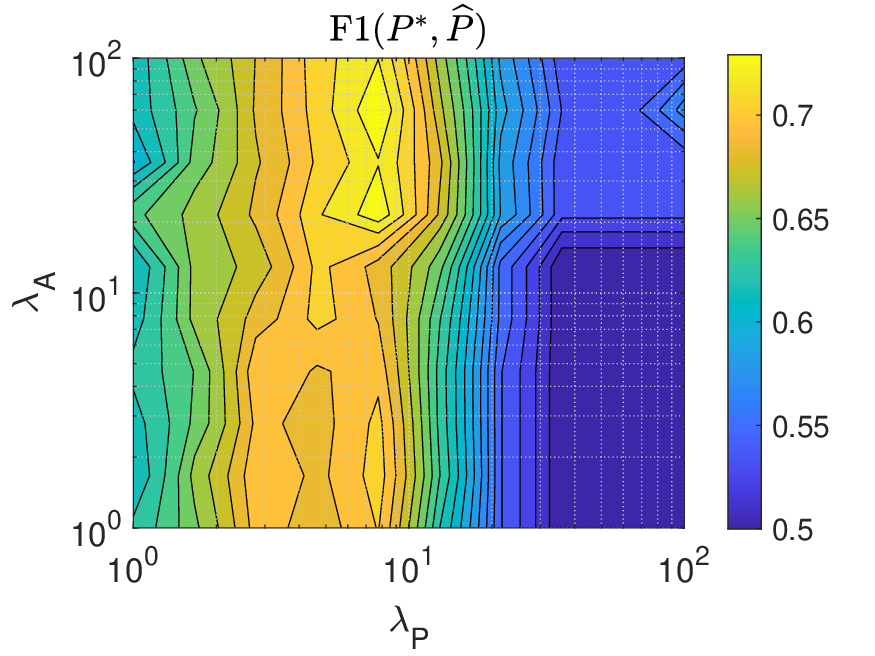}\\
\includegraphics[width = 6cm]{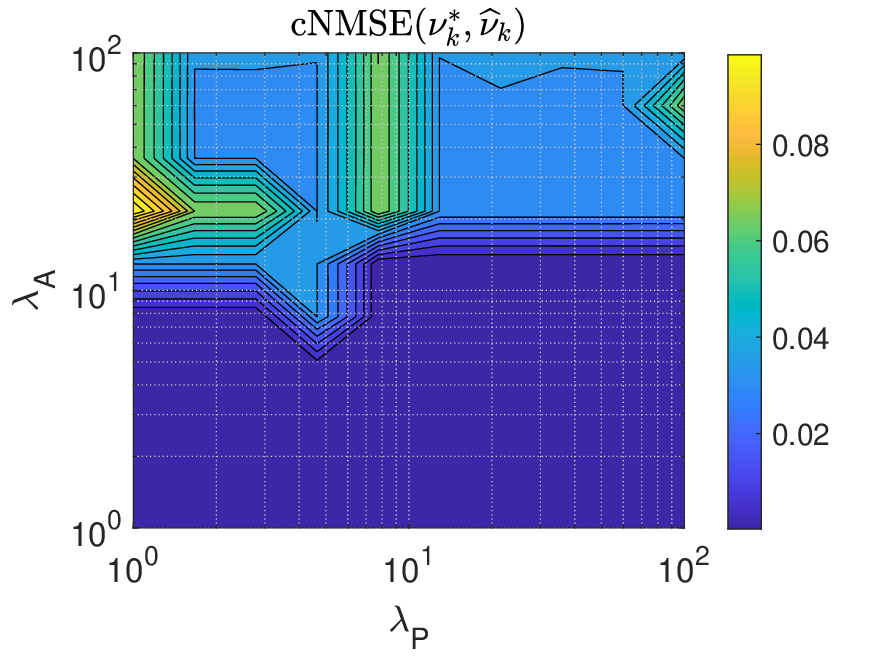} & \includegraphics[width = 6cm]{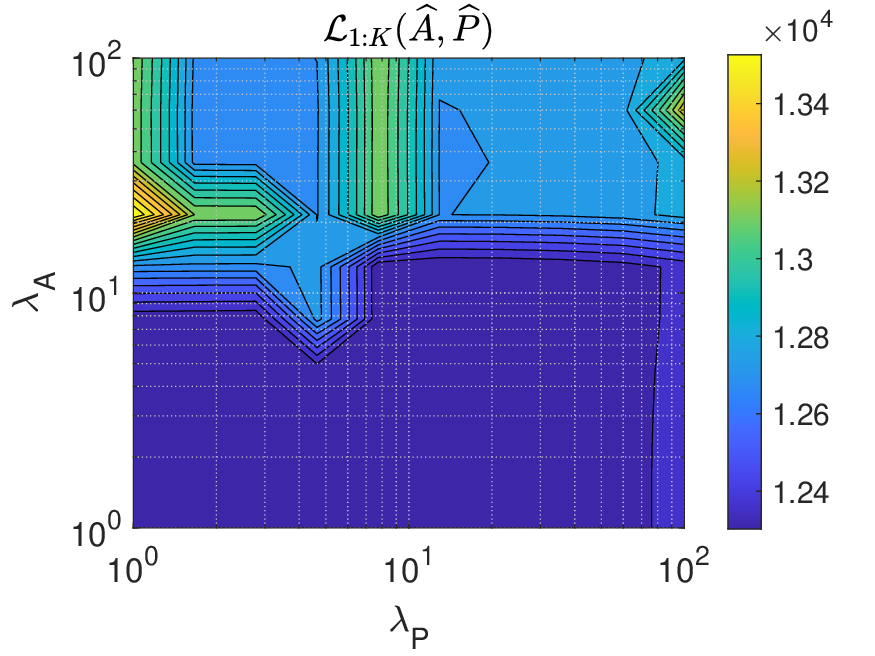}\\
\includegraphics[width = 6cm]{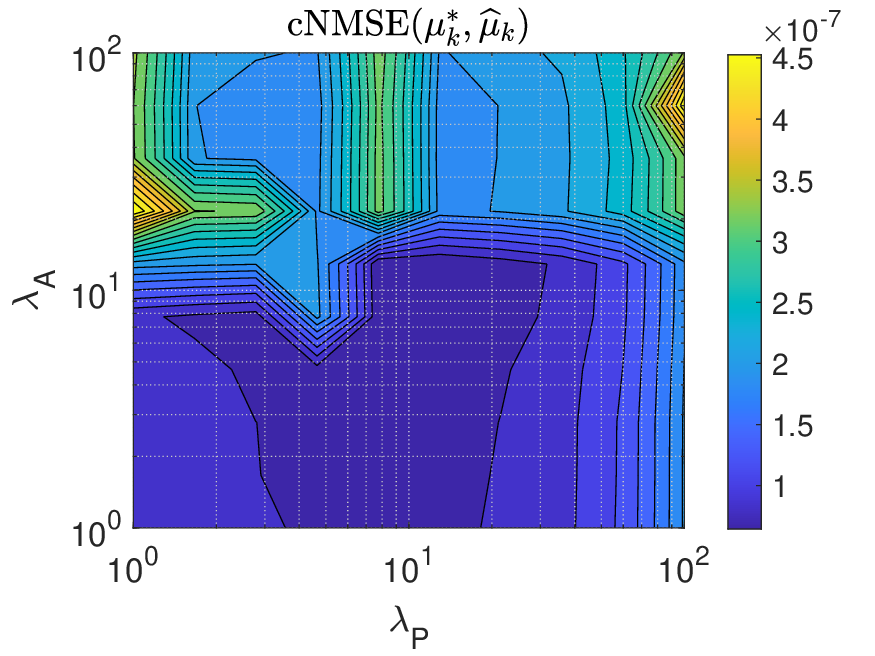} & \includegraphics[width = 6cm]{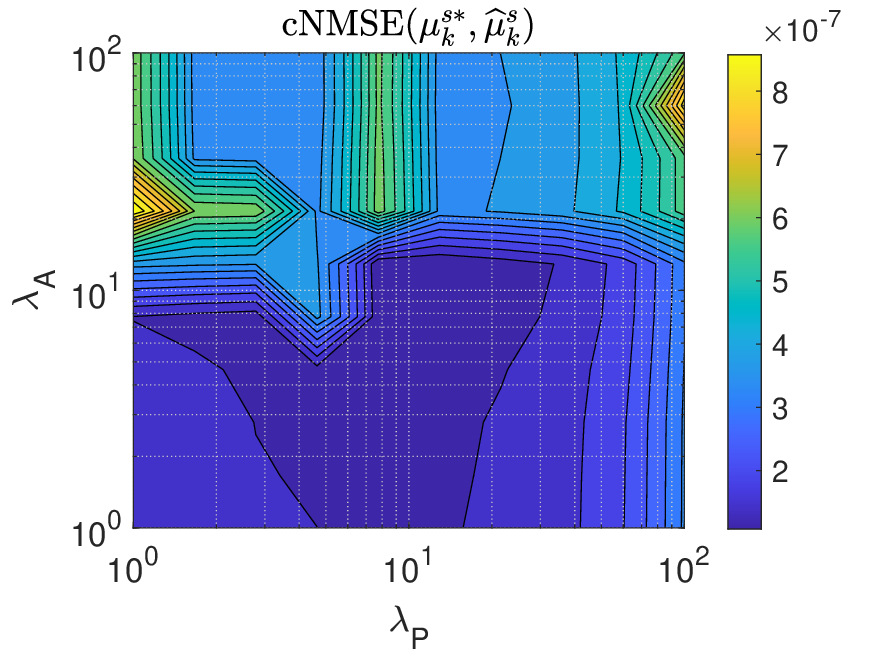}  \\
\end{tabular}
\caption{Evolution of RMSE, F1, cNMSE and loss scores on the estimation of $\A$ (left) and $\P$ (right) by DGLASSO, as a function of hyperparameters $(\lambda_A,\lambda_P)$, for dataset A (averaged on 10 runs). As a comparison, the averaged RMSE scores for $(\lambda_A,\lambda_P) = (0,0)$ (i.e., MLEM) on this example were $(0.077,0.106)$ for $(\A,\P)$, respectively.}
\label{fig:hyperparam}
\end{figure}
 
%

\subsubsection{Influence of conditioning number}

We report in Table \ref{tab:results} the results, averaged over 50 realizations of the time series generation. We do not provide the scores for the estimation of $\A$ (resp. $(\P,\Q)$) for GLASSO/rGLASSO (resp. GRAPHEM), as those methods are not designed for such task. The general trend in the results is a slight decrease of the quality of estimation for all methods, when $c_Q$ increases (i.e., from dataset A to D). This is expected, as an ill-conditioned matrix $\Q$ complicates the mathematical structure of the likelihood term. 

Regarding the estimation of $\A$, GRAPHEM method presents the best results for the three considered metrics. \acro~is second best, while MLEM follows. As already stated, GLASSO/rGLASSO do not estimate $\A$ (i.e., they assume this matrix to be zero, that is the process is i.i.d. without any Markov structure). Regarding the estimation of $(\P,\Q)$, DGLASSO is outperforming the benchmarks in terms of RMSE score. The second best, in terms of RMSE is MLEM. GLASSO and rGLASSO got very bad RMSE scores, probably because of the model mismatch induced by assuming $\A$ to be zero. The edge detection performance of \acro~are excellent, except in few cases where rGLASSO gets slightly better results. MLEM gets poorer results than \acro~in all metrics. In particular it exhibits a bad F1 score, as it does not include sparse prior, and thus does not succeed to eliminate any spurious edges. 

Regarding the distribution mean calculation, it is remarkable that DGLASSO outperforms in all examples the benchmarks, which shows that the proposed formulation allows to provide model parameters that are best suited for inference using KF/RTS at test phase. The marginal likelihood log-loss \cblue{computed on test set} is also minimized with the quantities provided by our method. This could appear as counter-intuitive, as the method is not specifically designed to minimize this loss \cblue{(while MLEM explicitly aims at minimizing this loss on the train set)}. The advantage of \acro~is that it accounts for prior knowledge, making the inferred model more robust, and less prone to overfitting, which is translated into better behavior on an independent test time series. 

We display in Figure~\ref{fig:runs} box plots assessing the variability to the RMSE and F1 scores, for both MLEM and DGLASSO methods, on 50 runs on dataset A and dataset D. The RMSE values are in most runs lower (i.e., better) for the proposed method. Both methods are quite stable with respect to the time series generation, as the plots show few outliers. For dataset D, corresponding to a more challenging case with an ill-conditioned matrix $\P^*$, the results are slightly more spread, especially for the metrics related to the recovery quality of this matrix. Remark that the F1 scores of MLEM are constant, \cblue{and are }equal to 0.5. As already \cblue{pointed} out, this is expected as MLEM is not designed to perform an edge detection task. 

We refer the reader to Appendix~\ref{appendix:sparse} for extra experiments on the synthetic datasets, assessing the performance of \acro~when the sparsity level of $\A^*$ increases.  
                                  
\begin{figure}
\centering
\begin{tabular}{@{}c@{}c@{}c@{}}
\includegraphics[width = 5.5cm]{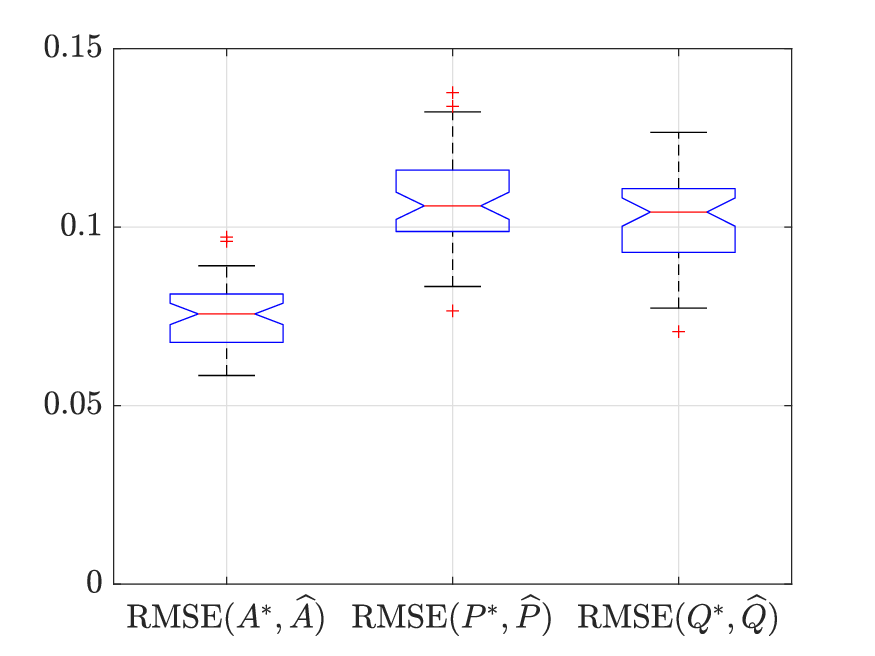} & \includegraphics[width = 5.5cm]{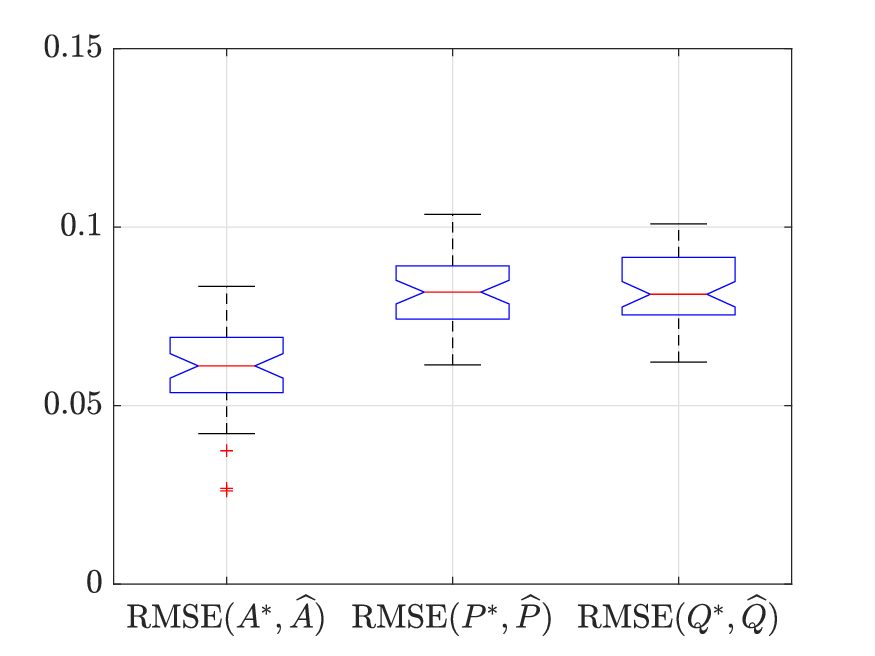}
& \includegraphics[width = 5.5cm]{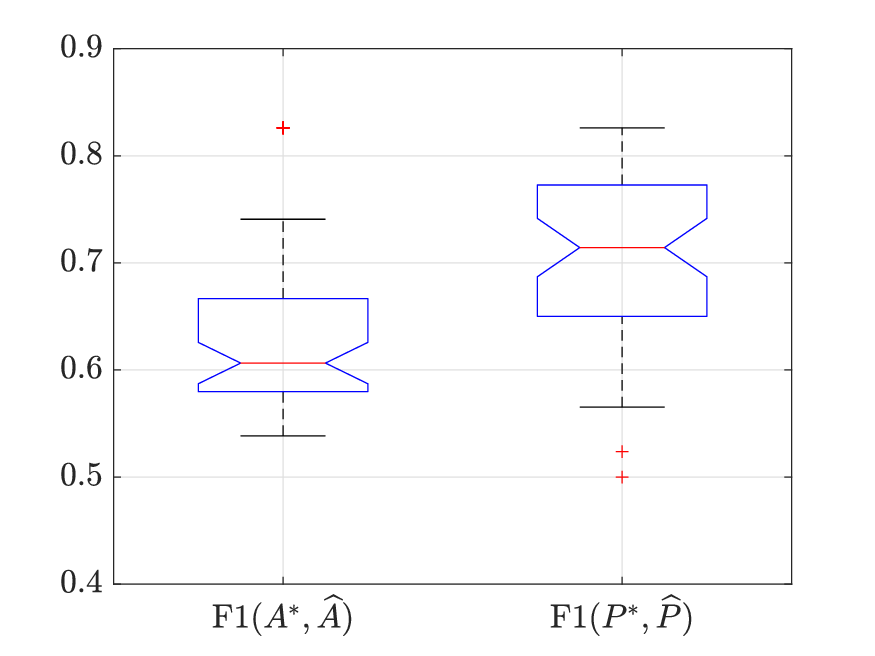}\\
 \includegraphics[width = 5.5cm]{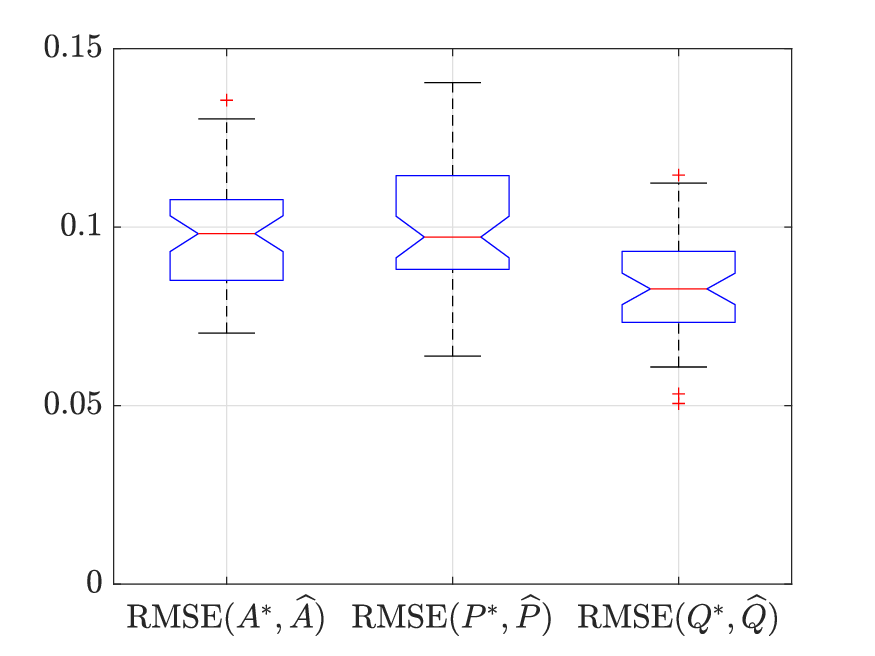} & \includegraphics[width = 5.5cm]{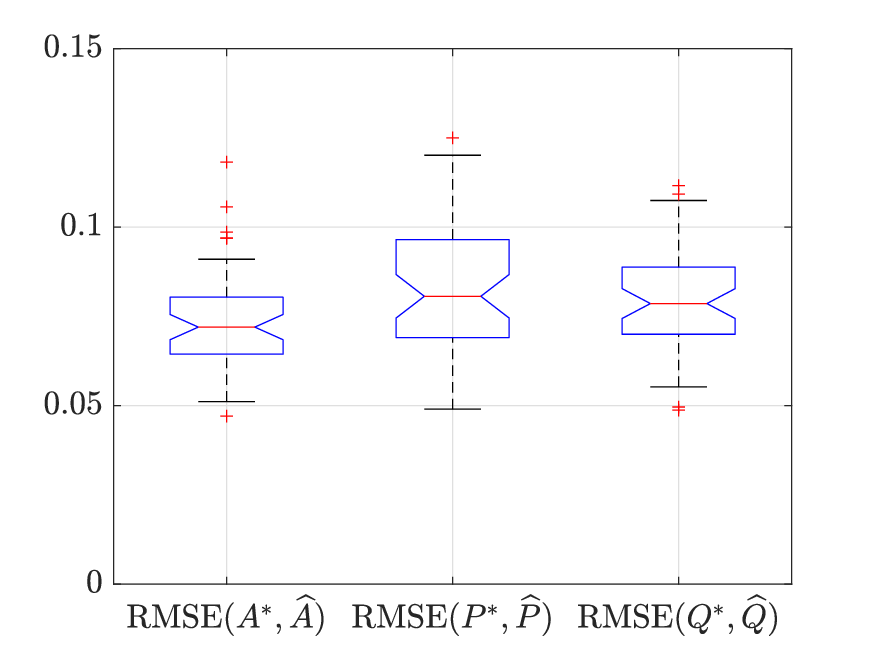} & \includegraphics[width = 5.5cm]{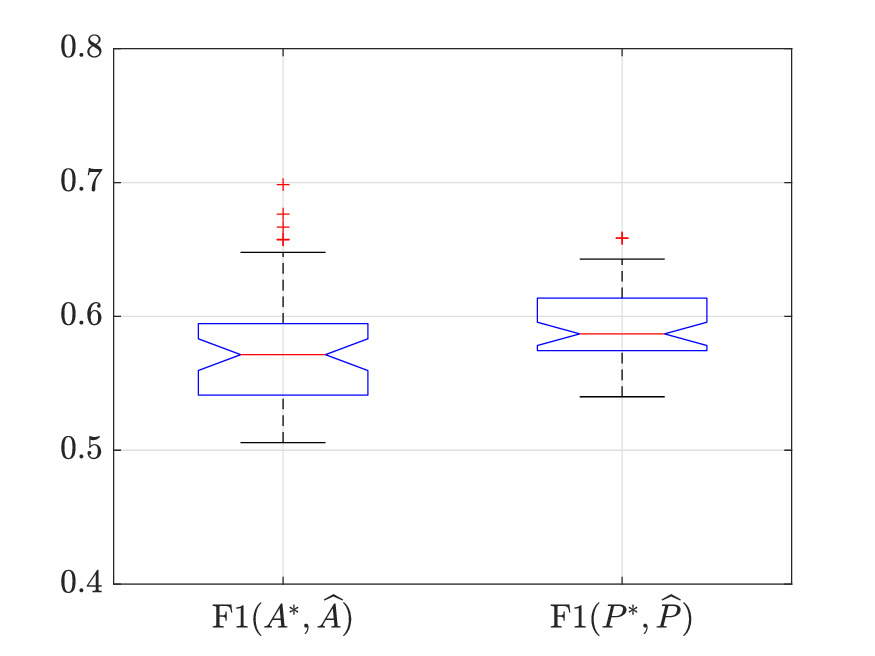}\\
MLEM (RMSE) & DGLASSO (RMSE) & DGLASSO (F1)
\end{tabular}
\caption{\cblue{Box plots for quantitative metrics when running MLEM, and \acro, on 50 randomly generated LG-SSM time series, for dataset A (top) and dataset D (bottom). F1 score is not reported for MLEM, as this method does not perform edge detection, resulting in a constant F1 score around $0.5$. \acro~outperforms MLEM with better (i.e., lower) RMSE scores for most runs and good F1 scores. Dataset D is more challenging in terms of inference, thus yielding to more spread results for both methods.}}
\label{fig:runs}
\end{figure}


\begin{table}[t]
\caption{Results for the four considered datasets A to D, with an increasing conditioning number of $\P^*$ \cblue{equal to} $\log_{10}(c) \in \{0.1, 0.2, 0.1, 1\}$, respectively. We evaluate the methods in terms of estimation quality for $(\A,\P,\Q)$, using either RMSE as defined in \eqref{eq:RMSE}, and edge detection scores (AUC, F1), as well as \cblue{in terms of inference quality on test set using cNMSE and marginal likelihood metrics defined in \eqref{eq:cNMSE}.}}
\hspace*{-2cm}
{\tiny
{
\begin{tabular}{|c|c|c|c|c||c|c|c||c||c|c||c|c|}
\cline{3-13}
\multicolumn{2}{c|}{ } & \multicolumn{3}{|c||}{Estimation of $\A$} & \multicolumn{3}{|c||}{Estimation of $\P$} & Estim. $\Q$ & \multicolumn{2}{|c||}{State distrib.} & \multicolumn{2}{|c|}{Predictive distrib.}\\
 \cline{2-13}
\multicolumn{1}{c|}{ } & Method & RMSE & AUC & F1 & RMSE & AUC & F1 & RMSE & cNMSE$(\mub^*,\widehat{\mub})$ & cNMSE$(\mub^{\text{s}*},\widehat{\mub}^\text{s})$ & cNMSE$(\nub^*,\widehat{\nub})$ & $\mathcal{L}_{1:K}(\widehat{\A},\widehat{\P})$\\
\hline

\parbox[t]{2mm}{\multirow{5}{*}{\rotatebox[origin=c]{90}{Dataset A}}} & DGLASSO & $\num{0.060525}$ & $\num{0.84255}$ & $\ubold \num{0.64057}$ & \ubold $\num{0.081927}$ & $\num{0.77801}$ & $\num{0.69812}$& $\ubold \num{0.082601}$ & $ \ubold \num{6.3935e-08}$ & $\ubold \num{1.0504e-07}$ & $\ubold \num{2.9837e-4}$ & $\ubold \num{12307.1687}$ \\
  & MLEM & $\num{0.076063}$ & $\num{0.81693}$ & $\num{0.5}$ & $\num{0.10528}$ & $\num{0.85739}$ & $\num{0.5}$& $\num{0.10152}$ & $\num{1.0947e-07}$ & $\num{1.8025e-07}$ & $\num{4.8425e-4}$ & $\num{12341.2045}$ \\
	& GLASSO & NA & NA & NA & $\num{0.81838}$ & $\num{0.80394}$ & $\num{0.49616}$ & $\num{1073.5104}$ & $\num{4.4853e-06}$ & $\num{7.1804e-06}$& $\num{1}$ & $\num{28459.2935}$ \\
	& rGLASSO & NA & NA & NA & $\num{0.76376}$ & $\ubold \num{0.92377}$ & $\num{0.59759}$ & $\num{31.6891}$ & $\num{2.8256e-06}$ & $\num{5.4924e-06}$ & $\num{1}$ & $\num{22957.6928}$ \\
	& GRAPHEM & $\ubold \num{0.044631}$ & $\ubold \num{0.89481}$ & $\ubold \num{0.84654}$ & NA & NA & NA & NA & $\num{4.3642e-06}$ & $\num{6.9443e-06}$ & $\num{2.98e-4}$ & $\num{29035.0296}$ \\
\hline
\hline
\parbox[t]{2mm}{\multirow{5}{*}{\rotatebox[origin=c]{90}{Dataset B}}} & DGLASSO & $\num{0.068331}$ & $\num{0.83317}$ & $\num{0.60322}$ & $\ubold \num{0.070299}$ & $\num{0.8934}$ & $\ubold \num{0.83536}$& $\ubold \num{0.07078}$ & $\ubold \num{7.4899e-08}$ & $\ubold \num{1.2361e-07}$ & $\ubold \num{3.2808e-4}$ & $\ubold \num{11806.7439}$\\
& MLEM & $\num{0.079544}$ & $\num{0.81495}$ & $\num{0.5}$ & $\num{0.10586}$ & $\num{0.898}$ & $\num{0.5}$ & $\num{0.10019}$ & $\num{1.2987e-07}$ & $\num{2.1332e-07}$ & $\num{4.6185e-4}$ & $\num{11833.4478}$\\
& GLASSO & NA & NA & NA & $\num{0.82748}$ & $\num{0.82565}$ & $\num{0.50531}$ & $\num{341.8726}$ & $\num{5.0689e-06}$ & $\num{8.0718e-06}$ & $\num{1}$ & $\num{27744.9642}$ \\
& rGLASSO & NA & NA & NA & $\num{0.73385}$ & $\ubold \num{0.93029}$ & $\num{0.60794}$ & $\num{33.8955}$ & $\num{3.2145e-06}$ & $\num{6.1866e-06}$ & $\num{1}$ & $\num{22530.0363}$ \\
& GRAPHEM & $\ubold \num{0.046795}$ & $\ubold \num{0.89339}$ & $\ubold \num{0.84779}$ & NA & NA & NA & NA & $\num{5.1584e-06}$ & $\num{8.0362e-06}$ & $\num{2.9122e-4}$ & $\num{29031.4116}$ \\ 
\hline
\hline
\parbox[t]{2mm}{\multirow{5}{*}{\rotatebox[origin=c]{90}{Dataset C}}} & DGLASSO & $ \num{0.070227}$ & $\num{0.82903}$ & $\num{0.58126}$ & $\ubold \num{0.090361}$ & $\num{0.95372}$ & $\ubold \num{0.82954}$& $\ubold \num{0.077905}$ & $\ubold \num{1.8962e-07}$ & $\ubold \num{2.9941e-07}$ & $\ubold \num{3.9556e-4}$ & $\ubold \num{10311.104}$\\
& MLEM & $\num{0.081204}$ & $\num{0.81025}$ & $\num{0.5}$ & $\num{0.097357}$ & $\ubold \num{0.97361}$ & $\num{0.5}$ & $\num{0.094366}$ & $\num{2.5827e-07}$ & $\num{4.1796e-07}$ & $\num{5.0526e-4}$ & $\num{10326.4099}$\\
& GLASSO & NA & NA & NA & $\num{0.90132}$ & $\num{0.80498}$ & $\num{0.48925}$ & $\num{3.926e+17}$ & $\num{0.012027}$ & $\num{0.012032}$ & $\num{1}$ & $\num{26634.8917}$\\
& rGLASSO & NA & NA & NA & $\num{0.80517}$ & $\num{0.92781}$ & $ \num{0.61421}$ & $\num{29.5302}$ & $\num{7.1948e-06}$ & $\num{1.3195e-05}$ & $\num{1}$ & $\num{21322.2465}$\\
& GRAPHEM & $\ubold \num{0.049276}$ & $\ubold \num{0.89185}$ & $\ubold \num{0.857}$ & NA & NA & NA & NA & $\num{1.0548e-05}$ & $\num{1.6405e-05}$ & $\num{3.9121e-4}$ & $\num{29023.369}$\\ 
\hline
\hline
\parbox[t]{2mm}{\multirow{5}{*}{\rotatebox[origin=c]{90}{Dataset D}}} & DGLASSO & $\num{0.073461}$ & $\num{0.83514}$ & $\num{0.57452}$ & $\ubold \num{0.083377}$ & $\ubold \num{1}$ & $\num{0.59828}$& $\ubold \num{0.080425}$ & $ \ubold \num{5.1272e-07}$ & $ \ubold \num{8.2434e-07}$ & $\ubold \num{3.3729e-4}$ & $\ubold \num{7911.9431}$ \\
& MLEM & $\num{0.09763}$ & $\num{0.80819}$ & $\num{0.5}$ & $\num{0.094563}$  & $\ubold \num{1}$ & $\num{0.5}$ & $\num{0.083501}$ & $\num{6.2956e-07}$ & $\num{1.0266e-06}$ & $\num{4.2188e-4}$ & $\num{7923.8499}$\\
& GLASSO & NA & NA & NA & $\num{0.96414}$ & $\num{0.94064}$ & $\num{0.55036}$ & $\num{187.8227}$ & $\num{2.3478e-05}$ & $\num{3.7012e-05}$ & $\num{1}$ & $\num{23684.1778}$\\
& rGLASSO & NA & NA & NA & $\num{0.88182}$ & $\num{0.95609}$ & $\ubold \num{0.64509}$ & $\num{28.7027}$ & $\num{1.8864e-05}$ & $\num{3.2391e-05}$ & $\num{1}$ & $\num{20100.491}$\\
& GRAPHEM & $\ubold \num{0.060615}$ & $\ubold  \num{0.8917}$ & $\ubold  \num{0.86438}$ & NA & NA & NA & NA & $\num{2.503e-05}$ & $\num{3.8388e-05}$ & $\num{3.7431e-4}$ & $\num{29016.3209}$\\
\hline
\end{tabular}
}
}
\label{tab:results}
\end{table}

\subsubsection{Complexity analysis}
\label{sec:complex}
We now examine the time complexity of the method, as well as of MLEM, when processing dataset A, for various values of the time series length $K$, namely $K \in \{100, 200, 500, 1000, 2000, 5000\}$ (we recall that our previous experiments were all done with $K=1000$). We display in Figure~\ref{fig:complexity}(left) the computing time in seconds for computing $(\widehat{\A},\widehat{\P})$, for DGLASSO and MLEM, averaged over 50 realizations. We also display, in Figure \ref{fig:complexity}(middle) for the same experiment, the RMSE between $\A^*$ and $\widehat{\A}$, and in Figure \ref{fig:complexity}(right) the metric $\text{cNMSE}(\mub^*,\widehat{\mub})$. One can notice that our method is slightly more demanding in terms of computational time, \cblue{requiring about twice the time compared to MLEM in order to reach convergence. But both DGLASSO and MLEM scale similarly.} Moreover, as already observed in our previous experiments, DGLASSO outperforms MLEM on both metrics shown here, in all tested values of $K$. As expected, the results are better for higher values of $K$, at the price of an increased computational time. Interestingly, the regularization still yields improved results for very large $K$.

\begin{figure}
\centering
\begin{tabular}{@{}c@{}c@{}c}
\includegraphics[width = 5cm]{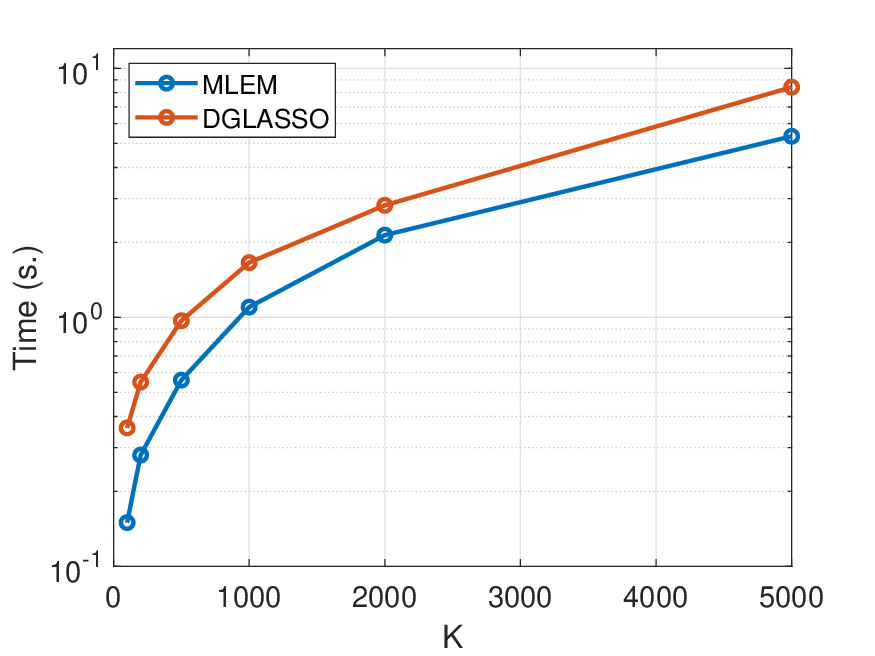} & \includegraphics[width = 5cm]{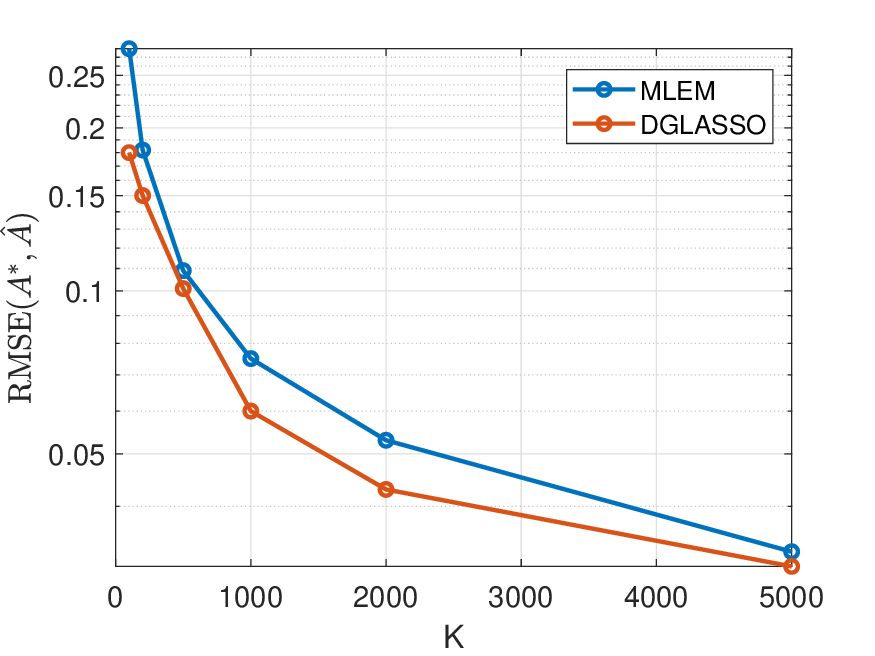} & \includegraphics[width = 5cm]{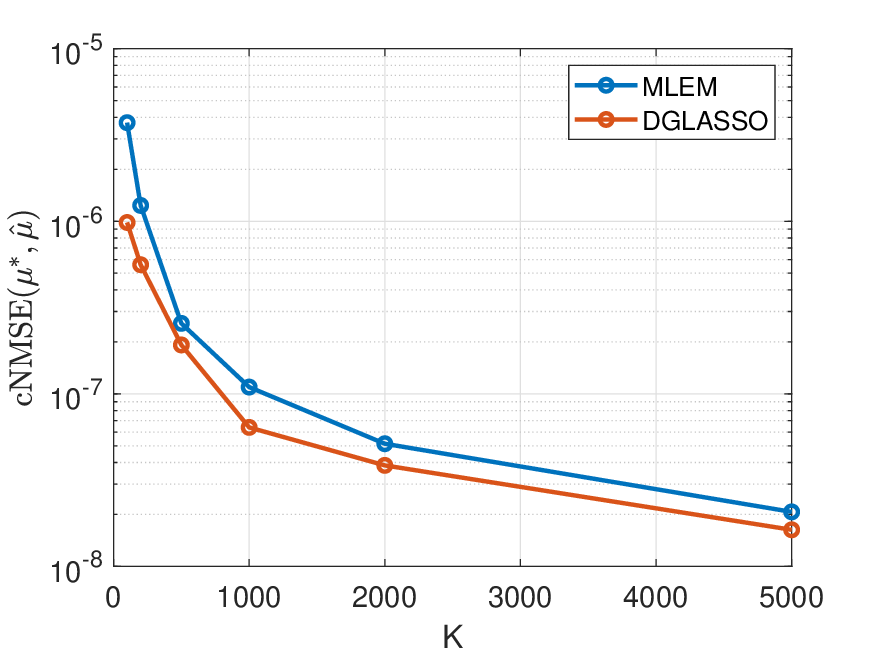}
\end{tabular}
\caption{Evolution of the complexity time (left), RMSE$(\A^*,\widehat{\A})$ (middle) and $\text{cNMSE}(\mub^*,\widehat{\mub})$ (right) metrics, as a function of the time series length $K$, for experiments on dataset A averaged over 50 runs. \cblue{Both methods scale similarly, with DGLASSO requiring about twice the time to run than MLEM.}}
\label{fig:complexity}
\end{figure}

\cblue{We end the complexity study by showing a comparison between DGLASSO and a Quasi-Newton (Q-N) implementation to solve the same minimization problem. Figure \ref{fig:complexityQN} displays, for both methods, the evolution of the loss function along time, for examples from Datasets A and D. The regularization hyperparameters are set the same for both algorithms to ensure a fair comparison. One can notice that both algorithms reach a similar value of the loss at convergence. DGLASSO requires very few iterations (typically, less than 10) and short time to stabilize, while Q-N displays a slower convergence profile despite its quasi-Newton (i.e., second-order) acceleration strategy. In terms of qualitative results, both algorithms reach a similar RMSE for the estimation of matrix $\A$ on Dataset A, while DGLASSO outperforms Q-N on Dataset D. In both scenarios, the estimation of $\P$ is considerably more accurate with the proposed DGLASSO. We explain these differences in terms of convergence speed and inference quality, by the gradient singularity of the loss with respect to variable $\P$, leading to numerical instabilities of the Q-N approach. This is especially the case for Dataset D, characterized by a poorly conditioned $\P^*$.}

\begin{figure}
\centering
\begin{tabular}{r@{}l@{}l@{}}
& \includegraphics[width = 7cm]{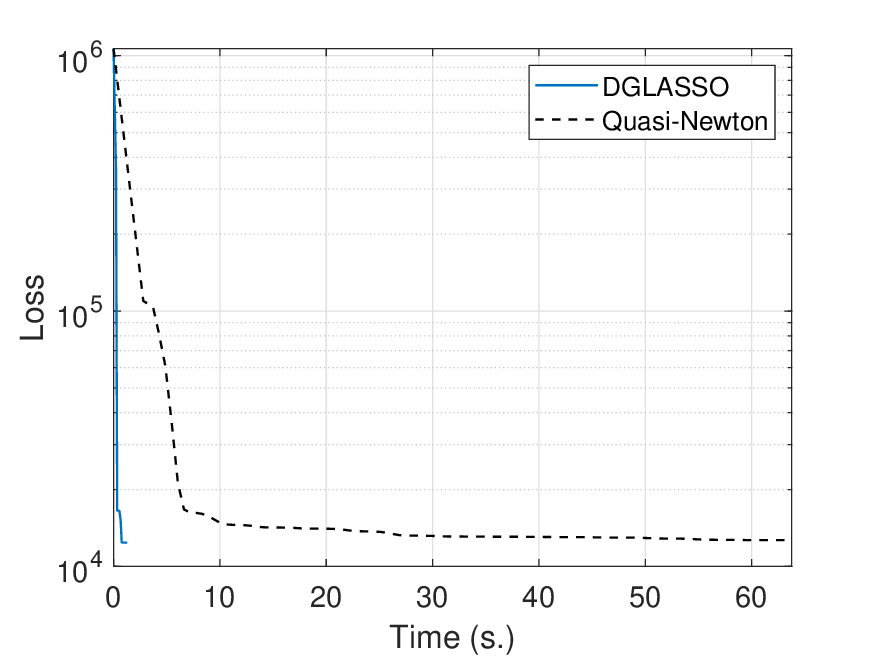} & \includegraphics[width = 7cm]{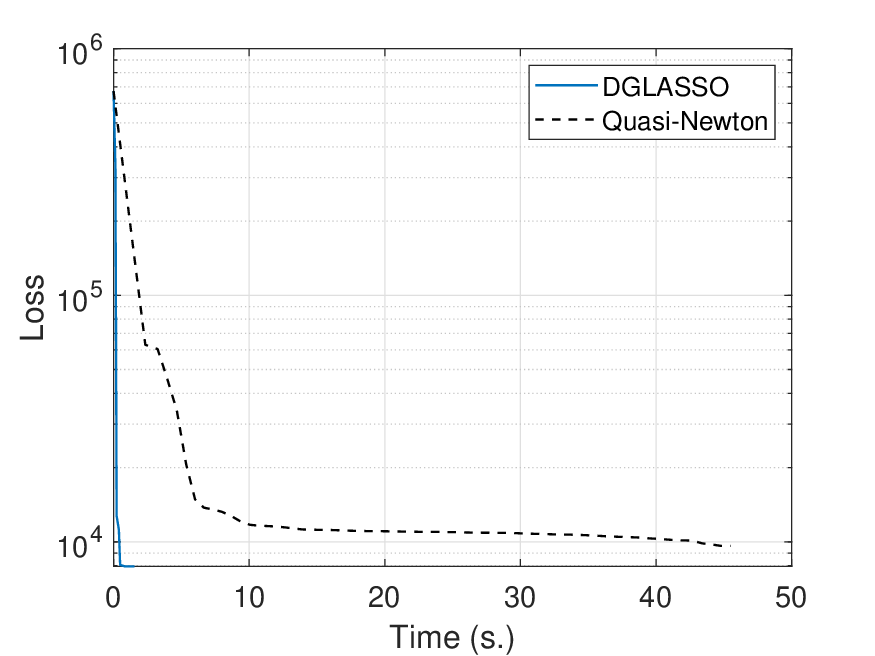}\\
\footnotesize DGLASSO:~&  \tiny $\qquad$ RMSE$(\A^*,\widehat{\A}) = 0.070$, RMSE$(\P^*,\widehat{\P}) = 0.081766$  &  \tiny $\qquad$ RMSE$(\A^*,\widehat{\A}) = 0.101$, RMSE$(\P^*,\widehat{\P}) = 0.082$  \\
 \footnotesize Q-N:~&  \tiny $\qquad$ RMSE$(\A^*,\widehat{\A}) = 0.077$, RMSE$(\P^*,\widehat{\P}) = 0.3015$  &  \tiny $\qquad$ RMSE$(\A^*,\widehat{\A}) = 0.359$, RMSE$(\P^*,\widehat{\P}) =  0.809$ \\
\end{tabular}
\normalsize
\caption{\cblue{Loss evolution versus computational time in seconds, and RMSE scores, for DGLASSO and Quasi-Newton algorithms, running on an example from Dataset A (left) and Dataset D (right). }}
\label{fig:complexityQN}
\end{figure} 

\subsection{Synthetic weather data}
\label{sec:climate}

\subsubsection{Experimental settings}

We now evaluate our method on \cblue{synthetic datasets arising from causal graph discovery studies} in the field of weather variability tracking. Specifically, we consider two sets of $200$ sparse matrices $\A^* \in \mathbb{R}^{N_x}$, with $N_x = 5$ or $10$ respectively, representing the ground truth causal graphs used to produce \texttt{WEATH} datasets in the Neurips 2019 data challenge~\citep{Runge2020}.\footnote{https://causeme.uv.es/static/datasets/TestWEATH/} For each $\A^*$, we create times series $(\x_k,\y_k)_{k=1}^K$ using \eqref{eq_model_state}-\eqref{eq_model_obs}, with $K=10^3$, $\H^* = \Id_{N_x}$ (i.e., $N_y = N_x$), and $(\sigma_{\R},\sigma_0) = (10^{-1},10^{-4})$. We set $\Q^*$ as a block diagonal matrix of $J$ blocks with dimensions ${(B_j)}_{1\leq j \leq J}$, with $\sum_{j=1}^J B_j = N_x$. Here, we consider two settings for the conditioning of $\Q^*$, namely one with a condition number close to one (i.e., $\Q^*$ is close to identity matrix) and another with  high condition number. The main characteristics of the datasets and their names are summarized in Table \ref{tab:datasetclimate}. 

We evaluate our results in terms on quality assessment metrics of the estimated $\widehat{\A}$ when compared to its ground truth $\A^*$, as this is the quantity of interest for these datasets. We compute RMSE$(\widehat{\A},\A^*)$, as well as the precision, recall, specificity, accuracy, and F1 score for detecting the non-zero entries of the transition matrix (that is, the graph edges positions). A threshold value of $10^{-10}$ on the absolute entries is used for the detection hypothesis. 

As for comparison, we also provide the results obtained with MLEM and GRAPHEM approaches, as in our previous experiments. The same stopping criterion than in our previous set of experiments \cblue{has been} used. \cblue{Since the datasets are synthetic, the ground truth is available. The hyperparameters $(\lambda_A,\lambda_P)$ for DGLASSO, and $\lambda_A$ for GRAPHEM, are finetuned on the rough grid $\{1,5,10\}$, to minimize RMSE$(\widehat{\A},\A^*)$, on one example randomly chosen, per each dataset. We then keep the parameters fixed for all the dataset, to limit overfitting behavior.} In addition to these EM-based methods, we provide comparisons with two Granger-causality approaches for graphical modeling, namely pairwise Granger Causality (PGC) and conditional Granger Causality (CGC). {Those approaches are based on sparse vector autoregressive (VAR) models. We allow the order of the VAR process to be up to $p=11$, CGC is run with a maximum distance of $5$ causal links, and in each experiment we display the best results (in F1 score) for the statistical tests performed with the significance level $\alpha\in\{0.01, 0.05, 0.1\}$ (see more details in \cite{Luengo19}).} As PGC and CGC do not provide a weighted graph estimation, no RMSE score is computed in those cases.

\subsubsection{Results}

We summarize our results in Table~\ref{tab:resultsClimate}. We also report the averaged inference time for all methods. Remarkably, the proposed method outperforms all benchmarks in all the considered metrics, related to the inference of the graph weights (RMSE metric) and the edge detection task (binary classification metrics). As expected, the result quality tends to slightly degrade when a more complicated structure is assumed for $\Q^*$ (see, for instance, WeathN5a versus WeathN5b), and when the dataset size increases (see, for instance, WeathN5a versus WeathN10a). MLEM has very poor edge detection metrics, since it does not exploit any sparsity prior on the sought matrix. GRAPHEM has better behavior, but, in these datasets, it stays way behind the quality of DGLASSO, by all metrics (which was not the case for the controlled synthetic data). This shows that our method really makes the change when dealing with complex graphs and correlated noise in the observations. In terms of computational time, the proposed method has similar complexity than the two other EM-based approaches. 
{The results of CGC and PGC are satisfactory in the first two examples although the binary metrics are far from the performance of \acro. PGC still gives meaningful results in the last two examples, but CGC has a low performance due to a high number of false negatives (i.e., many existing links are not discovered). We also note that while PGC and CGC have significantly lower running times for the case with $N_y=5$, the computational cost for $N_y=10$ is closer to \acro~while the performance gap is higher. Thus, in this examples we show that as the dimension grows, the computational cost of all methods is comparable while the proposed method has clearly a better performance. }
We also display some examples of inferred graphs in Figures \ref{fig:climateN5a} and \ref{fig:climateN10a}. We can observe that DGLASSO is able to capture in a very accurate manner the structure of the graphs, despite the wide variability of the dataset. MLEM and GRAPHEM capture the main edges, but their graphs are perturbed by several spurious edges, which shows how important it is to adopt a joint graph modeling, with sparsity priors on each. 

\begin{table}
\centering
\begin{tabular}{|c|c|c|c|}
\hline
Dataset & $N_x = N_y$ & $(B_j)_{1 \leq j \leq J}$ & $\log_{10}(\text{cond}(\Q^*))$ \\
\hline 
WeathN5a & $5$ & $(2,3)$ & $0.1$ \\
WeathN5b & $5$ & $(2,3)$ & $1$\\
\hline
WeathN10a & $10$ & $(5,5)$ & $0.1$ \\
WeathN10b & $10$ & $(5,5)$ & $1$ \\
\hline 
\end{tabular}
\caption{Details about the datasets. Each dataset is associated with $200$ examples for matrix $\A^*$.}     
\label{tab:datasetclimate}           
\end{table}

\begin{table}[t]
\caption{Results for climate datasets along with computing times. All the metrics are averaged over the $200$ examples of the dataset.}
\centering
{\footnotesize
{
\begin{tabular}{|c|c||c|c|c|c|c|c||c|}
\cline{2-9}
\multicolumn{1}{c|}{ } & method & RMSE & accur. & prec. & recall & spec. & F1 & Time (s.)\\
\hline
\multirow{5}{*}{WeathN5a} & DGLASSO & $\ubold \num{0.10754}$ & $\ubold \num{0.9374}$ & $\num{0.89422}$ & $\num{0.99814}$ & $\num{0.89402}$ & $\ubold \num{0.93745}$ & $\num{0.60779}$\\
  & MLEM & $\num{0.13974}$ & $\num{0.4128}$ & $\num{0.4128}$ & $\ubold \num{1}$ & $\num{0}$ & $\num{0.58404}$ & $\num{0.59586}$\\
  & GRAPHEM & $\num{0.1266}$ & $\num{0.7034}$ & $\num{0.59524}$ & $\ubold \num{1}$ & $\num{0.49583}$ & $\num{0.7415}$ & $\num{0.60581}$\\
  & PGC& - &  $\num{0.7724}$ & $\ubold \num{0.90207}$ & $\num{0.51493}$ & $\ubold \num{0.95262}$ & $\num{0.6522}$ & $\num{0.018722}$\\
  & CGC & - &  $\num{0.6724}$ & $\num{0.82789}$ & $\num{0.28457}$ & $\num{0.94485}$ & $\num{0.41457}$ & $\num{0.025629}$\\
\hline
\hline
\multirow{5}{*}{WeathN5b} & DGLASSO & $\ubold \num{0.16589}$ & $\ubold \num{ 0.7728}$ & $\num{0.66751}$ & $\num{0.99191}$ & $\num{0.61946}$ & $\ubold \num{0.78798}$ & $\num{0.62955}$\\
  & MLEM & $\num{0.19722}$ & $\num{0.4128}$ & $\num{0.4128}$ & $\ubold \num{1}$ & $\num{0}$ & $\num{0.58404}$ & $\num{0.37552}$\\
  & GRAPHEM & $\num{0.18582}$ & $\num{0.629}$ & $\num{0.53571}$ & $\ubold \num{1}$ & $\num{0.36833}$ & $\num{0.69446}$ & $\num{0.4696}$\\
  & PGC& - &  $\num{0.6752}$ & $\ubold \num{0.67735}$ & $\num{0.46933}$ & $\num{0.81897}$ & $\num{0.54432}$ & $\num{0.016851}$\\
  & CGC & - &  $\num{0.634}$ & $\num{0.65877}$ & $\num{0.26271}$ & $\ubold \num{0.89501}$ & $\num{0.36908}$ & $\num{0.023102}$\\
\hline
\hline
\multirow{5}{*}{WeathN10a} & DGLASSO & $\ubold \num{0.20195}$ & $\ubold \num{0.948}$ & $\ubold \num{0.89848}$ & $\num{0.92543}$ & $\num{0.95412}$ & $\ubold \num{0.89022}$ & $\num{1.3634}$\\
  & MLEM & $\num{0.26438}$ & $\num{0.21915}$ & $\num{0.21915}$ & $\ubold \num{1}$ & $\num{0}$ & $\num{0.359}$ & $\num{0.83396}$\\
  & GRAPHEM & $\num{0.2243}$ & $\num{0.51095}$ & $\num{0.31113}$ & $\num{0.99979}$ & $\num{0.37386}$ & $\num{0.47333}$ & $\num{1.4449}$\\
  & PGC& - &  $\num{0.8786}$ & $\num{0.9042}$ & $\num{0.50416}$ & $\ubold \num{0.98346}$ & $\num{0.6438}$ & $\num{0.23204}$\\
  & CGC & - &  $\num{0.77325}$ & $\num{0.53947}$ & $\num{0.21057}$ & $\num{0.93192}$ & $\num{0.27802}$ & $\num{0.3577}$\\
\hline
\hline
\multirow{5}{*}{WeathN10b} & DGLASSO & $\ubold {\num{0.19222}}$ & $\ubold \num{0.86585}$ & $\ubold \num{0.6334}$ & $\num{0.9944}$ & $\num{0.8293}$ & $\ubold \num{0.76915}$ & $\num{0.55689}$\\
  & MLEM & $\num{0.34208}$ & $\num{0.21915}$ & $\num{0.21915}$ & $\ubold \num{1}$ & $\num{0}$ & $\num{0.359}$ & $\num{0.98891}$\\
  & GRAPHEM & $\num{0.21936}$ & $\num{0.8545}$ & $\num{0.61983}$ & $\num{0.99404}$ & $\num{0.81552}$ & $\num{0.75718}$ & $\num{0.65545}$\\
  & PGC& - &  $\num{0.7987}$ & $\num{0.55829}$ & $\num{0.47325}$ & $\num{0.88988}$ & $\num{0.50604}$ & $\num{0.153757}$\\
  & CGC & - &  $\num{0.75035}$ & $\num{0.40668}$ & $\num{0.21762}$ & $\ubold \num{0.9002}$ & $\num{0.26462}$ & $\num{0.17846}$\\
\hline
\end{tabular}
}
}
\label{tab:resultsClimate}
\end{table}

\begin{figure}[t]
\centering
\begin{tabular}{@{}c@{}c@{}c@{}c@{}}%
\includegraphics[width = 4cm]{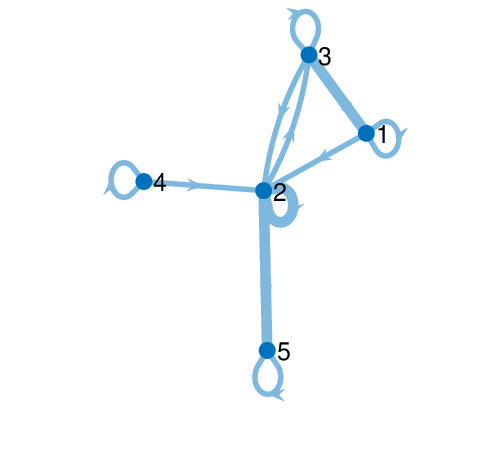} & \includegraphics[width = 4cm]{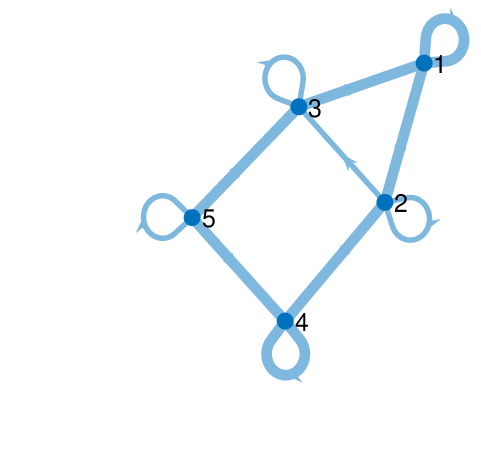} & \includegraphics[width = 4cm]{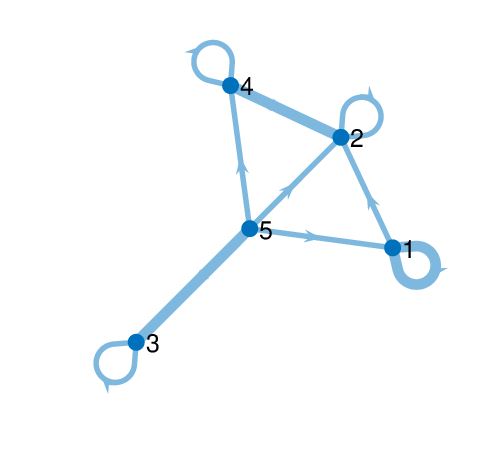} & \includegraphics[width = 4cm]{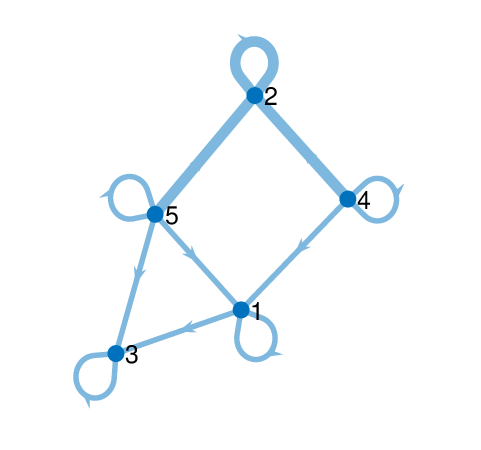}\\
\includegraphics[width = 4cm]{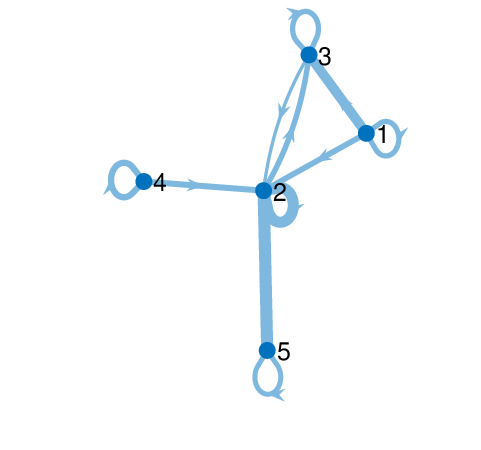} & \includegraphics[width = 4cm]{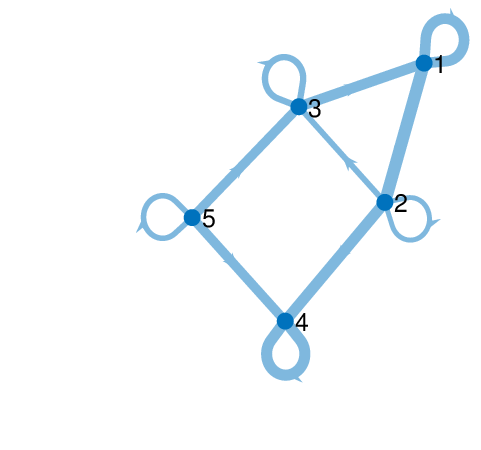} & \includegraphics[width = 4cm]{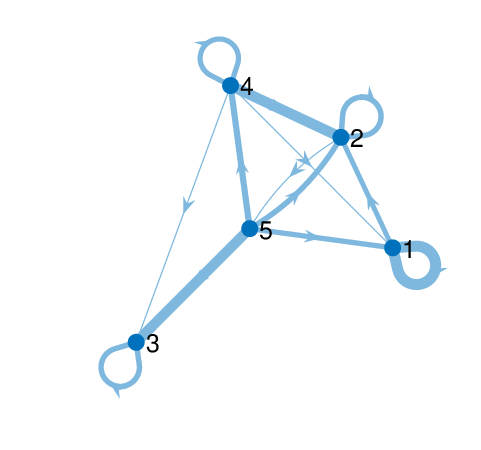} & \includegraphics[width =4cm]{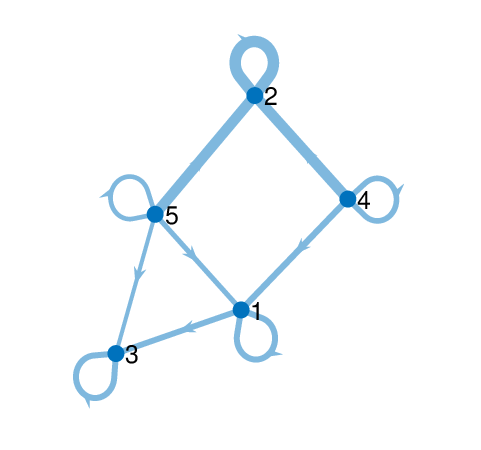}\\
\includegraphics[width = 4cm]{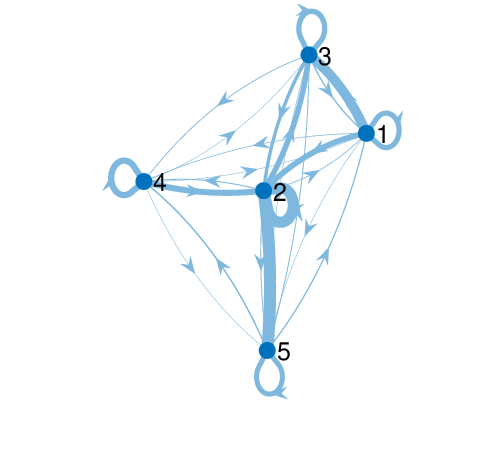} & \includegraphics[width = 4cm]{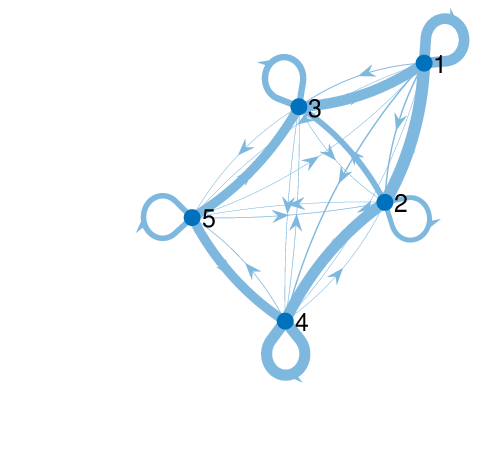} & \includegraphics[width = 4cm]{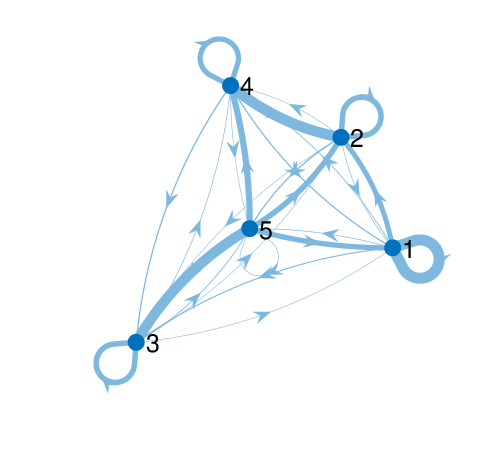} & \includegraphics[width = 4cm]{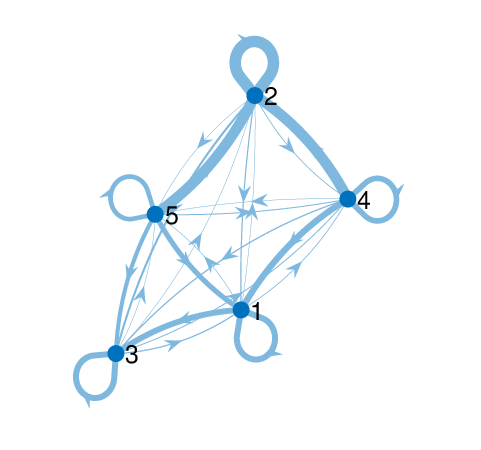}\\
\includegraphics[width = 4cm]{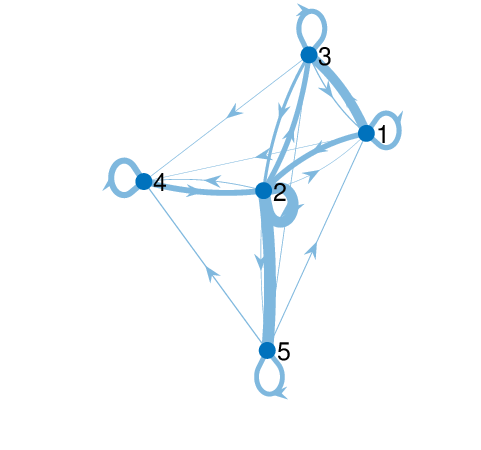} & \includegraphics[width = 4cm]{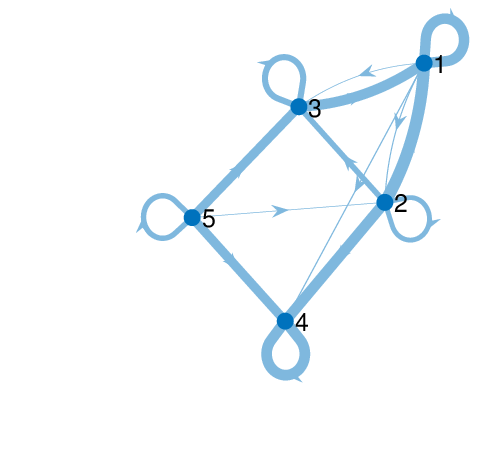} & \includegraphics[width = 4cm]{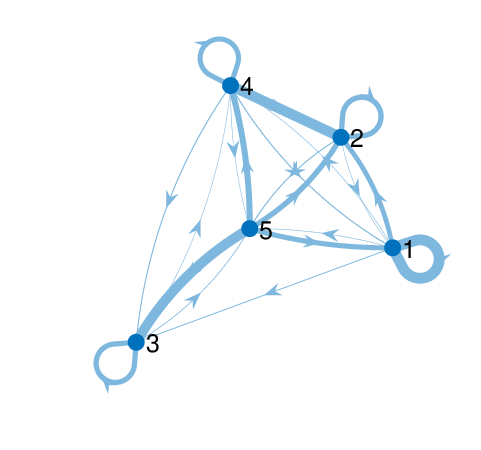} & \includegraphics[width = 4cm]{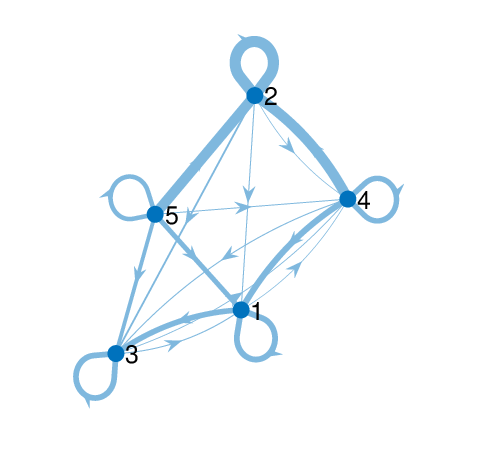}
\end{tabular}
\caption{Graph inference results on $4$ examples extracted from the dataset \emph{WeathN5a}. From top to bottom: Original graph representation of $\A^*$, and of its estimation $\widehat{\A}$, using DGLASSO, MLEM, GRAPHEM, respectively.}
\label{fig:climateN5a}
\end{figure}

\begin{figure}[t]
\centering
\begin{tabular}{@{}c@{}c@{}c@{}c@{}}
\includegraphics[width = 4cm]{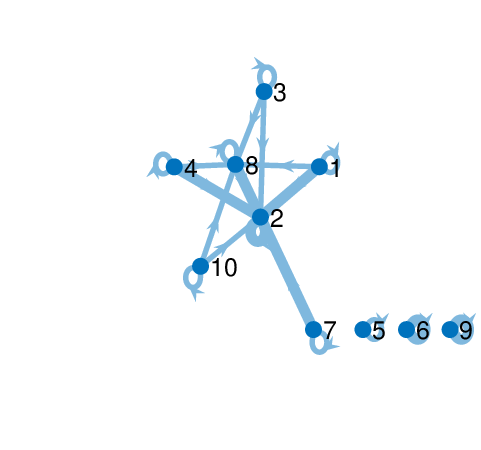} & \includegraphics[width = 4cm]{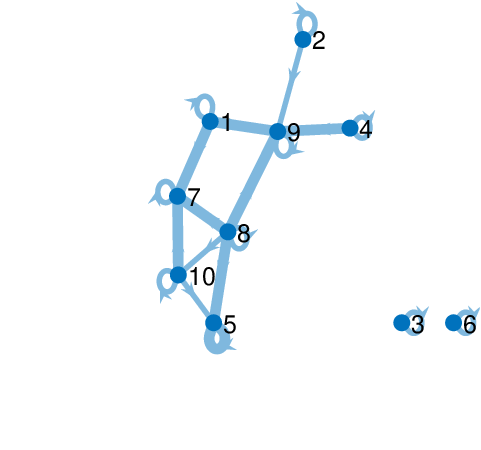} & \includegraphics[width = 4cm]{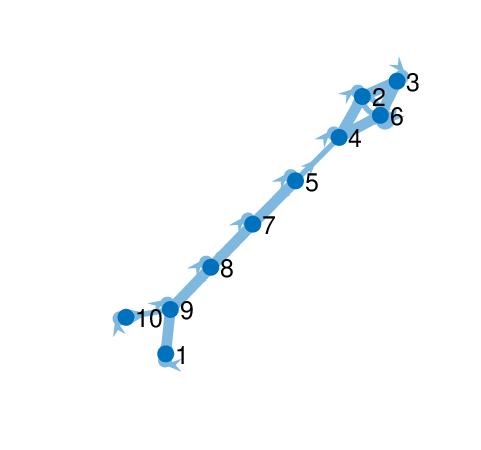} & \includegraphics[width = 4cm]{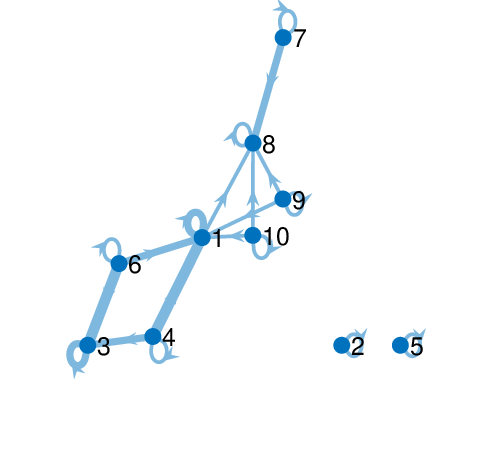}\\
\includegraphics[width = 4cm]{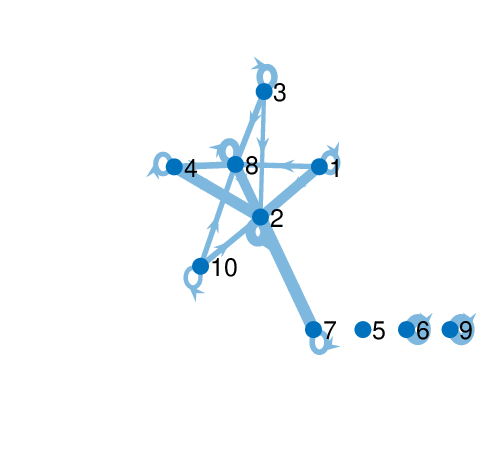} & \includegraphics[width = 4cm]{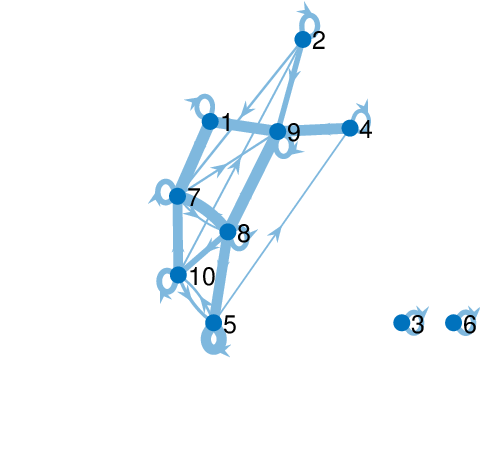} & \includegraphics[width = 4cm]{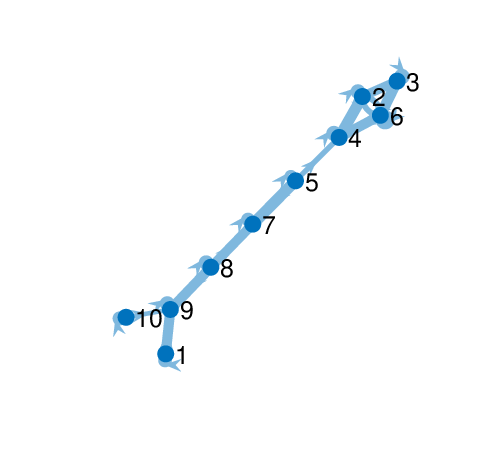} & \includegraphics[width = 4cm]{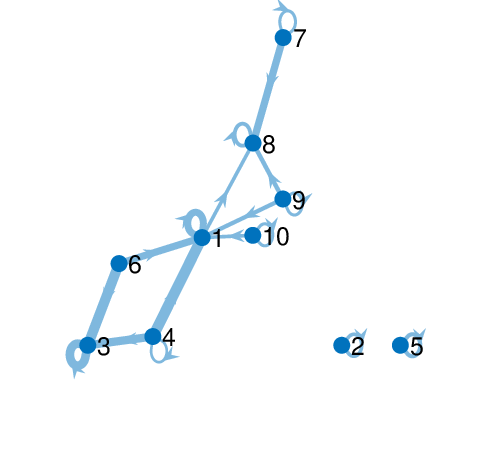}\\
\includegraphics[width = 4cm]{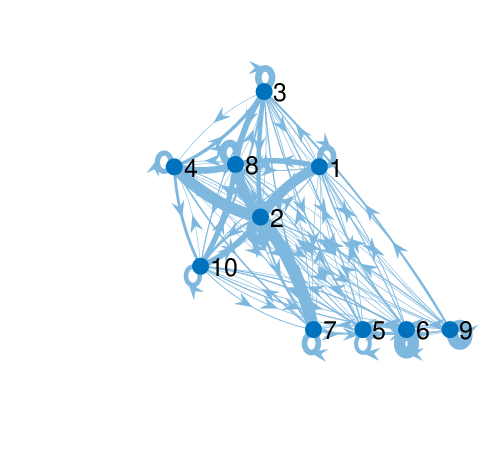} & \includegraphics[width = 4cm]{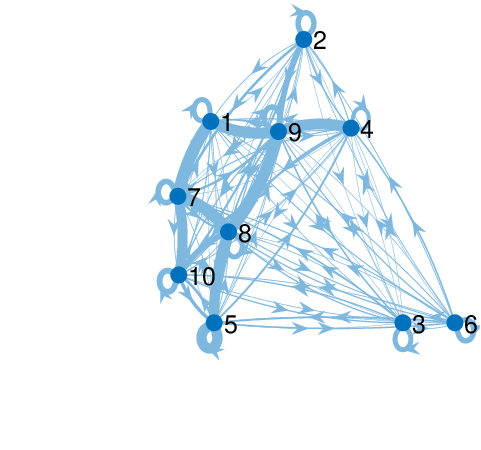} & \includegraphics[width = 4cm]{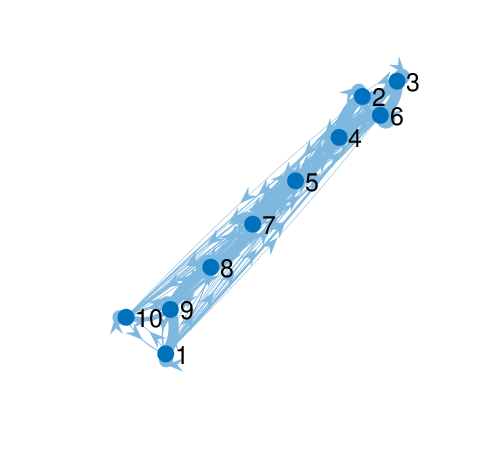} & \includegraphics[width = 4cm]{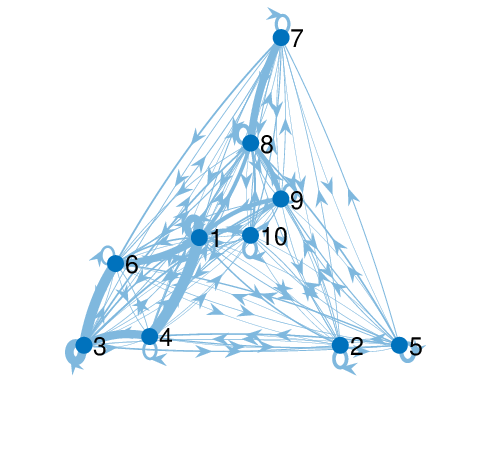}\\
\includegraphics[width = 4cm]{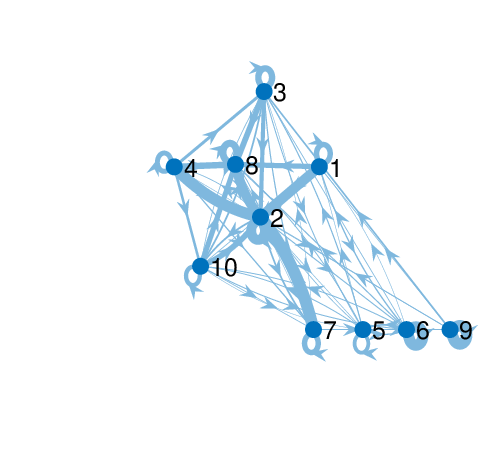} & \includegraphics[width = 4cm]{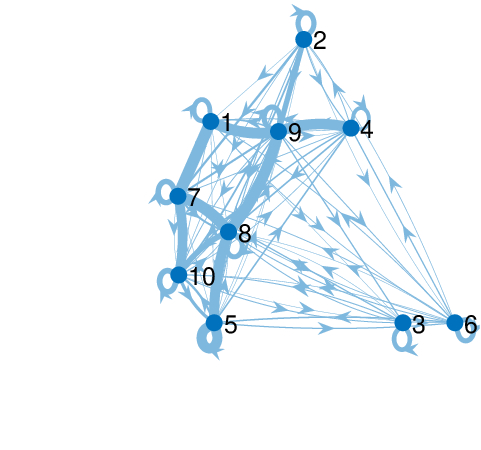} & \includegraphics[width = 4cm]{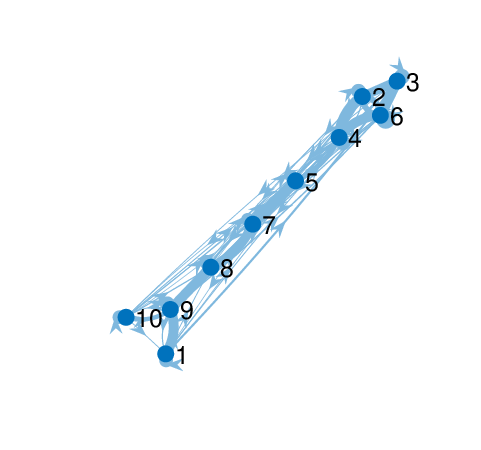} & \includegraphics[width = 4cm]{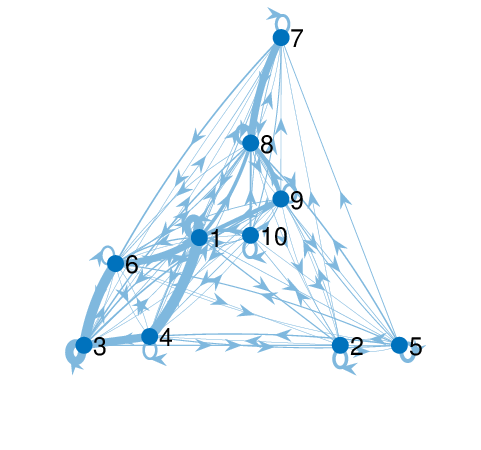}
\end{tabular}
\caption{Graph inference results on $4$ examples extracted from the dataset \emph{WeathN10a}. From top to bottom: Original graph representation of $\A^*$, and of its estimation $\widehat{\A}$, using DGLASSO, MLEM, GRAPHEM, respectively.}
\label{fig:climateN10a}
\end{figure}



\section{Conclusion}
{
This paper proposes a joint graphical modeling approach that incorporates a graphical perspective on the dynamics of the hidden state of a linear Gaussian state-space model. In particular, we propose a joint approach that considers a sparse undirected graph as a prior on the precision matrix of the hidden state noise and a sparse directed graph for the transition matrix that models the state dynamics. By bridging the gap between the static graphical Lasso model and the dynamic state-space model, we provide a novel comprehensive framework for interpretable inference with time-series data. The presented inference method, based on an efficient block alternating majorization-minimization algorithm, enables simultaneous estimation of both graphs and the construction of the filtering/smoothing distribution for the time series. The established convergence of our algorithm, departing from recent nonlinear analysis tools, enhances the reliability and practicality of our approach. Through extensive experimental validation on \cblue{synthetic data}, we have demonstrated the effectiveness and potential of our proposed model and inference algorithm. The results showcase its ability to uncover meaningful insights from time-series data, contributing not only to better forecasting performance but also to a better understanding of complex phenomena in various scientific and engineering domains. Future research can further explore \cblue{automatic hyperparameter tuning}, and extend the presented framework to tackle even more challenging state-space models, and more complex graphical structures.}

\acks{\'E.C. acknowledges support from the European Research Council Starting Grant MAJORIS ERC-2019-STG-850925. The work of V. E. is supported by the \emph{Agence Nationale de la Recherche} of France under PISCES (ANR-17-CE40-0031-01), the Leverhulme Research Fellowship (RF-2021-593), and by ARL/ARO under grant W911NF-22-1-0235.\\
The authors thank Gustau Camps-Valls and his team, for providing the dataset from~\cite{Runge2020}.}


\newpage

\appendix

\section{Proof of Theorem 1}
\label{sec:appendixA}

For any initial state $\x_{0:K}$, with non zero probability, the neg-log-likelihood $(\A,\P) \to \left(\loss_{1:K}(\A,\P) = - \log p(\y_{1:K}|\Ab,\P)\right)$ reads, according to Bayes' rule,
\begin{multline}
(\forall \A \in \mathbb{R}^{N_x \times N_x})(\forall \P \in \mathcal{S}_{N_x})\\
\loss_{1:K}(\A,\P) =  - \log p(\x_{0:K},\y_{1:K} | \Ab,\P) + \log p(\x_{0:K}|\y_{1:K},\Ab,\P). \label{eq:ML1}
\end{multline}
\cblue{Again, note that the l.h.s. does not depend on $\x_{0:K}$, so the r.h.s. is valid for any arbitrary value of $\x_{0:K}$ with non-zero probability under $p(\x_{0:K})$, i.e., for all $\x_{0:K} \in \Real^{(K+1)N_x}$.}
According to Eqs.~\eqref{eq_model_state}-\eqref{eq_model_obs}
\begin{equation}
\log p(\x_{0:K},\y_{1:K} | \Ab,\P)  = \log p(\x_0) + \sum_{k=1}^K \log p(\x_{k}|\x_{k-1}, \Ab,\P) + \sum_{k=1}^K \log p(\y_{k}|\x_k).
\label{eq_log_joint_lk}
\end{equation}
Moreover, using again Eq.~\eqref{eq_model_state} and the statistical model of the state noise,
\begin{align}
(\forall \A \in \mathbb{R}^{N_x \times N_x})&(\forall \P \in \mathcal{S}_{N_x})\\
\sum_{k=1}^K \log p(\x_{k}|\x_{k-1}, \Ab,\P) &= -\frac{{1}}{2}\sum_{k=1}^K \left(\left(\x_k - \A \x_{k-1} \right)^\top \P \left(\x_k - \A \x_{k-1}  \right) - 
\log \det({2\pi} {\P})\right),\\
& = -\frac{{1}}{2}\sum_{k=1}^K \left(\left(\x_k - \A \x_{k-1} \right)^\top \P  \left(\x_k - \A \x_{k-1}  \right) \right) + \frac{K}{2}\log \det({2\pi} {\P}),
\label{eq_log_joint_lk_2}
\end{align}
which concludes the first part of the proof.\\
Let us now consider some $\widetilde{\Ab} \in \mathbb{R}^{N_x \times N_x}$ and $\widetilde{\P} \in \mathcal{S}_{N_x}^{++}$. We start by recalling some known results arising from the EM methodology \citep{Dempster,Wu}, that we specify here for our context for the sake of clarity. First, we notice that
\begin{multline}
\label{eq:intconstant}
(\forall \A \in \mathbb{R}^{N_x \times N_x})(\forall \P \in \mathcal{S}_{N_x})\\
\loss_{1:K}(\A,\P) = - \int \log p(\y_{1:K}|\Ab,\P) p(\x_{0:K} | \y_{1:K},\widetilde{\Ab},\widetilde{\P}) \rm{d} \x_{0:K},
\end{multline}
since $\log p(\y_{1:K}|\Ab,\P)$ is constant with respect to the integration variable, and the distribution $p(\x_{0:K} | \y_{1:K},\widetilde{\Ab},\widetilde{\P})$ integrates to one. Then, according to \eqref{eq:ML1} and \eqref{eq:intconstant}, the expectation of the neg-log-likelihood multiplied by $p(\x_{0:K}|\y_{1:K},\widetilde{\Ab},\widetilde{\P})$ over all possible values of the
unknown state reads:
\begin{multline}
(\forall \A \in \mathbb{R}^{N_x \times N_x})(\forall \P \in \mathcal{S}_{N_x})\quad
 \loss_{1:K}(\A,\P) = \overbrace{- \int p(\x_{0:K} | \y_{1:K},\widetilde{\Ab},\widetilde{\P}) \log p(\x_{0:K},\y_{1:K} | \Ab,\P) \rm{d} \x_{0:K} }^{q(\A,\P;\widetilde{\A},\widetilde{\P})}\\
+ \overbrace{\int p(\x_{0:K} | \y_{1:K},\widetilde{\Ab},\widetilde{\P}) \log p(\x_{0:K}|\y_{1:K},\Ab,\P) \rm{d} \x_{0:K}}^{h(\A,\P;\widetilde{\A},\widetilde{\P})}.
\label{eq:Jensen1}
\end{multline}
In particular, for $(\A,\P) = (\widetilde{\Ab},\widetilde{\P})$, $\loss_{1:K}(\widetilde{\Ab},\widetilde{\P}) = q(\widetilde{\Ab},\widetilde{\P};\widetilde{\A},\widetilde{\P}) + h(\widetilde{\Ab},\widetilde{\P};\widetilde{\A},\widetilde{\P})$ so that
\begin{multline}
(\forall \A \in \mathbb{R}^{N_x \times N_x})(\forall \P \in \mathcal{S}_{N_x}) \quad
\loss_{1:K}(\A,\P) - \loss_{1:K}(\widetilde{\Ab},\widetilde{\P}) = q(\A,\P;\widetilde{\A},\widetilde{\P}) - q(\widetilde{\Ab},\widetilde{\P};\widetilde{\A},\widetilde{\P}) \\
+ h(\A,\P;\widetilde{\A},\widetilde{\P}) - h(\widetilde{\A},\widetilde{\P};\widetilde{\A},\widetilde{\P}).
\label{eq:majexpe}
\end{multline}
Using Gibbs's inequality, $h(\A,\P;\widetilde{\A},\widetilde{\P}) \leq h(\widetilde{\A},\widetilde{\P};\widetilde{\A},\widetilde{\P})$, with equality at $(\A,\P) = (\widetilde{\Ab},\widetilde{\P})$. Thus, using \eqref{eq:majexpe},
that is
\begin{equation}
(\forall \A \in \mathbb{R}^{N_x \times N_x})(\forall \P \in \mathcal{S}_{N_x}) \quad
\loss_{1:K}(\A,\P) \leq q(\A,\P;\widetilde{\A},\widetilde{\P}) + \loss_{1:K}(\widetilde{\Ab},\widetilde{\P}) - q(\widetilde{\Ab},\widetilde{\P};\widetilde{\A},\widetilde{\P}), \label{eq:majexact}
\end{equation}
where equality holds at $(\A,\P) = (\widetilde{\Ab},\widetilde{\P})$. Notice that, for any function reading
\begin{equation}
(\forall \A \in \mathbb{R}^{N_x \times N_x})(\forall \P \in \mathcal{S}_{N_x}) \quad
\maj(\A,\P;\widetilde{\A},\widetilde{\P}) = q(\A,\P;\widetilde{\A},\widetilde{\P}) + \ctAP, \label{eq:majconstant2}
\end{equation}
we obviously still have
\begin{equation}
(\forall \A \in \mathbb{R}^{N_x \times N_x})(\forall \P \in \mathcal{S}_{N_x}) \quad
\loss_{1:K}(\A,\P) \leq \maj(\A,\P;\widetilde{\A},\widetilde{\P}) + \loss_{1:K}(\widetilde{\Ab},\widetilde{\P}) - \maj(\widetilde{\Ab},\widetilde{\P};\widetilde{\A},\widetilde{\P}), \label{eq:majexact2}
\end{equation}
where equality hereagain holds at $(\A,\P) = (\widetilde{\Ab},\widetilde{\P})$.
Using \eqref{eq:majexact2}, \eqref{eq:phik}, \eqref{eq:priorgen} and noticing that, for every $(\theta_A,\theta_P)>0$,
\begin{equation}
(\forall \A \in \mathbb{R}^{N_x \times N_x})(\forall \P \in \mathcal{S}_{N_x}) \quad
\frac{1}{2 \theta_A}\| \A - \widetilde{\A}\|^2_F \geq 0, \quad \frac{1}{2 \theta_P} \| \P - \widetilde{\P}\|^2_F \geq 0, 
\end{equation}
with equality holding at $(\A,\P) = (\widetilde{\Ab},\widetilde{\P})$, we deduce the desired majorizing property
\begin{multline}
(\forall \A \in \mathbb{R}^{N_x \times N_x})(\forall \P \in \mathcal{S}_{N_x}) \quad
\loss(\A,\P) \leq \maj(\A,\P;\widetilde{\A},\widetilde{\P}) + \loss_{1:K}(\widetilde{\Ab},\widetilde{\P}) - \maj(\widetilde{\Ab},\widetilde{\P};\widetilde{\A},\widetilde{\P}) \\
+ \lambda_A \|\A\|_1 + \lambda_P \|\P\|_1 + \frac{1}{2 \theta_A}\| \A - \widetilde{\A}\|^2_F + 
\frac{1 }{2 \theta_P} \| \P - \widetilde{\P}\|^2_F,
\end{multline}
with equality holding at $(\A,\P) = (\widetilde{\Ab},\widetilde{\P})$. The remaining of the proof amounts to expliciting the expression for $(\A,\P) \to \maj(\A,\P;\widetilde{\A},\widetilde{\P})$ satisfying \eqref{eq:majconstant2} with function $q$ defined as in~\eqref{eq:Jensen1}:
\begin{multline}
(\forall \A \in \mathbb{R}^{N_x \times N_x})(\forall \P\in \mathcal{S}_{N_x}) \\ 
q(\Ab,\P;\widetilde{\A},\widetilde{\P}) = - \int p(\x_{0:K} | \y_{1:K},\widetilde{\A},\widetilde{\P}) \log p(\x_{0:K},\y_{1:K} | \Ab,\P) \rm{d} \x_{0:K}.
\label{eq:funQ0}
\end{multline}
Following \citep[Theorem 12.4]{Sarkka} (see also an alternative proof in \citep[Sec. III-B]{elvira2022graphem}), \eqref{eq:Jensen1}-\eqref{eq:majconstant2} hold for

{\small
{
\begin{equation}
(\forall \A \in \mathbb{R}^{N_x \times N_x})(\forall \P \in \mathcal{S}_{N_x}) \quad
\maj(\Ab,\P;\widetilde{\A},\widetilde{\P}) = \frac{K}{2} \text{tr}\left(\P(  
\cblue{\widetilde{\Psib}} - \cblue{\widetilde{\Deltab}} \Ab^\top - \Ab \cblue{\widetilde{\Deltab}}^\top + \Ab \cblue{\widetilde{\Phib}} \Ab^\top) \right) - \frac{K}{2}\log \det({2\pi} {\P}),
\label{eq:g}
\end{equation}
}
}
and
{
\begin{align}
\cblue{\widetilde{\Psib}} & = \frac{1}{K} \sum_{k=1}^K  \left(\covSmooth_k + \meanSmooth_{k} (\meanSmooth_{k})^\top\right), \label{eq:Sig}\\
\cblue{\widetilde{\Deltab}} & = \frac{1}{K} \sum_{k=1}^K  \left(\covSmooth_k \G_{k-1}^\top  + \meanSmooth_{k} (\meanSmooth_{k-1})^\top\right), \label{eq:C} \\
\cblue{\widetilde{\Phib}} & = \frac{1}{K} \sum_{k=1}^K  \left(\covSmooth_{k-1} + \meanSmooth_{k-1} (\meanSmooth_{k-1})^\top\right).
\label{eq:Phi}
\end{align}
}
Hereabove, $(\meanSmooth_k,\covSmooth_k)_{0 \leq k \leq K-1}$ denotes the mean and covariance of the smoothing distribution obtained when running Algs.~\ref{alg_kf}-\ref{alg_rts} using $(\widetilde{\A},\widetilde{\P})$. 
Moreover, the matrix
\begin{equation}
\G_{k} = \covPred_k \widetilde{\A}^\top \left(\widetilde{\A} \covPred_k \widetilde{\A}^\top + \widetilde{\P}^{-1} \right) \label{eq:Gk}
\end{equation}
is defined as in Algorithm~\ref{alg_rts}. This concludes the proof.

\section{Proximal algorithms to solve the inner steps}

\label{sec:appendixB}

We present Algorithms~\ref{algo:ProxA} and \ref{algo:ProxP}, that are proximal splitting algorithms to solve, respectively, the inner problems \eqref{eq:updateAexplicit} and \eqref{eq:updatePexplicit}. Specifically, both algorithms are special instances of the Dykstra-like splitting algorithm from \cite{Dystra2008} (see also \cite[Sec.5]{Combettes2011}), for the minimization of the sum of two convex but non-differentiable functions. Sequence $(\A_n)_{n \in \mathbb{N}}$ (resp. $(\P_n)_{n \in \mathbb{N}}$) is guaranteed to converge to the solution of problem \eqref{eq:updateAexplicit} (resp. \eqref{eq:updatePexplicit}). The proximity steps involved in Algorithms~\ref{algo:ProxA} and \ref{algo:ProxP} have closed form expressions that can be found for instance in \cite{bauschke2017convex}. We explicit them hereafter for the sake of completeness. 

\paragraph*{Proximity of $\ell_1$.} Let $\gamma>0$ and $\widetilde{\V} \in \mathbb{R}^{N_x \times N_x}$. Then, 
\begin{align}
& \operatorname{prox}_{\gamma \ell_1}(\widetilde{\V})\\
& = \left(\text{sign}(\widetilde{V}(n,\ell) \max(0,  \widetilde{V}(n,\ell) - \gamma) \right)_{1 \leq n,\ell \leq N_x},
\end{align}
which amounts to applying the soft thresholding operator with weight $\gamma$ on every entry of the matrix input $\widetilde{\V}$. 

\paragraph*{Proximity of quadratic term.}  Let $\gamma>0$ and $\widetilde{\W} \in \mathbb{R}^{N_x \times N_x}$. Then, by definition, 
\begin{align}
\widehat{\Z} & = \operatorname{prox}_{\W \to \gamma \rm{tr}\left(- \widetilde{\P} \cblue{\widetilde{\Deltab}} \W - \widetilde{\P} \W \cblue{\widetilde{\Deltab}}^\top + \widetilde{\P} \W\cblue{\widetilde{\Psib}} \W^\top \right)} \left(\widetilde{\W} \right)\\
& = \argmin{\W \in \mathbb{R}^{N_x \times N_x}}{\gamma \rm{tr}\left(- \widetilde{\P} \cblue{\widetilde{\Deltab}} \W^\top  - \widetilde{\P} \W \cblue{\widetilde{\Deltab}}^\top + \widetilde{\P} \W\cblue{\widetilde{\Psib}} \W^\top \right) + \frac{1}{2} \| \W - \widetilde{\W}\|_F^2}. \label{eq:proxtrace}
\end{align}
The optimality condition for \eqref{eq:proxtrace} gives
\begin{equation}
- \gamma \widetilde{\P} \cblue{\widetilde{\Deltab}} - \gamma \cblue{\widetilde{\Deltab}}^\top \widetilde{\P} + \gamma \widetilde{\P} \widehat{\Z} \cblue{\widetilde{\Psib}}
+ \gamma \widetilde{\P}^\top \widehat{\Z} \cblue{\widetilde{\Psib}}^\top + \widehat{\Z} - \widetilde{\W} = \mathbf{0}.
\end{equation}
Since $\widetilde{\P} \in \mathcal{S}_{N_x}^{++}$, and $\cblue{\widetilde{\Psib}} \in \mathcal{S}_{N_x}$ (by construction), we have equivalently,
\begin{equation}
- \gamma \cblue{\widetilde{\Deltab}} - \gamma  \widetilde{\P}^{-1} \cblue{\widetilde{\Deltab}}^\top \widetilde{\P} + 2 \gamma \widehat{\Z} \cblue{\widetilde{\Psib}}
 + \widetilde{\P}^{-1} \widehat{\Z} - \widetilde{\P}^{-1} \widetilde{\W} = \mathbf{0}.
\end{equation}
Thus,
\begin{equation}
\widehat{\Z} = \operatorname{lyapunov}\left(\widetilde{\P}^{-1}, 2 \gamma \cblue{\widetilde{\Psib}},\gamma (\cblue{\widetilde{\Deltab}} + \widetilde{\P}^{-1} \cblue{\widetilde{\Deltab}}^\top \widetilde{\P}) + \widetilde{\P}^{-1} \widetilde{\W}\right),
\end{equation}
where $\mathsf{A} = \operatorname{lyapunov}(\mathsf{X},\mathsf{Y},\mathsf{Z})$ provides the solution to the Lyapunov equation $\mathsf{XA + AY = Z}$. 
\paragraph*{Proximity of log-determinant term.} Let $\gamma>0$ and $\widetilde{\W} \in \mathcal{S}_{N_x}^{++}$. Then, by definition, 
\begin{align}
\widehat{\Z} & = \operatorname{prox}_{\W \to \gamma (- \log \det(\W) + \text{tr}(\W \cblue{\widetilde{\Pib}}) )} \left(\widetilde{\W} \right)\\
& = \argmin{\W \in \mathcal{S}_{N_x}}{ - \gamma \log \det(\W) + \gamma \text{tr}(\W \cblue{\widetilde{\Pib}}) + \frac{1}{2} \| \W - \widetilde{\W}\|_F^2}.
\end{align}
Using \cite[Chap. 24]{bauschke2017convex} (see also  \cite{Benfenati18}), for every $\alpha>0$, 
\begin{align}
\widehat{\Z} & =  \U \text{Diag}\left(\left(\frac{1}{2} (\omega(n) + \sqrt{\omega(n)^2 + 4  \gamma}) \right)_{1 \leq n \leq N_x} \right) \U^\top
\end{align}
where $\boldsymbol{\omega} = (\omega(n))_{1 \leq n \leq N_x} $ gathers the eigenvalues of $\widetilde{\W} - \gamma \cblue{\widetilde{\Pib}} \in \mathcal{S}_{N_x}$ and $\U  \in \mathbb{R}^{N_x \times N_x}$ is an orthogonal matrix such that 
\begin{equation}
\widetilde{\W} - \gamma \cblue{\widetilde{\Pib}} = \U \text{Diag}(\boldsymbol{\omega}) \U^\top.
\end{equation}

\begin{table}[h!]
\vspace{4mm}
    \centering
    \begin{tabular}{|p{0.95\columnwidth}|}
    \hline
\begin{Algoritmo}
\label{algo:ProxA}
Proximal splitting method to solve \eqref{eq:updateAexplicit}
\begin{enumerate}
  \item[] \textbf{Inputs.} $\widetilde{\A}, \widetilde{\P}, \cblue{\widetilde{\Psib}}, \cblue{\widetilde{\Deltab}}, \cblue{\widetilde{\Phib}}$. Precision $\xi>0$.
\item \textbf{Setting.} Set stepsize $\vartheta \in (0,2)$.
  \item \textbf{Initialization.}  Set $\V_0 = \widetilde{\A}$.
\item \textbf{Recursive step.} For $n=1,2,\ldots$:  
\begin{equation*}
\begin{array}{l}
\A_n = \operatorname{prox}_{\theta_A \lambda_A \ell_1}\left(\widetilde{\Ab} - \V_n  \right)\\
\W_n = \V_n + \vartheta \A_n\\
\Z_n = \operatorname{prox}_{\W \to \frac{\vartheta \theta_A K}{2} \rm{tr}\left(- \widetilde{\P} \cblue{\widetilde{\Deltab}} \W - \widetilde{\P} \W \cblue{\widetilde{\Deltab}}^\top + \widetilde{\P} \W\cblue{\widetilde{\Psib}} \W^\top \right)} \left( \vartheta^{-1} \W_n \right)\\
\V_{n+1} = \W_n - \vartheta \Z_n.
 \end{array}
\end{equation*}
\item[] If  $|\mathcal{C}_1(\A_n) - \mathcal{C}_1(\A_{n-1}) | \leq \xi$, \textbf{stop the recursion}.
 %
\item[] \textbf{{Output.}} Transition matrix $\widehat{\A} = \A_n$.
\end{enumerate}
\end{Algoritmo}\\
        \hline
\end{tabular}\\
%
\vspace{1cm}
    \centering
    \begin{tabular}{|p{0.95\columnwidth}|}
    \hline
\begin{Algoritmo}
\label{algo:ProxP}
Proximal splitting method to solve \eqref{eq:updatePexplicit}
\begin{enumerate}
  \item[] \textbf{Inputs.} $\widetilde{\A}, \widetilde{\P}, \cblue{\widetilde{\Psib}}, \cblue{\widetilde{\Deltab}}, \cblue{\widetilde{\Phib}}$. Precision $\xi>0$.
\item \textbf{Setting.} Set stepsize $\vartheta \in (0,2)$ and $\cblue{\widetilde{\Pib}}$ as in \eqref{eq:Pib}.
  \item \textbf{Initialization.}  Set $\V_0 = \widetilde{\P}$.
\item \textbf{Recursive step.} For $n=1,2,\ldots$:  
\begin{equation*}
\begin{array}{l}
\P_n = \operatorname{prox}_{\theta_P \lambda_P \ell_1}\left(\widetilde{\P} - \V_n  \right)\\
\W_n = \V_n + \vartheta \P_n\\
\Z_n = \operatorname{prox}_{\W \to \frac{\vartheta \theta_P K}{2} \left(- \log \det(\W) + \rm{tr}(\cblue{\widetilde{\Pib}} \W)\right)} \left( \vartheta^{-1} \W_n \right)\\
\V_{n+1} = \W_n - \vartheta \Z_n.
 \end{array}
\end{equation*}
\item[] If  $|\mathcal{C}_2(\A_n) - \mathcal{C}_2(\A_{n-1}) | \leq \xi$, \textbf{stop the recursion}. 
 %
\item[] \textbf{{Output.}} Precision matrix $\widehat{\P} = \P_n$.
\end{enumerate}
\end{Algoritmo}\\
        \hline
\end{tabular}
\end{table}

\section{Additional experiments}
We present here additional experimental results completing Section~\ref{sec:synthetic}.

\subsection{Robustness to initialization} 
\label{appendix:init}

\acro~algorithm amounts to minimizing a non-convex loss function. As such, its results might be sensitive to the initialization of the algorithm. To evaluate this aspect, we consider the computation of $(\widehat{\A},\widehat{\P})$ given the observation of a single time series generated by the ground truth LG-SSM, when using 50 different initializations of DGLASSO algorithm. To do so, we use the same initialization strategy as discussed above, now with $(a,p)$ randomly selected as $p \sim \mathcal{U}([0,1])$ and $a \sim \mathcal{U}([0,1])$. Figure~\ref{fig:init} displays the box plots for RMSE and F1 scores obtained for dataset A. One can notice that the box plots are very concentrated, showing a good robustness of the method to its initialization, with the wider spreading observed for the F1 score on $\A$. Similar behavior was observed for the other three datasets.

\begin{figure}
\centering
\begin{tabular}{@{}c@{}c@{}}
\includegraphics[width = 7cm]{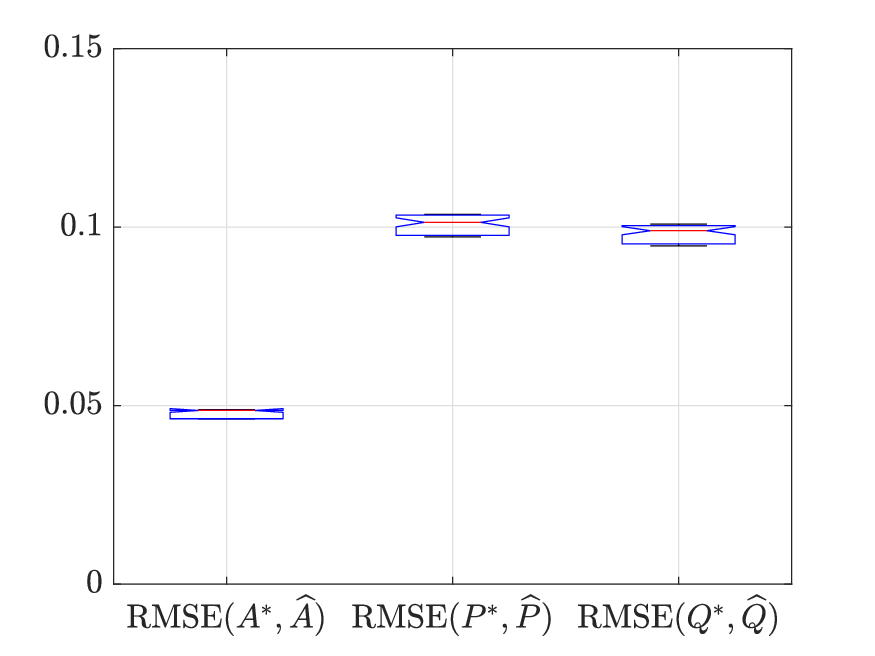} &
\includegraphics[width = 7cm]{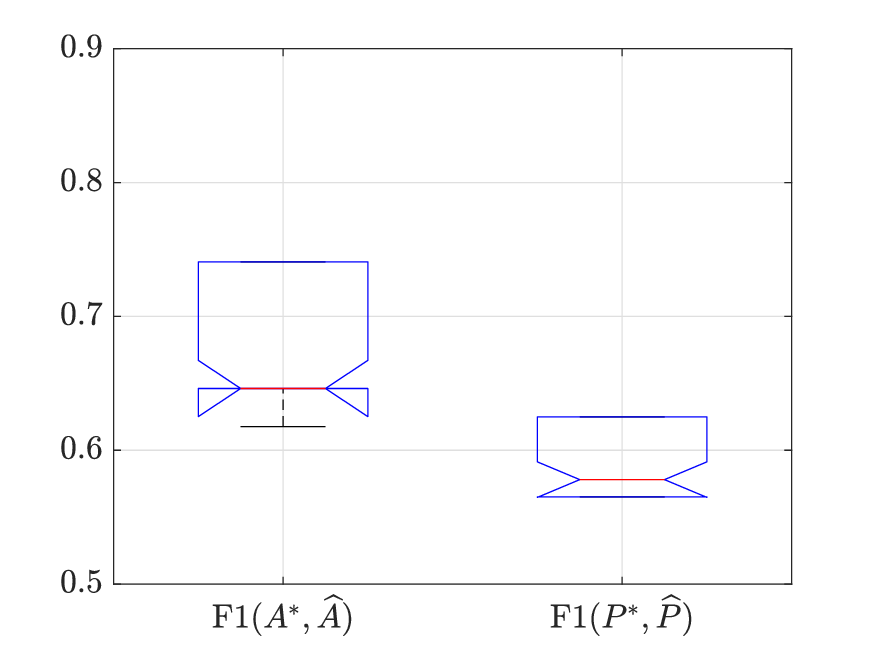} 
\end{tabular}
\caption{Box plots for the RMSE (left) and F1 (right) scores for retrieving $(\A,\P,\Q)$ matrices, when running DGLASSO on one single LG-SSM time series using dataset A, and 50 random initializations $(\A^{(0)},\P^{(0)})$. Noticeably, low variability is observed for all metrics.}
\label{fig:init}
\end{figure}

\subsection{Influence of sparsity level}
\label{appendix:sparse}
We evaluate here the performance of \acro, as well as the benchmarks, when varying the sparsity level of the ground truth matrices. To do so, we perform slight changes in the dataset, to vary the sparsity pattern of the matrices (i.e., the edge structure of the graphs). 
First, we modify the ground truth matrix $\A^*$ by keeping $s_A \in \{27,15,10,5\}$ entries of it, within the $27$ block diagonal ones, to be non-zero, the others being set to zero. We then rescaled the matrix to keep a spectral norm \cblue{equal to} $0.99$. Matrix $\Q^*$ is taken from dataset A. The results are reported in Table \ref{tab:resultssparseA}. 

As we can observe, the performance of MLEM, in terms of RMSE and F1 score, drop dramatically when the sparsity level on $\A$ increases. This is expected as this approach does not promote any sparsity prior on matrix $\A$. The best AUC are either obtained by MLEM, DGLASSO or rGLASSO (for $\P$), depending on the test cases. GLASSO/rGLASSO metrics slightly improve when $\A^*$ gets sparser, which is expected, as their assumption of a zero transition matrix gets more realistic. Hereagain, \acro~outperforms the benchmarks in most cases. 


\begin{table}[t]
\caption{Results for sparsity levels $s_A \in \{27,15,10,5\}$ of $\A^*$, using $(\P^*,\Q^*)$ from dataset A. The first set of results, with $s_A = 27$ identifies with the first row block of Tab.~\ref{tab:results}.}
\hspace*{-2cm}
{\tiny
{
\begin{tabular}{|c|c|c|c|c||c|c|c||c||c|c||c|c|}
\cline{3-13}
\multicolumn{2}{c|}{ } & \multicolumn{3}{|c||}{Estimation of $\A$} & \multicolumn{3}{|c||}{Estimation of $\P$} & Estim. $\Q$ & \multicolumn{2}{|c||}{State distrib.} & \multicolumn{2}{|c|}{Predictive distrib.}\\
 \cline{2-13}
\multicolumn{1}{c|}{ } & Method & RMSE & AUC & F1 & RMSE & AUC & F1 & RMSE & cNMSE$(\mub^*,\widehat{\mub})$ & cNMSE$(\mub^{\text{s}*},\widehat{\mub}^\text{s})$ & cNMSE$(\nub^*,\widehat{\nub})$ & $\mathcal{L}_{1:K}(\widehat{\A},\widehat{\P})$\\
\hline

\parbox[t]{2mm}{\multirow{5}{*}{\rotatebox[origin=c]{90}{$s_A = 27$}}} & DGLASSO & $\num{0.060525}$ & $\num{0.84255}$ & $\num{0.64057}$ & \ubold $\num{0.081927}$ & $\num{0.77801}$ & $\ubold \num{0.69812}$& $\ubold \num{0.082601}$ & $ \ubold \num{6.3935e-08}$ & $\ubold \num{1.0504e-07}$ & $\ubold \num{2.9837e-4}$ & $\ubold \num{12307.1687}$ \\
  & MLEM & $\num{0.076063}$ & $\num{0.81693}$ & $\num{0.5}$ & $\num{0.10528}$ & $\num{0.85739}$ & $\num{0.5}$& $\num{0.10152}$ & $\num{1.0947e-07}$ & $\num{1.8025e-07}$ & $\num{4.8425e-4}$ & $\num{12341.2045}$ \\
	& GLASSO & NA & NA & NA & $\num{0.81838}$ & $\num{0.80394}$ & $\num{0.49616}$ & $\num{1073.5104}$ & $\num{4.4853e-06}$ & $\num{7.1804e-06}$& $\num{1}$ & $\num{28459.2935}$ \\
	& rGLASSO & NA & NA & NA & $\num{0.76376}$ & $\ubold \num{0.92377}$ & $\num{0.59759}$ & $\num{31.6891}$ & $\num{2.8256e-06}$ & $\num{5.4924e-06}$ & $\num{1}$ & $\num{22957.6928}$ \\
		& GRAPHEM & $\ubold \num{0.044631}$ & $\ubold \num{0.89481}$ & $\ubold \num{0.84654}$ & NA & NA & NA & NA & $\num{4.3642e-06}$ & $\num{6.9443e-06}$ & $\num{2.98e-4}$ & $\num{29035.0296}$ \\
\hline
\hline
\parbox[t]{2mm}{\multirow{5}{*}{\rotatebox[origin=c]{90}{$s_A = 15$}}} & DGLASSO & $\ubold \num{0.10845}$ & $\num{0.91644}$ & $\num{0.78116}$ & $\ubold \num{0.077396}$ & $\num{0.88335}$ & $\ubold \num{0.85543}$& $\ubold \num{0.07547}$ & $\ubold \num{8.2675e-07}$ & $\ubold \num{1.1574e-06}$ & $\ubold \num{5.183e-3}$ & $\ubold \num{11801.2202}$\\
& MLEM & $\num{0.15894}$ & $\ubold  \num{0.93343}$ & $\num{0.3125}$ & $\num{0.10501}$ & $\ubold  \num{0.89948}$ & $\num{0.5}$ & $\num{0.10088}$ & $\num{1.2975e-06}$ & $\num{ 2.1095e-06}$ & $\num{6.0517e-3}$ & $\num{11806.4833}$\\
& GLASSO & NA & NA & NA & $\num{0.46309}$ & $\num{0.70222}$ & $\num{0.57579}$ & $\num{7.0875}$ & $\num{2.0638e-05}$ & $\num{3.7229e-05}$ & $\num{1}$ & $\num{14258.0095}$ \\
& rGLASSO & NA & NA & NA & $\num{0.42459}$ & $\num{0.72255}$ & $\num{0.59772}$ & $\num{4.5624}$ & $\num{1.3789e-05}$ & $\num{2.7684e-05}$ & $\num{1}$ & $\num{14432.7653}$ \\
& GRAPHEM & $ \num{0.1268}$ & $ \num{0.92766}$ & $\ubold \num{0.72061}$ & NA & NA & NA & NA & $\num{5.0766e-05}$ & $\num{6.4547e-05}$ & $\num{3.8015e-3}$ & $\num{29034.7154}$ \\ 
\hline
\hline
\parbox[t]{2mm}{\multirow{5}{*}{\rotatebox[origin=c]{90}{$s_A = 10$}}} & DGLASSO & $ \num{0.089227}$ & $\num{0.98144}$ & $\ubold \num{0.93849}$ & $\ubold \num{0.083783}$ & $\num{0.76786}$ & $\ubold \num{0.68562}$& $\ubold \num{0.084488}$ & $\ubold \num{9.3856e-07}$ & $\ubold \num{1.4145e-06}$ & $\ubold \num{9.7475e-3}$ & $ \num{12290.7891}$\\
& MLEM & $\num{0.16731}$ & $\ubold \num{0.99994}$ & $\num{0.21978}$ & $\num{ 0.10536}$ & $\ubold \num{0.89959}$ & $\num{0.5}$ & $\num{0.10103}$ & $\num{2.3831e-06}$ & $\num{3.9179e-06}$ & $\num{2.5917e-2}$ & $\ubold \num{11819.7389}$\\
& GLASSO & NA & NA & NA & $\num{0.36481}$ & $\num{0.66667}$ & $\num{0.5}$ & $\num{0.46889}$ & $\num{3.0258e-05}$ & $\num{5.958e-05}$ & $\num{1}$ & $\num{13173.4621}$\\
& rGLASSO & NA & NA & NA & $\num{0.33857}$ & $\num{0.74942}$ & $\ubold \num{0.69861}$ & $\num{0.63211}$ & $\num{1.9702e-05}$ & $\num{4.3746e-05}$ & $\num{1}$ & $\num{13497.0609}$\\
& GRAPHEM & $\ubold \num{ 0.11748}$ & $\ubold \num{0.99986}$ & $ \num{ 0.74478}$ & NA & NA & NA & NA & $\num{8.9556e-05}$ & $\num{1.1244e-4}$ & $\num{1.2255e-2}$ & $\num{29034.7268}$\\ 
\hline
\hline
\parbox[t]{2mm}{\multirow{5}{*}{\rotatebox[origin=c]{90}{$s_A = 5$}}} & DGLASSO & $\ubold  \num{0.099734}$ & $\num{0.82279}$ & $\ubold \num{0.77369}$ & $\ubold \num{0.084239}$ & $ \num{0.76421}$ & $\ubold \num{0.68226}$& $\ubold \num{0.085378}$ & $ \ubold \num{9.2174e-07}$ & $ \ubold \num{1.2055e-06}$ & $\ubold \num{9.9813e-3}$ & $\ubold \num{12268.6671}$ \\
& MLEM & $\num{0.2338}$ & $\num{0.89305}$ & $\num{0.11628}$ & $\num{0.10581}$  & $\ubold \num{0.85602}$ & $\num{0.5}$ & $\num{0.10182}$ & $\num{2.2318e-06}$ & $\num{3.5375e-06}$ & $\num{5.6104e-2}$ & $\num{12312.7487}$\\
& GLASSO & NA & NA & NA & $\num{0.24241}$ & $\num{0.66667}$ & $\num{0.5}$ & $\num{0.30584}$ & $\num{1.4063e-05}$ & $\num{3.0952e-05}$ & $\num{1}$ & $\num{12892.1439}$\\
& rGLASSO & NA & NA & NA & $\num{0.26181}$ & $\num{0.70354}$ & $\num{0.56627}$ & $\num{0.45122}$ & $\num{1.1953e-05}$ & $\num{2.8842e-05}$ & $\num{1}$ & $\num{12834.7367}$\\
& GRAPHEM & $\num{0.12723}$ & $\ubold  \num{0.93053}$ & $ \num{0.57328}$ & NA & NA & NA & NA & $\num{1.0603e-4}$ & $\num{1.2245e-4}$ & $\num{1.477e-2}$ & $\num{29034.4563}$\\ 
\hline
\end{tabular}
}
}
\label{tab:resultssparseA}
\end{table}


\vskip 0.2in

\bibliography{references}

\begin{thebibliography}{124}
\providecommand{\natexlab}[1]{#1}
\providecommand{\url}[1]{\texttt{#1}}
\expandafter\ifx\csname urlstyle\endcsname\relax
  \providecommand{\doi}[1]{doi: #1}\else
  \providecommand{\doi}{doi: \begingroup \urlstyle{rm}\Url}\fi

\bibitem[Alippi and Zambon(2023)]{Alippi2023}
C.~Alippi and D.~Zambon.
\newblock Graph {K}alman filters.
\newblock Technical report, 2023.
\newblock https://arxiv.org/pdf/2303.12021.pdf.

\bibitem[Aminikhanghahi and Cook(2017)]{Aminikhanghahi2017}
S.~Aminikhanghahi and D.~Cook.
\newblock A survey of methods for time series change point detection.
\newblock \emph{Knowledge and Information Systems}, 51:\penalty0 339--367,
  2017.

\bibitem[Andrieu et~al.(2010)Andrieu, Doucet, and
  Holenstein]{andrieu2010particle}
C.~Andrieu, A.~Doucet, and R.~Holenstein.
\newblock Particle {M}arkov chain {M}onte {C}arlo methods.
\newblock \emph{Journal of the Royal Statistical Society: Series B (Statistical
  Methodology)}, 72\penalty0 (3):\penalty0 269--342, 2010.

\bibitem[Attouch et~al.(2010)Attouch, Bolte, Redont, and
  Soubeyran]{Attouch2010}
H.~Attouch, J.~Bolte, P.~Redont, and A.~Soubeyran.
\newblock Proximal alternating minimization and projection methods for
  nonconvex problems:an approach based on the kurdyka-lojasiewicz inequality.
\newblock \emph{Mathematics of Operation Research}, 35\penalty0 (2):\penalty0
  438--457, 2010.

\bibitem[Attouch et~al.(2013)Attouch, Bolte, and Svaiter]{Attouch2013}
H.~Attouch, J.~Bolte, and B.~Svaiter.
\newblock Convergence of descent methods for semi-algebraic and tame problems:
  proximal algorithms,forward–backward splitting.
\newblock \emph{Mathematical Programming}, 137\penalty0 (1):\penalty0 91--129,
  Feb. 2013.

\bibitem[Bach et~al.(2012)Bach, Jenatton, Mairal, and Obozinski]{BachSparse}
F.~Bach, R.~Jenatton, J.~Mairal, and G.~Obozinski.
\newblock Optimization with sparsity-inducing penalties.
\newblock \emph{Foundations and Trends in Machine Learning}, 4\penalty0
  (1):\penalty0 1–106, Jan. 2012.

\bibitem[{Bach} and {Jordan}(2004)]{Bach04}
F.~R. {Bach} and M.~I. {Jordan}.
\newblock Learning graphical models for stationary time series.
\newblock \emph{IEEE Transactions on Signal Processing}, 52\penalty0
  (8):\penalty0 2189--2199, Aug. 2004.

\bibitem[{Barber} and {Cemgil}(2010)]{Barber10}
D.~{Barber} and A.~T. {Cemgil}.
\newblock Graphical models for time-series.
\newblock \emph{IEEE Signal Processing Magazine}, 27\penalty0 (6):\penalty0
  18--28, Nov 2010.

\bibitem[Bauschke and Combettes(2008)]{Dystra2008}
H.~H. Bauschke and P.~L. Combettes.
\newblock A {D}ykstra-like algorithm for two monotone operators.
\newblock \emph{Pacific Journal of Optimization}, 4:\penalty0 383--391, 2008.

\bibitem[Bauschke and Combettes(2017)]{bauschke2017convex}
H.~H. Bauschke and P.~L. Combettes.
\newblock \emph{Convex Analysis and Monotone Operator Theory in {H}ilbert
  Spaces}.
\newblock Springer, 2017.

\bibitem[Belilovsky et~al.(2017)Belilovsky, Kastner, Varoquaux, and
  Blaschko]{Belilovsky}
E.~Belilovsky, K.~Kastner, G.~Varoquaux, and M.~B. Blaschko.
\newblock Learning to discover sparse graphical models.
\newblock In \emph{Proceedings of the 34th International Conference on Machine
  Learning (ICML 2017)}, volume~70, pages 440--448, 2017.

\bibitem[Benfenati et~al.(2018)Benfenati, Chouzenoux, Duval, Pesquet, and
  Pirayre]{BenfenatiReview}
A.~Benfenati, E.~Chouzenoux, L.~Duval, J.-C. Pesquet, and A.~Pirayre.
\newblock A review on graph optimization and algorithmic frameworks.
\newblock Technical report, 2018.
\newblock https://hal.science/hal-01901499/document.

\bibitem[Benfenati et~al.(2020)Benfenati, Chouzenoux, and Pesquet]{Benfenati18}
A.~Benfenati, E.~Chouzenoux, and J.-C. Pesquet.
\newblock Proximal approaches for matrix optimization problems: Application to
  robust precision matrix estimation.
\newblock \emph{Signal Processing}, 169:\penalty0 107417, Apr. 2020.

\bibitem[Bertrand et~al.(2022)Bertrand, Klopfenstein, Massias, Blondel, and
  Vaiter]{Bertrand2022}
Q.~Bertrand, Q.~Klopfenstein, M.~Massias, M.~Blondel, and S.~Vaiter.
\newblock Implicit differentiation for fast hyperparameter selection in
  non-smooth convex learning.
\newblock \emph{Journal of Machine Learning Research}, \penalty0 (23):\penalty0
  1--43, 2022.

\bibitem[Bolte et~al.(2014)Bolte, Sabach, and Teboulle]{BoltePALM}
J.~Bolte, S.~Sabach, and M.~Teboulle.
\newblock Proximal alternating linearized minimization for nonconvex and
  nonsmooth problems.
\newblock \emph{Mathematical Programming}, 146\penalty0 (1):\penalty0 459--494,
  Aug. 2014.

\bibitem[Bonettini et~al.(2018)Bonettini, Prato, and Rebegoldi]{Bonettini2018}
S.~Bonettini, M.~Prato, and S.~Rebegoldi.
\newblock A block coordinate variable metric linesearch based proximal gradient
  method.
\newblock \emph{Computational Optimization and Applications}, 71\penalty0
  (1):\penalty0 5--52, 2018.

\bibitem[Bonettini et~al.(2021)Bonettini, Prato, and Rebegoldi]{Bonettini2021}
S.~Bonettini, M.~Prato, and S.~Rebegoldi.
\newblock New convergence results for the inexact variable metric
  forward–backward method.
\newblock \emph{Applied Mathematics and Computation}, 392:\penalty0 125719,
  2021.

\bibitem[Brendan and Tenenbaum(2010)]{Brendan2010}
L.~Brendan and J.~Tenenbaum.
\newblock Discovering structure by learning sparse graphs.
\newblock In \emph{Proceedings of the 32nd Annual Meeting of the Cognitive
  Science Society (CogSci 2010)}, pages 778--784, Portland, Oregon, USA, 11-14
  Aug. 2010.

\bibitem[Briers et~al.(2010)Briers, Doucet, and Maskell]{Briers05}
M.~Briers, A.~Doucet, and S.~Maskell.
\newblock Smooting algorithms for state-space models.
\newblock \emph{Annals of the Institute of Statistical Mathematics},
  62\penalty0 (61), 2010.

\bibitem[Buchnik et~al.(2023)Buchnik, Sagi, Leinwand, Loya, Shlezinger, and
  Routtenberg]{Buchnik2023}
I.~Buchnik, G.~Sagi, N.~Leinwand, Y.~Loya, N.~Shlezinger, and T.~Routtenberg.
\newblock Gsp-kalmannet: Tracking graph signals via neural-aided {K}alman
  filtering.
\newblock Technical report, 2023.
\newblock https://arxiv.org/pdf/2311.16602.

\bibitem[B{\"u}hlmann and Van De~Geer(2011)]{buhlmann2011statistics}
P.~B{\"u}hlmann and S.~Van De~Geer.
\newblock \emph{Statistics for High-Dimensional Data: Methods, Theory and
  Applications}.
\newblock Springer Science \& Business Media, 2011.

\bibitem[Cappe et~al.(2005)Cappe, Moulines, and Ridden]{Cappe2005}
O.~Cappe, E.~Moulines, and T.~Ridden.
\newblock \emph{Inference in Hidden Markov Models}.
\newblock Springer Series in Statistics. Springer New York, NY, 1st edition,
  2005.

\bibitem[Chami et~al.(2022)Chami, Abu-El-Haija, Perozzi, R\'e, and
  Murphy]{JMLR:Chami}
I.~Chami, S.~Abu-El-Haija, B.~Perozzi, C.~R\'e, and K.~Murphy.
\newblock Machine learning on graphs: A model and comprehensive taxonomy.
\newblock \emph{Journal of Machine Learning Research}, 23\penalty0
  (89):\penalty0 1--64, 2022.

\bibitem[Chandrasekaran et~al.(2012)Chandrasekaran, Parrilo, and
  Willsky]{Chandrasekaran2012}
V.~Chandrasekaran, P.~Parrilo, and A.~Willsky.
\newblock Latent variable graphical model selection via convex optimization.
\newblock \emph{The Annals of Statistics}, 40\penalty0 (4):\penalty0
  1935--1967, 2012.

\bibitem[Chaux et~al.(2007)Chaux, Combettes, Pesquet, and Wajs]{Chaux_2007}
C.~Chaux, P.~L. Combettes, J.-C. Pesquet, and V.~R. Wajs.
\newblock A variational formulation for frame-based inverse problems.
\newblock \emph{Inverse Problems}, 23\penalty0 (4):\penalty0 1495--1518, June
  2007.

\bibitem[Chen and Li(2023)]{chen2023overview}
X.~Chen and Y.~Li.
\newblock An overview of differentiable particle filters for data-adaptive
  sequential {B}ayesian inference.
\newblock Technical report, 2023.
\newblock https://arxiv.org/abs/2302.09639.

\bibitem[Chen et~al.(2021)Chen, Wen, and Li]{chen2021differentiable}
X.~Chen, H.~Wen, and Y.~Li.
\newblock Differentiable particle filters through conditional normalizing flow.
\newblock In \emph{Proceedings of the IEEE 24th International Conference on
  Information Fusion (FUSION 2021)}, pages 1--6, 2021.

\bibitem[Cherni et~al.(2020)Cherni, Chouzenoux, Duval, and Pesquet]{Cherni2020}
A.~Cherni, E.~Chouzenoux, L.~Duval, and J.-C. Pesquet.
\newblock {SPOQ} lp-over-lq regularization for sparse signal recovery applied
  to mass spectrometry.
\newblock \emph{IEEE Transactions on Signal Processing}, 68:\penalty0
  6070--6084, 2020.

\bibitem[Cho et~al.(2014)Cho, van Merrienboer, G{\"{u}}l{\c{c}}ehre, Bahdanau,
  Bougares, Schwenk, and Bengio]{KyunghyunGRU}
K.~Cho, B.~van Merrienboer, {\c{C}}.~G{\"{u}}l{\c{c}}ehre, D.~Bahdanau,
  F.~Bougares, H.~Schwenk, and Y.~Bengio.
\newblock Learning phrase representations using {RNN} encoder-decoder for
  statistical machine translation.
\newblock In \emph{Proceedings of the Conference on Empirical Methods in
  Natural Language Processing (EMNLP 2014)}, pages 1724--1734, Doha, Qatar,
  25--29 Oct. 2014.

\bibitem[Chopin et~al.(2013)Chopin, Jacob, and
  Papaspiliopoulos]{chopin2013smc2}
N.~Chopin, P.~E. Jacob, and O.~Papaspiliopoulos.
\newblock Smc2: an efficient algorithm for sequential analysis of state space
  models.
\newblock \emph{Journal of the Royal Statistical Society Series B: Statistical
  Methodology}, 75\penalty0 (3):\penalty0 397--426, 2013.

\bibitem[Chouzenoux and Elvira(2020)]{Chouzenoux2020}
E.~Chouzenoux and V.~Elvira.
\newblock {GraphEM: EM} algorithm for blind {K}alman filtering under graphical
  sparsity constraints.
\newblock In \emph{In Proceedings of the 45th IEEE International Conference on
  Acoustics, Speech, and Signal Processing (ICASSP 2020)}, pages 5840--5844,
  4--8 May 2020.

\bibitem[Chouzenoux and Elvira(2023)]{Chouzenoux2023}
E.~Chouzenoux and V.~Elvira.
\newblock Graphit: Iterative reweighted l1 algorithm for sparse graph inference
  in state-space models.
\newblock In \emph{Proceedings of the International Conference on Acoustics,
  Speech, and Signal Processing (ICASSP 2023)}, Rhodes Island, Greece, 4-10
  June 2023.

\bibitem[Chouzenoux and Elvira(2024)]{chouzenoux2024graphical}
E.~Chouzenoux and V.~Elvira.
\newblock Graphical inference in non-{M}arkovian linear-{G}aussian state-space
  models.
\newblock In \emph{Proceedings of the IEEE International Conference on
  Acoustics, Speech and Signal Processing (ICASSP 2024)}, Seoul, South Korea,
  Apr. 2024.

\bibitem[Chouzenoux et~al.(2014)Chouzenoux, Pesquet, and
  Repetti]{ChouzenouxVMFB}
E.~Chouzenoux, J.-C. Pesquet, and A.~Repetti.
\newblock Variable metric forward-backward algorithm for minimizing the sum of
  a differentiable function and a convex function.
\newblock \emph{Journal of Optimization Theory and Applications}, 162\penalty0
  (1):\penalty0 107--132, 2014.

\bibitem[Chouzenoux et~al.(2016)Chouzenoux, Pesquet, and
  Repetti]{Chouzenoux2016jogo}
E.~Chouzenoux, J.-C. Pesquet, and A.~Repetti.
\newblock A block coordinate variable metric forward-backward algorithm.
\newblock \emph{Journal of Global Optimization}, 66\penalty0 (3):\penalty0
  457--485, 2016.

\bibitem[Chouzenoux et~al.(2019)Chouzenoux, Lau, and Pesquet]{LauJMIV}
E.~Chouzenoux, T.~T.-K. Lau, and J.-C. Pesquet.
\newblock Optimal multivariate {G}aussian fitting with applications to {PSF}
  modeling in two-photon microscopy imaging.
\newblock \emph{Journal of Mathematical Imaging and Vision}, 61\penalty0
  (7):\penalty0 1037--1050, 2019.

\bibitem[Cini et~al.(2023)Cini, Zambon, and Alippi]{JMLR:Cini}
A.~Cini, D.~Zambon, and C.~Alippi.
\newblock Sparse graph learning from spatiotemporal time series.
\newblock \emph{Journal of Machine Learning Research}, 24\penalty0
  (242):\penalty0 1--36, 2023.

\bibitem[Combettes and Pesquet(2011)]{Combettes2011}
P.~L. Combettes and J.-C. Pesquet.
\newblock Proximal splitting methods in signal processing.
\newblock In \emph{Fixed-point algorithms for inverse problems in science and
  engineering}, pages 185--212. Springer, 2011.

\bibitem[Corenflos et~al.(2021)Corenflos, Thornton, Deligiannidis, and
  Doucet]{corenflos2021differentiable}
A.~Corenflos, J.~Thornton, G.~Deligiannidis, and A.~Doucet.
\newblock Differentiable particle filtering via entropy-regularized optimal
  transport.
\newblock In \emph{Proceedings of the 38th International Conference on Machine
  Learning (ICML 2021)}, page 2100–2111, Jul 2021.

\bibitem[Coskun et~al.(2017)Coskun, Achilles, DiPietro, Navab, and
  Tombari]{coskun2017long}
H.~Coskun, F.~Achilles, R.~DiPietro, N.~Navab, and F.~Tombari.
\newblock Long short-term memory kalman filters:recurrent neural estimators for
  pose regularization.
\newblock In \emph{Proceedings of International Conference on Computer Vision
  (ICCV 2017)}, 22-29 Oct. 2017.

\bibitem[Cox and Elvira(2022)]{cox2022parameter}
B.~Cox and V.~Elvira.
\newblock Parameter estimation in sparse linear-{G}aussian state-space models
  via reversible jump {Markov Chain Monte Carlo}.
\newblock In \emph{Proceedings of the 30th European Signal Processing
  Conference (EUSIPCO 2022)}, pages 797--801, Belgrade, Serbia, 29 Aug.-2 Sep.
  2022.

\bibitem[Cox and Elvira(2023)]{cox2023sparse}
B.~Cox and V.~Elvira.
\newblock Sparse bayesian estimation of parameters in linear-gaussian
  state-space models.
\newblock \emph{IEEE Transactions on Signal Processing}, 71:\penalty0
  1922--1937, 2023.

\bibitem[Crisan and Miguez(2018)]{crisan2018nested}
D.~Crisan and J.~Miguez.
\newblock Nested particle filters for online parameter estimation in
  discrete-time state-space {M}arkov models.
\newblock Technical report, 2018.
\newblock https://arxiv.org/abs/1308.1883.

\bibitem[Deledalle et~al.(2014)Deledalle, Vaiter, Fadili, and
  Peyr\'{e}]{Deledalle2014}
C.-A. Deledalle, S.~Vaiter, J.~Fadili, and G.~Peyr\'{e}.
\newblock Stein unbiased gradient estimator of the risk (sugar) for multiple
  parameter selection.
\newblock \emph{SIAM Journal on Imaging Sciences}, 7\penalty0 (4):\penalty0
  2448--2487, 2014.
\newblock \doi{10.1137/140968045}.

\bibitem[Dempster et~al.(1977)Dempster, Laird, and Rubin]{Dempster}
A.~P. Dempster, N.~M. Laird, and D.~B. Rubin.
\newblock Maximum likelihood from incomplete data via the em algorithm.
\newblock \emph{Journal of the Royal Statistical Society. Series B
  (Methodological)}, 39\penalty0 (1):\penalty0 1--38, 1977.

\bibitem[Djuric et~al.(2003)Djuric, Kotecha, Zhang, Huang, Ghirmai, Bugallo,
  and Miguez]{djuric2003particle}
P.~M. Djuric, J.~H. Kotecha, J.~Zhang, Y.~Huang, T.~Ghirmai, M.~F. Bugallo, and
  J.~Miguez.
\newblock Particle filtering.
\newblock \emph{IEEE Signal Processing Magazine}, 20\penalty0 (5):\penalty0
  19--38, 2003.

\bibitem[Doerr et~al.(2018)Doerr, Daniel, Schiegg, Duy, Schaal, Toussaint, and
  Sebastian]{pmlr-v80-doerr18a}
A.~Doerr, C.~Daniel, M.~Schiegg, N.-T. Duy, S.~Schaal, M.~Toussaint, and
  T.~Sebastian.
\newblock Probabilistic recurrent state-space models.
\newblock In J.~Dy and A.~Krause, editors, \emph{Proceedings of the 35th
  International Conference on Machine Learning (ICML 2018)}, volume~80 of
  \emph{Proceedings of Machine Learning Research}, pages 1280--1289, 10--15
  Jul. 2018.

\bibitem[Doucet et~al.(2009)Doucet, Johansen, et~al.]{doucet2009tutorial}
A.~Doucet, A.~M. Johansen, et~al.
\newblock A tutorial on particle filtering and smoothing: Fifteen years later.
\newblock \emph{Handbook of nonlinear filtering}, 12\penalty0
  (656-704):\penalty0 3, 2009.

\bibitem[Eichler(2012)]{Eichler2012}
M.~Eichler.
\newblock Graphical modelling of multivariate time series.
\newblock \emph{Probability Theory and Related Fields}, 153\penalty0
  (1):\penalty0 233--268, Jun. 2012.
\newblock ISSN 1432-2064.
\newblock \doi{10.1007/s00440-011-0345-8}.
\newblock URL \url{https://doi.org/10.1007/s00440-011-0345-8}.

\bibitem[Elvira and Chouzenoux(2022)]{elvira2022graphem}
V.~Elvira and E.~Chouzenoux.
\newblock Graphical inference in linear-{G}aussian state-space models.
\newblock \emph{IEEE Transactions on Signal Processing}, 70:\penalty0
  4757--4771, 2022.

\bibitem[Fattahi and Sojoudi(2019)]{JMLRFattahi}
S.~Fattahi and S.~Sojoudi.
\newblock Graphical lasso and thresholding: Equivalence and closed-form
  solutions.
\newblock \emph{Journal of Machine Learning Research}, 20\penalty0
  (10):\penalty0 1--44, 2019.

\bibitem[Franceschi et~al.(2017)Franceschi, Donini, Frasconi, and
  Pontil]{Franceshi2017}
L.~Franceschi, M.~Donini, P.~Frasconi, and M.~Pontil.
\newblock Forward and reverse gradient-based hyperparameter optimization.
\newblock In \emph{Proceedings of the 34th International Conference on Machine
  Learning (ICML 2017)}, Sydney, Australia, 6th-11st Aug. 2017.

\bibitem[Frenkel and Feder(1999)]{Frenkel}
L.~Frenkel and M.~Feder.
\newblock Recursive expectation-maximization ({EM}) algorithms for time-varying
  parameters with applications to multiple target tracking.
\newblock \emph{IEEE Transactions on Signal Processing}, 47\penalty0
  (2):\penalty0 306--320, 1999.

\bibitem[Friedman et~al.(2008)Friedman, Hastie, and Tibshirani]{Friedman07}
J.~Friedman, T.~Hastie, and R.~Tibshirani.
\newblock {Sparse inverse covariance estimation with the graphical LASSO}.
\newblock \emph{Biostatistics}, 9\penalty0 (3):\penalty0 432--441, jul 2008.

\bibitem[Gao et~al.(2015)Gao, Tronarp, and Sarkka]{Gao2015}
R.~Gao, F.~Tronarp, and S.~Sarkka.
\newblock Iterated extended {K}alman smoother-based variable splitting for
  l1-regularized state estimation.
\newblock \emph{IEEE Transactions on Signal Processing}, 67\penalty0
  (19):\penalty0 5078--5092, 2015.

\bibitem[Giannakis et~al.(2018)Giannakis, Shen, and Karanikolas]{Giannakis2018}
G.~Giannakis, Y.~Shen, and G.~Karanikolas.
\newblock Topology identification and learning over graphs: Accounting for
  nonlinearities and dynamics.
\newblock \emph{Proceedings of the IEEE}, 106\penalty0 (5):\penalty0 787--807,
  May 2018.

\bibitem[Granger(1969)]{granger1969investigating}
C.~W. Granger.
\newblock Investigating causal relations by econometric models and
  cross-spectral methods.
\newblock \emph{Econometrica: Journal of the Econometric Society}, pages
  424--438, 1969.

\bibitem[Green and Hastie(2009)]{green2009reversible}
P.~J. Green and D.~I. Hastie.
\newblock Reversible jump {MCMC}.
\newblock \emph{Genetics}, 155\penalty0 (3):\penalty0 1391--1403, 2009.

\bibitem[Gu and Dao(2023)]{gu2023mamba}
A.~Gu and T.~Dao.
\newblock Mamba: Linear-time sequence modeling with selective state spaces.
\newblock Technical report, 2023.
\newblock https://arxiv.org/abs/2312.00752.

\bibitem[Gu et~al.(2021)Gu, Johnson, Goel, Saab, Dao, Rudra, and
  R{\'e}]{gu2021combining}
A.~Gu, I.~Johnson, K.~Goel, K.~Saab, T.~Dao, A.~Rudra, and C.~R{\'e}.
\newblock Combining recurrent, convolutional, and continuous-time models with
  linear state space layers.
\newblock \emph{Advances in neural information processing systems},
  34:\penalty0 572--585, 2021.

\bibitem[Gupta and Mehra(1974)]{Gupta1974}
N.~Gupta and R.~Mehra.
\newblock Computational aspects of maximum likelihood estimation and reduction
  in sensitivity function calculations.
\newblock \emph{IEEE Transactions on Automatic Control}, 19\penalty0
  (6):\penalty0 774--783, 1974.

\bibitem[Hamilton(1994)]{hamilton1994state}
J.~D. Hamilton.
\newblock State-space models.
\newblock \emph{Handbook of Econometrics}, 4:\penalty0 3039--3080, 1994.

\bibitem[Hamilton(2020)]{hamilton2020time}
J.~D. Hamilton.
\newblock \emph{Time Series Analysis}.
\newblock Princeton university press, 2020.

\bibitem[Hien et~al.(2023)Hien, Phan, and Gillis]{phan2023inertial}
L.~Hien, D.~N. Phan, and N.~Gillis.
\newblock An inertial block majorization minimization framework for nonsmooth
  nonconvex optimization.
\newblock \emph{Journal of Machine Learning Research}, 24\penalty0
  (18):\penalty0 1--41, 2023.

\bibitem[Hien et~al.(2020)Hien, Gillis, and Patrinos]{HienICML2020}
T.~Hien, N.~Gillis, and P.~Patrinos.
\newblock Inertial block proximal method for non-convex nonsmooth optimization.
\newblock In \emph{Proceedings of the Thirty-seventh International Conference
  on Machine Learning (ICML 2020)}, 2020.

\bibitem[Hippert-Ferrer et~al.(2022)Hippert-Ferrer, Bouchard, Mian, Vayer, and
  Breloy]{Hippertgraphical}
A.~Hippert-Ferrer, F.~Bouchard, A.~Mian, T.~Vayer, and A.~Breloy.
\newblock Learning graphical factor models with {R}iemannian optimization.
\newblock Technical report, 2022.
\newblock https://arxiv.org/abs/2210.11950.

\bibitem[Hochreiter and Schmidhuber(1997)]{HochreiterLSTM}
S.~Hochreiter and J.~Schmidhuber.
\newblock {Long Short-Term Memory}.
\newblock \emph{Neural Computations}, 9\penalty0 (8):\penalty0 1735–1780,
  Nov. 1997.

\bibitem[Hong et~al.(2016)Hong, Razaviyayn, Luo, and Pang]{Hong2016}
M.~Hong, M.~Razaviyayn, Z.-Q. Luo, and J.~S. Pang.
\newblock A unified algorithmic framework for block-structured optimization
  involving big data: With applications in machine learning and signal
  processing.
\newblock \emph{IEEE Signal Processing Magazine}, 33\penalty0 (1):\penalty0
  57--77, Jan. 2016.

\bibitem[Hüsken and Stagge(2003)]{HUSKEN2003223}
M.~Hüsken and P.~Stagge.
\newblock Recurrent neural networks for time series classification.
\newblock \emph{Neurocomputing}, 50:\penalty0 223--235, 2003.

\bibitem[Hunter and Lange(2004)]{Hunter2004}
D.~R. Hunter and K.~Lange.
\newblock A tutorial on {MM} algorithms.
\newblock \emph{{A}mer. {S}tat.}, 58\penalty0 (1):\penalty0 30--37, Feb. 2004.

\bibitem[Ioannidis et~al.(2018)Ioannidis, Romero, and Giannakis]{Ioannidis2018}
V.~Ioannidis, D.~Romero, and G.~Giannakis.
\newblock Inference of spatio-temporal functions over graphs via multikernel
  kriged {K}alman filtering.
\newblock \emph{IEEE Transactions on Signal Processing}, 66\penalty0
  (12):\penalty0 3228--3239, 2018.

\bibitem[Ioannidis et~al.(2019)Ioannidis, Shen, and Giannakis]{Ioannidis2019}
V.~Ioannidis, Y.~Shen, and G.~Giannakis.
\newblock Semi-blind inference of topologies and dynamical processes over
  dynamic graphs.
\newblock \emph{IEEE Transactions on Signal Processing}, 67\penalty0
  (9):\penalty0 2263--2274, May 2019.

\bibitem[Jacobson and Fessler(2007)]{jacobson}
M.~Jacobson and J.~Fessler.
\newblock An expanded theoretical treatment of iteration-dependent
  majorize-minimize algorithms.
\newblock \emph{IEEE Transactions on Image Processing}, 16\penalty0
  (10):\penalty0 2411–2422, 2007.

\bibitem[Jin et~al.(2023)Jin, Koh, Wen, Zambon, Alippi, Webb, King, and
  Pan]{JinGNN}
M.~Jin, H.~Y. Koh, Q.~Wen, D.~Zambon, C.~Alippi, G.~I. Webb, I.~King, and
  S.~Pan.
\newblock A survey on graph neural networks for time series: Forecasting,
  classification, imputation, and anomaly detection.
\newblock Technical report, 2023.
\newblock https://arxiv.org/abs/2307.03759.

\bibitem[Kalman(1960)]{Kalman60}
R.~E. Kalman.
\newblock A new approach to linear filtering and prediction problems.
\newblock \emph{Journal of Basic Engineering}, 82:\penalty0 35--45, 1960.

\bibitem[Kantas et~al.(2015)Kantas, Doucet, Singh, Maciejowski, and
  Chopin]{kantas2015particle}
N.~Kantas, A.~Doucet, S.~S. Singh, J.~Maciejowski, and N.~Chopin.
\newblock On particle methods for parameter estimation in state-space models.
\newblock Technical report, 2015.
\newblock https://arxiv.org/abs/1412.8695.

\bibitem[Kim and Nelson(1999)]{kim1999state}
C.-J. Kim and C.~R. Nelson.
\newblock \emph{State-space models with regime switching: classical and
  {Gibbs}-sampling approaches with applications}.
\newblock MIT Press Books, 1st edition, 1999.

\bibitem[Komodakis and Pesquet(2015)]{komodakis2015playing}
N.~Komodakis and J.-C. Pesquet.
\newblock Playing with duality: {A}n overview of recent primal-dual approaches
  for solving large-scale optimization problems.
\newblock \emph{IEEE Signal Processing Magazine}, 32\penalty0 (6):\penalty0
  31--54, 2015.

\bibitem[Krzywda et~al.(2022)Krzywda, Lukasik, and Gandomi]{Krzywda2022gnn}
M.~Krzywda, S.~Lukasik, and A.~H. Gandomi.
\newblock Graph neural networks in computer vision -- architectures, datasets
  and common approaches.
\newblock Technical report, 2022.
\newblock https://arxiv.org/abs/2212.10207.

\bibitem[Kumar et~al.(2020)Kumar, Ying, de~Miranda~Cardoso, and
  Palomar]{Kumar2020}
S.~Kumar, J.~Ying, J.~V. de~Miranda~Cardoso, and D.~P. Palomar.
\newblock A unified framework for structured graph learning via spectral
  constraints.
\newblock \emph{Journal of Machine Learning Research}, 21\penalty0
  (22):\penalty0 1--60, 2020.

\bibitem[Lim(2021)]{JMLR:Lim}
S.~H. Lim.
\newblock Understanding recurrent neural networks using nonequilibrium response
  theory.
\newblock \emph{Journal of Machine Learning Research}, 22\penalty0
  (47):\penalty0 1--48, 2021.

\bibitem[Lindsten et~al.(2014)Lindsten, Jordan, and
  Schon]{lindsten2014particle}
F.~Lindsten, M.~I. Jordan, and T.~B. Schon.
\newblock Particle gibbs with ancestor sampling.
\newblock \emph{Journal of Machine Learning Research}, 15:\penalty0 2145--2184,
  2014.

\bibitem[Lojasiewicz(1963)]{Lojasiewicz}
S.~Lojasiewicz.
\newblock Unepropriété topologique des sous-ensembles analytiques réels.
\newblock \emph{Editions du Centre National de la Recherche Scientifique},
  pages 87--89, 1963.

\bibitem[Luengo et~al.(2019)Luengo, Rios-Munoz, Elvira, Sanchez, and
  Artes-Rodriguez]{Luengo19}
D.~Luengo, G.~Rios-Munoz, V.~Elvira, C.~Sanchez, and A.~Artes-Rodriguez.
\newblock Hierarchical algorithms for causality retrieval in atrial
  fibrillation intracavitary electrograms.
\newblock \emph{IEEE Journal of Biomedical and Health Informatics}, 12\penalty0
  (1):\penalty0 143--155, Jan. 2019.

\bibitem[Luo et~al.(2014)Luo, Song, and Witten]{Luo2014}
S.~Luo, R.~Song, and D.~Witten.
\newblock Sure screening for gaussian graphical models.
\newblock Technical report, 2014.
\newblock https://arxiv.org/abs/1407.7819.

\bibitem[Maathuis et~al.(2019)Maathuis, Drton, Lauritzen, and
  Wainwright]{Lauritzen}
M.~Maathuis, M.~Drton, S.~Lauritzen, and M.~Wainwright.
\newblock \emph{Handbook of Graphical Models}.
\newblock CRC Press, Boca Raton, FL, 2019.

\bibitem[Mazumder and Hastie(2012)]{JMLR:v13:mazumder12a}
R.~Mazumder and T.~Hastie.
\newblock Exact covariance thresholding into connected components for
  large-scale graphical lasso.
\newblock \emph{Journal of Machine Learning Research}, 13\penalty0
  (27):\penalty0 781--794, 2012.

\bibitem[Mei and Moura(2017)]{Mei2017}
J.~Mei and M.~Moura.
\newblock Signal processing on graphs: Causal modeling of unstructured data.
\newblock \emph{IEEE Transactions on Signal Processing}, 65\penalty0
  (8):\penalty0 2077--2092, Apr. 2017.

\bibitem[Meinshausen and Bühlmann(2006)]{Meinshausen2006}
N.~Meinshausen and P.~Bühlmann.
\newblock High-dimensional graphs and variable selection with the lasso.
\newblock \emph{The Annals of Statistics}, pages 1436--1462, 2006.

\bibitem[Mor\'e and Toraldo(1989)]{More}
J.~Mor\'e and G.~Toraldo.
\newblock Algorithms for bound constrained quadratic programming problems.
\newblock \emph{Numerical Mathematics}, 55:\penalty0 377--400, 1989.

\bibitem[Naesseth et~al.(2019)Naesseth, Lindsten, Sch{\"o}n,
  et~al.]{naesseth2019elements}
C.~A. Naesseth, F.~Lindsten, T.~B. Sch{\"o}n, et~al.
\newblock Elements of sequential monte carlo.
\newblock \emph{Foundations and Trends{\textregistered} in Machine Learning},
  12\penalty0 (3):\penalty0 307--392, 2019.

\bibitem[Nagakura(2021)]{Nagakura2021}
D.~Nagakura.
\newblock Computing exact score vectors for linear {G}aussian state space
  models.
\newblock \emph{Communications in Statistics - Simulation and Computation},
  50\penalty0 (8):\penalty0 2313--2326, 2021.

\bibitem[Newman et~al.(2023)Newman, King, Elvira, de~Valpine, McCrea, and
  Morgan]{newman2023state}
K.~Newman, R.~King, V.~Elvira, P.~de~Valpine, R.~S. McCrea, and B.~J. Morgan.
\newblock State-space models for ecological time-series data: Practical
  model-fitting.
\newblock \emph{Methods in Ecology and Evolution}, 14\penalty0 (1):\penalty0
  26--42, 2023.

\bibitem[Olsson et~al.(2007)Olsson, Petersen, and Lehn-Schioler]{Olsson}
R.~Olsson, K.~Petersen, and T.~Lehn-Schioler.
\newblock State-space models: from the {EM} algorithm to a gradient approach.
\newblock \emph{Neural Computation}, 19\penalty0 (4):\penalty0 1097–1111,
  2007.

\bibitem[P{\'e}rez-Vieites and Elvira(2023)]{perez2023adaptive}
S.~P{\'e}rez-Vieites and V.~Elvira.
\newblock Adaptive {G}aussian nested filter for parameter estimation and state
  tracking in dynamical systems.
\newblock In \emph{Proceedings of the IEEE International Conference on
  Acoustics, Speech and Signal Processing (ICASSP 2023)}, pages 1--5, 2023.

\bibitem[P{\'e}rez-Vieites and M{\'\i}guez(2021)]{perez2021nested}
S.~P{\'e}rez-Vieites and J.~M{\'\i}guez.
\newblock Nested gaussian filters for recursive bayesian inference and
  nonlinear tracking in state space models.
\newblock \emph{Signal Processing}, 189:\penalty0 108295, 2021.

\bibitem[Petersen and Pedersen(2012)]{Matrix}
K.~Petersen and M.~Pedersen.
\newblock The matrix cookbook.
\newblock Technical report, 2012.
\newblock http://matrixcookbook.com.

\bibitem[Pircalabelu and Claeskens(2020)]{JMLR:Pircalabelu}
E.~Pircalabelu and G.~Claeskens.
\newblock Community-based group graphical lasso.
\newblock \emph{Journal of Machine Learning Research}, 21\penalty0
  (64):\penalty0 1--32, 2020.

\bibitem[Pouliquen et~al.(2023)Pouliquen, Goncalves, Massias, and
  Vayer]{Pouliquen2023}
C.~Pouliquen, P.~Goncalves, M.~Massias, and T.~Vayer.
\newblock Implicit differentiation for hyperparameter tuning the weighted
  {Graphical Lasso}.
\newblock In \emph{Proceedings of the XXIXème Colloque Francophone De
  Traitement Du Signal Et Des Images (GRETSI 2023)}, Grenoble, France, 28
  Aug.-1st Sep. 2023.

\bibitem[Rangapuram et~al.(2018)Rangapuram, Seeger, Gasthaus, Stella, Wang, and
  Januschowski]{NEURIPS2018_Rangapuram}
S.~S. Rangapuram, M.~W. Seeger, J.~Gasthaus, L.~Stella, Y.~Wang, and
  T.~Januschowski.
\newblock Deep state space models for time series forecasting.
\newblock In S.~Bengio, H.~Wallach, H.~Larochelle, K.~Grauman, N.~Cesa-Bianchi,
  and R.~Garnett, editors, \emph{Advances in Neural Information Processing
  Systems (NEURIPS 2018)}, volume~31. Curran Associates, Inc., 2018.

\bibitem[Revach et~al.(2022)Revach, Shlezinger, Ni, Escoriza, van Sloun, and
  Eldar]{Revach2022}
G.~Revach, N.~Shlezinger, X.~Ni, A.~L. Escoriza, R.~J.~G. van Sloun, and Y.~C.
  Eldar.
\newblock Kalmannet: Neural network aided {K}alman filtering for partially
  known dynamics.
\newblock \emph{IEEE Transactions on Signal Processing}, 70:\penalty0
  1532--1547, 2022.

\bibitem[Runge et~al.(2020)Runge, Tibau, Bruhns, Muñoz-Marí, and
  Camps-Valls]{Runge2020}
J.~Runge, X.-A. Tibau, M.~Bruhns, J.~Muñoz-Marí, and G.~Camps-Valls.
\newblock The causality for climate competition.
\newblock In \emph{Proceedings of the NeurIPS 2019 Competition and
  Demonstration Track}, volume 123, pages 110--120, 2020.

\bibitem[Sagi et~al.(2023)Sagi, Shlezinger, and Routtenberg]{Sagi2023}
G.~Sagi, N.~Shlezinger, and T.~Routtenberg.
\newblock Extended {K}alman filter for graph signals in nonlinear dynamic
  systems.
\newblock In \emph{Proceedings of the International Conference on Acoustics,
  Speech, and Signal Processing (ICASSP 2023)}, Rhodes Island, Greece, 4-10
  June 2023.

\bibitem[Sarkka(2013)]{Sarkka}
S.~Sarkka.
\newblock \emph{Bayesian Filtering and Smoothing}.
\newblock 3rd edition, 2013.

\bibitem[Schmidt et~al.(2023)Schmidt, Hennig, Nick, and
  Tronarp]{schmidt2023rank}
J.~Schmidt, P.~Hennig, J.~Nick, and F.~Tronarp.
\newblock The rank-reduced kalman filter: Approximate dynamical-low-rank
  filtering in high dimensions.
\newblock \emph{Advances in Neural Information Processing Systems},
  36:\penalty0 61364--61376, 2023.

\bibitem[Segal and Weinstein(1988)]{Segal88}
M.~Segal and E.~Weinstein.
\newblock A new method for evaluating the log-likelihood gradient (score) of
  linear dynamic systems.
\newblock \emph{IEEE Transactions on Automatic Control}, 33\penalty0
  (8):\penalty0 763--766, 1988.
\newblock \doi{10.1109/9.1295}.

\bibitem[Segal and Weinstein(1989)]{Segal89}
M.~Segal and E.~Weinstein.
\newblock A new method for evaluating the log-likelihood gradient, the hessian,
  and the fisher information matrix for linear dynamic systems.
\newblock \emph{IEEE Transactions on Information Theory}, 35\penalty0
  (3):\penalty0 682--687, 1989.
\newblock \doi{10.1109/18.30995}.

\bibitem[Seo et~al.(2018)Seo, Defferrard, Vandergheynst, and Bresson]{SeoGCRN}
Y.~Seo, M.~Defferrard, P.~Vandergheynst, and X.~Bresson.
\newblock Structured sequence modeling with graph convolutional recurrent
  networks.
\newblock In L.~Cheng, A.~C.~S. Leung, and S.~Ozawa, editors, \emph{Neural
  Information Processing}, pages 362--373, Cham, 2018. Springer International
  Publishing.

\bibitem[Sharma et~al.(2020)Sharma, Majumdar, Elvira, and Chouzenoux]{Sharma}
S.~Sharma, A.~Majumdar, V.~Elvira, and E.~Chouzenoux.
\newblock Blind {K}alman filtering for short-term load forecasting.
\newblock \emph{IEEE Transactions on Power Systems}, 35\penalty0 (6):\penalty0
  4916--4919, Nov. 2020.

\bibitem[Sharma et~al.(2021)Sharma, Elvira, Chouzenoux, and
  Majumdar]{SharmaRDL}
S.~Sharma, V.~Elvira, E.~Chouzenoux, and A.~Majumdar.
\newblock Recurrent dictionary learning for state-space models with an
  application in stock forecasting.
\newblock \emph{Neurocomputing}, 450:\penalty0 1--13, Aug. 2021.

\bibitem[Shih et~al.(2019)Shih, Sun, and Lee]{ShihGRU}
S.~Shih, F.~Sun, and H.~Lee.
\newblock Temporal pattern attention for multivariate time series forecasting.
\newblock \emph{Machine Learning}, 108:\penalty0 1421–1441, 2019.

\bibitem[Shojaie and Michailidis(2010)]{shojaie2010discovering}
A.~Shojaie and G.~Michailidis.
\newblock Discovering graphical granger causality using the truncating lasso
  penalty.
\newblock \emph{Bioinformatics}, 26\penalty0 (18):\penalty0 i517--i523, 2010.

\bibitem[Shrivastava et~al.(2020)Shrivastava, Chen, Chen, Lan, Aluru, Liu, and
  Song]{Shrivastava2020}
H.~Shrivastava, X.~Chen, B.~Chen, G.~Lan, S.~Aluru, H.~Liu, and L.~Song.
\newblock {GLAD:} learning sparse graph recovery.
\newblock In \emph{Proceedings of the 8th International Conference on Learning
  Representations (ICLR 2020)}, Addis Ababa, Ethiopia (virtual), 26th Apr.-1st
  May 2020.

\bibitem[Shrivastava et~al.(2022)Shrivastava, Chajewska, Abraham, and
  Chen]{Shrivastava2022}
H.~Shrivastava, U.~Chajewska, R.~Abraham, and X.~Chen.
\newblock uglad: Sparse graph recovery by optimizing deep unrolled networks.
\newblock Technical report, 2022.
\newblock https://arxiv.org/abs/2205.11610.

\bibitem[Shumway and Stoffer(1982)]{shumway1982approach}
R.~H. Shumway and D.~S. Stoffer.
\newblock An approach to time series smoothing and forecasting using the {EM}
  algorithm.
\newblock \emph{Journal of Time Series Analysis}, 3\penalty0 (4):\penalty0
  253--264, 1982.

\bibitem[Songsiri and Vandenberghe(2010)]{songsiri2010topology}
J.~Songsiri and L.~Vandenberghe.
\newblock Topology selection in graphical models of autoregressive processes.
\newblock \emph{The Journal of Machine Learning Research}, 11:\penalty0
  2671--2705, 2010.

\bibitem[Sun et~al.(2016)Sun, Babu, and Palomar]{Sun2016}
Y.~Sun, P.~Babu, and D.~P. Palomar.
\newblock Majorization-minimization algorithms in signal processing,
  communications, and machine learning.
\newblock \emph{IEEE Transactions on Signal Processing}, 65\penalty0
  (3):\penalty0 794 -- 816, 2016.

\bibitem[Tibshirani(1996)]{Tibshirani}
R.~Tibshirani.
\newblock Regression shrinkage and selection via the lasso.
\newblock \emph{Journal of the Royal Statistical Society. Series B
  (Methodological)}, 58\penalty0 (1):\penalty0 267--288, 1996.

\bibitem[Uhler(2017)]{Uhler2017GaussianGM}
C.~Uhler.
\newblock Gaussian graphical models: An algebraic and geometric perspective.
\newblock Technical report, 2017.
\newblock https://arxiv.org/abs/1707.04345.

\bibitem[Witte et~al.(2020)Witte, Henckel, Maathuis, and Didelez]{JMLR:Witte}
J.~Witte, L.~Henckel, M.~H. Maathuis, and V.~Didelez.
\newblock On efficient adjustment in causal graphs.
\newblock \emph{Journal of Machine Learning Research}, 21\penalty0
  (246):\penalty0 1--45, 2020.

\bibitem[Wu(1983)]{Wu}
C.~F.~J. Wu.
\newblock On the convergence properties of the {EM} algorithm.
\newblock \emph{The Annals of Statistics}, 11\penalty0 (1):\penalty0 95--103,
  1983.

\bibitem[Ying et~al.(2020)Ying, Cardoso, and Palomar]{Ying2020b}
J.~Ying, J.~V. d.~M. Cardoso, and D.~P. Palomar.
\newblock Does the $\ell_1$-norm learn a sparse graph under {L}aplacian
  constrained graphical models?
\newblock Technical report, 2020.
\newblock {https://arxiv.org/abs/2006.14925}.

\bibitem[Zhang(2008)]{JMLR:v9:zhang08a}
J.~Zhang.
\newblock Causal reasoning with ancestral graphs.
\newblock \emph{Journal of Machine Learning Research}, 9\penalty0
  (47):\penalty0 1437--1474, 2008.

\bibitem[Zheng et~al.(2023)Zheng, Chouzenoux, and Duval]{Zheng2023}
P.~Zheng, E.~Chouzenoux, and L.~Duval.
\newblock {PENDANTSS:} penalized norm-ratios disentangling additive noise,
  trend and sparse spikes.
\newblock \emph{IEEE Signal Processing Letters}, 30:\penalty0 215--219, 2023.

\end{thebibliography}

\end{document}